\Crefname{algocf}{Algorithm}{Algorithms}
\crefname{algocfline}{line}{lines}
\Crefname{invariant}{Invariant}{Invariants}
\Crefname{claim}{Claim}{Claims}
\Crefname{subclaim}{Subclaim}{Subclaims}
\definecolor{DarkGray}{rgb}{0.66, 0.66, 0.66}
\definecolor{DarkPowderBlue}{rgb}{0.0, 0.2, 0.6}
\definecolor{fluorescentyellow}{rgb}{0.8, 1.0, 0.0}
\newcounter{note}[section]
\newcommand{\initOneLiners}{%
    \setlength{\itemsep}{0pt}
    \setlength{\parsep }{0pt}
    \setlength{\topsep }{0pt}
}
  \def\\{}%
  \def\texttt#1{<#1>}%
  \def\textsf#1{<#1>}%
  \def\mathsf#1{<#1>}%
  \def\ensuremath#1{#1}%
  \def\Cref#1{<Label:#1>}%
  \def\eqref#1{<Eq.:#1>}%
\newtheorem{theorem}{Theorem}[section]
\newtheorem{lemma}[theorem]{Lemma}
\newtheorem{corollary}[theorem]{Corollary}
\theoremstyle{definition}
\newtheorem{definition}[theorem]{Definition}
\theoremstyle{remark}
\renewcommand{\theinvariant}{(I\@arabic\c@invariant)}
\DeclareMathOperator*{\argmin}{\arg\!\min}
\DeclareMathOperator{\sign}{sign}
\newcommand{\junk}[1]{}
\newcommand{\eat}[1]{}
\newif\ifhideproofs
\begin{document}

\title{Gradual Domain Adaptation via Manifold-Constrained Distributionally Robust Optimization}

\author{
Amir~Hossein~Saberi\thanks{E-mails: \texttt{saberi.sah@ee.sharif.edu},~\texttt{khalaj@sharif.edu}} 
\and 
Amir~Najafi\thanks{E-mails: \texttt{amir.najafi@sharif.edu},~\texttt{motahari@sharif.edu}}
\and
Ala~Emrani\footnotemark[2]~\thanks{Equal contribution}
\and
Amin~Behjati\footnotemark[2]~\footnotemark[3]
\and
Yasaman~Zolfimoselo\footnotemark[2]
\and 
Mahdi~Shadrooy\footnotemark[2]
\and
Abolfazl~Motahari\footnotemark[2]
\and
\\[-3mm]
Babak~H.~Khalaj\footnotemark[1]
}

\date{
\footnotemark[1]~~Department of Electrical Engineering,\\ 
\footnotemark[2]~~Department of Computer Engineering,\\
Sharif University of Technology, Tehran, Iran
}

\maketitle

\begin{abstract}
\vspace*{1mm}
The aim of this paper is to address the challenge of gradual domain adaptation within a class of manifold-constrained data distributions. In particular, we consider a sequence of $T\ge2$ data distributions $P_1,\ldots,P_T$ undergoing a gradual shift, where each pair of consecutive measures $P_i,P_{i+1}$ are close to each other in Wasserstein distance. We have a supervised dataset of size $n$ sampled from $P_0$, while for the subsequent distributions in the sequence, only unlabeled i.i.d. samples are available. Moreover, we assume that all distributions exhibit a known favorable attribute, such as (but not limited to) having intra-class soft/hard margins. In this context, we propose a methodology rooted in Distributionally Robust Optimization (DRO) with an adaptive Wasserstein radius. We theoretically show that this method guarantees the classification error across all $P_i$s can be suitably bounded. Our bounds rely on a newly introduced {\it {compatibility}} measure, which fully characterizes the error propagation dynamics along the sequence. Specifically, for inadequately constrained distributions, the error can exponentially escalate as we progress through the gradual shifts. Conversely, for appropriately constrained distributions, the error can be demonstrated to be linear or even entirely eradicated. We have substantiated our theoretical findings through several experimental results.
\end{abstract}


\section{Introduction}

Gradual domain adaptation addresses a critical challenge in machine learning: the high cost and impracticality of continually preparing labeled datasets for training ML models. Once an initial labeled dataset is obtained through costly labor, machine learning models can use it to automatically label future unlabeled datasets—a procedure called self-training. However, as these future datasets experience gradual domain shifts from the original one, the initial dataset may become less effective, necessitating renewed human effort. Gradual domain adaptation has been proposed to mitigate this issue by learning a model on the initial dataset and then gradually adapting it to the future unlabeled data in a sequential manner. Formally, we consider a sequence of datasets modeled via empirical measures $\widehat{P}_0, \ldots, \widehat{P}_T$, where $\widehat{P}_0$ represents the initial labeled dataset and the remaining $\widehat{P}_i$ are unlabeled. Here, $T$ denotes the length of the sequence, and each $\widehat{P}_i$ is an empirical estimate of an unknown distribution $P_i$ based on $n_i$ i.i.d. samples. We assume that consecutive measures $P_{i}$ and $P_{i+1}$ are within a bounded Wasserstein distance from each other to make the problem theoretically approachable.

Recent research in this field has proposed various methods, each with distinct advantages and disadvantages. Theoretical advancements aim to bound the generalization error, provide robustness certificates as the model adapts to successive datasets, and, importantly, quantify the error propagation dynamics along the sequence. Naive approaches often lead to exponentially increasing errors with respect to $T$ for the model performance on the most recent dataset. For some problem families, this exponential increase is conjectured to be inevitable. However, for appropriately restricted problem sets, such as linear classifiers and distributions with hard/soft margins, novel methodologies can control error propagation \citep{kumar2020understanding}. To date, no work has fully established cases where error propagation remains fixed or increases sublinearly, and a comprehensive theoretical characterization of problems in this context is still lacking.

We aim to address these challenges with a novel approach leveraging distributionally robust optimization (DRO) for gradual domain adaptation. Our core idea is based on the limited knowledge that the unknown labeled version of distribution $P_{i+1}$, or empirically, the unlabeled measure $\widehat{P}_i$ together with its latent labels, is within a bounded proximity of $P_i$ in a distributional sense. Using DRO on $P_i$ (or its empirical version $\widehat{P}_i$) with a carefully chosen and adaptive adversarial radius, we provide theoretical guarantees on $P_{i+1}$. Furthermore, when distributions exhibit favorable properties—such as lying on a manifold of margin-based measures—we demonstrate that certified bounds on generalization across domains can be established. In order to do so, we introduce a new complexity measure, the "compatibility function," which depends on the classifier hypothesis set $\Theta$, the properties of the manifold for $P_i$s, and the Wasserstein distance between consecutive distributions $P_i$ and $P_{i+1}$. This measure effectively bounds error propagation and identifies scenarios where errors remain bounded. Our analysis also extends to non-asymptotic cases where only empirical estimates of the distributions are available, showing that error terms decrease with $\left[\min_i n_i\right]^{-1/2}$.

We apply our method theoretically to two examples: (i) a toy example involving linear classifiers and Gaussian mixture model data with two components, which has been central in previous studies on DRO and gradual domain adaptation, and (ii) a more general class of distributions (referred to as "expandable" distributions) with learnable classifiers. In the former case, we demonstrate that accounting for Gaussian structural information eliminates error propagation in the statistical sense in the asymptotic regime. Additionally, in the non-asymptotic scenario, having $ n \geq dT\log T $ samples per dataset leads to the same result. Conversely, neglecting manifold information results in exponentially growing error, as anticipated. For expandable distributions and learnable classifiers, we provide theoretical bounds on sample complexity and error propagation dynamics based on newer notions of adversarial robustness. Once again, we identify a rather general scenario where DRO completely eliminates error propagation. We further validate our theoretical findings through a series of experiments.

The rest of the paper is organized as follows: Section \ref{sec:relatedWorks} reviews related work. Our methodology is discussed in Section \ref{sec:asymptotic_analysis}, where we present our main theorems. Section \ref{sec:toyExample} details our results for the Gaussian setting, while a broader class of problems, termed expandable distributions and smooth classifier families, are analyzed in Section \ref{Sec:ExpandableDistributionAssyptoticGuarantee}, including their non-asymptotic analysis in subsection \ref{sec:nonAsymptoticAnalysis}. In section \ref{sec:Experiments}, we will be discussing our experimental results. We conclude in Section \ref{sec:Conclusions}.


\subsection{Previous Works}
\label{sec:relatedWorks}

Classic unsupervised domain adaptation aims to align feature distributions between a labeled source domain and an unlabeled target domain. Generating intermediate domains can facilitate smoother adaptation, transforming the process into gradual domain adaptation. However, these intermediate domains are often unavailable. Sagawa et al. \cite{sagawa2022gradual} address this by using normalizing flows to learn transformations from the target domain to a Gaussian mixture distribution through the source domain. Zhuang et al. \cite{zhuang2023gradual} propose Gradient Flow (GGF) to generate intermediate domains, leveraging the Wasserstein gradient flow to transition from the source to the target domain, minimizing a composite energy function.

Kumar et al. \cite{kumar2020understanding} propose a gradual self-training algorithm, adapting the initial classifier using pseudolabels from intermediate domains. They show the importance of leveraging the gradual shift structure, regularization, and label sharpening, providing a generalization bound for target domain error. This bound is given by $e^{\mathcal{O}(T)}\left(\epsilon_0 + \mathcal{O}\left(\sqrt{n^{-1}\log T}\right)\right)$, where $\epsilon_0$ is source domain error, and $n$ is each domain's data size. Wang et al. \cite{wang2022understanding} improve this approach, achieving a significantly better generalization bound $\epsilon_0 + \tilde{\mathcal{O}}\left(T\Delta + Tn^{-1/2} + (nT)^{-1/2}\right)$, where $\Delta$ is the average distance of consecutive domain distributions, and propose an optimal strategy for constructing intermediate domain paths. He et al. \cite{he2023gradual} suggest placing intermediate domains uniformly along the Wasserstein distance between the source and target domains to minimize generalization error. The GOAT framework, based on this insight, uses optimal transport to generate intermediate domains and applies gradual self-training. Similarly, Abnar et al. \cite{abnar2021gradual} introduce GIFT, which creates virtual samples from intermediate distributions by interpolating representations of examples from source and target domains. Zhang et al. \cite{zhang2021gradual} propose the AuxSelfTrain framework, generating a combination of source and target data in different proportions, gradually incorporating more target data, and employing a self-training procedure.

Unsupervised domain adaptation can be viewed as a Generalized Target Shift problem. Xiao et al. \cite{xiao2023energy} introduce a discriminative energy-based method for test sample adaptation in domain generalization, modeling the joint distribution of input features and labels on source domains. Kirchmeyer et al. \cite{kirchmeyer2021mapping} propose the OSTAR method, using optimal transport to align pre-trained representations without enforcing domain invariance, reweighting source samples, and training a classifier on the target domain. Generative Adversarial Networks (GANs) \cite{goodfellow2014generative} inspire domain adaptation methods that use a feature extractor and a classifier to generate class responses, processed by a discriminator to distinguish between source and target domains. Cui et al. \cite{cui2020gradually} introduce the Gradually Vanishing Bridge (GVB) framework to reduce domain-specific characteristics and balance adversarial training, enhancing domain-invariant representations.


\subsection{Notations and Definition}

Consider $\mathcal{X}$ as a measurable space for features and let $\mathcal{Y}=\left\{-1,+1\right\}$ represent the set of possible labels in a binary classification scenario. In this regard, $\mathcal{Z}=\mathcal{X}\times\mathcal{Y}$ encompasses the entire space of feature-label pairs. We use $\mathcal{M}\left(\mathcal{Z}\right)$ to denote the set of all probability measures supported on $\mathcal{Z}$. For any $p\ge 1$, let $\left\Vert\cdot\right\Vert_p$ denote the $\ell_p$-norm. Additionally, for a probability measure $P\in\mathcal{M}\left(\mathcal{Z}\right)$, the notation $P_{\mathcal{X}}$ refers to the marginal distribution of $P$ on $\mathcal{X}$. Let $g:\mathbb{R}\rightarrow\mathbb{R}$ be a given function, and consider a natural number $n\in\mathbb{N}$. We define the composition of $g$ repeated $n$ times as follows:
\begin{equation}
g^{\bigcirc n}(\cdot)=\left(g\circ g\circ \ldots \circ g\right)(\cdot),\quad \mathrm{(}n~\mathrm{times)}.
\end{equation}
In order to assess the distance between any two measures $P,Q\in\mathcal{M}\left(\mathcal{Z}\right)$, we use the {\it {Wasserstein}} metric. For $P,Q\in\mathcal{M}\left(\mathcal{Z}\right)$, $\lambda\ge0$ and $p,q\ge1$, the $\lambda$-weighted $\ell^q_p$-Wasserstein distance between $P$ and $Q$ is defined as
\begin{align}
\mathcal{W}_{p,\lambda}^q\left(P,Q\right)\triangleq
\inf_{\mu\in\mathcal{C}\left(P,Q\right)}
\mathbb{E}\left(
\left\Vert
\boldsymbol{X}-\boldsymbol{X}'
\right\Vert_p^q
+\lambda
\mathbbm{1}\left\{
y\neq y'
\right\}
\right),
\label{eq:def:wasserstein}
\end{align}
where $\mathcal{C}\left(P,Q\right)$ denotes the set of all couplings $\mu\in\mathcal{\mathcal{Z}\times\mathcal{Z}}$, ensuring that $\mu\left(\cdot,\mathcal{Z}\right)=P$ and $\mu\left(\mathcal{Z},\cdot\right)=Q$. Also, let $\ell:\mathcal{Y}\times\mathcal{Y}\rightarrow\mathbb{R}_{\ge0}$ be a legitimate loss function, where for most of the paper we simply consider it to be $0-1$ loss for simplicity in the results.


\section{The Proposed Method: Gradual Domain Adaptation via Manifold-Constrained DRO}
\label{sec:asymptotic_analysis}

Let's consider the distribution set $\mathcal{G} \subseteq \mathcal{M}(\mathcal{Z})$ to denote a class, or manifold, of distributions characterized by favorable properties, such as, but not restricted to, having soft or hard margins between class-conditional measures. Throughout this paper, we presume that all measures $P_i$ belong to such a class with a known property. Failing to acknowledge this assumption could render the error propagation dynamics uncontrollable (see Theorem \ref{thm:importanceOfConstraint}). Before proceeding further, let us introduce the following definitions:
\begin{definition}[Restricted Wasserstein Ball]
\label{Def:RestrictedWassersteinBall} 
Assume fixed parameters $p,q\ge 1$ and $\lambda>0$. For $\eta\ge0$, $\mathcal{G}\subseteq\mathcal{M}\left(\mathcal{X}\times\mathcal{Y}\right)$ and $P_0\in\mathcal{G}$, let us define
\begin{equation}
\mathcal{B}_{\eta}\left(P_0\vert\mathcal{G}\right)\triangleq
\left\{
P\in\mathcal{G}\big\vert~
\mathcal{W}^q_{p,\lambda}\left(P,P_0\right)\leq\eta
\right\}
\end{equation}
as a $\mathcal{G}$-restricted Wasserstein ball of radius $\eta$.
\end{definition}
Building upon the above definition, we introduce our method formally outlined in Algorithm \ref{alg:iterativeAlg}. The essence of our approach lies in conducting DRO on a pseudo-labeled version of $P_i$ (or its empirical estimate $\widehat{P}_i$), followed by leveraging the model to assign pseudo-labels to the subsequent unlabeled distribution. However, two crucial considerations emerge: i) the adaptive adjustment of the Wasserstein radius (also known as the adversarial power of DRO) based on the robust loss incurred in the preceding stage, and ii) post pseudo-labeling, distributions are implicitly constrained to the manifold $\mathcal{G}$. This latter aspect serves as the primary mechanism for controlling error propagation within appropriately restricted scenarios.

Before delving into the theoretical guarantees, let us introduce our new complexity measure, which quantifies the relationship between a family of binary classifiers $\mathcal{H} = \{h_{\theta} | \theta \in \Theta\}$ and the distribution family $\mathcal{G}$. The compatibility function, essentially a bound on the manifold-constrained adversarial loss of $\mathcal{H}$ on $\mathcal{G}$, plays a pivotal role in error propagation, as elucidated in Theorem \ref{thm:amirAsympBound}.
\begin{definition}[Compatibility between $\mathcal{G}$ and $\mathcal{H}$]
\label{def:compatibility}
Consider the classifier set $\mathcal{H}\triangleq\left\{h_{\theta}\vert~\theta\in\Theta\right\}$, distribution manifold $\mathcal{G}\subseteq\mathcal{M}\left(\mathcal{Z}\right)$, and Wasserstein metric $\mathcal{W}^q_{p,\lambda}\left(\cdot,\cdot\right)$ for $p,q\ge 1$ and $\lambda\ge0$. We say $\mathcal{H}$ and $\mathcal{G}$ are \emph{compatible} according to a function $g_{\lambda}(\cdot):\mathbb{R}_{\ge0}\rightarrow\mathbb{R}_{\ge 0}$, if for $\eta>0$ and $\forall P_0\in\mathcal{G}$ the following bound holds:
\begin{align}
\label{eq:compatibility}
g_{\lambda}\left(\eta\right)\ge
\inf_{\theta\in\Theta}
\sup_{P\in\mathcal{B}_{\eta}\left(P_0\vert\mathcal{G}\right)}
\mathbb{E}_p\left[
\ell\left(y,h_{\theta}\left(\boldsymbol{X}\right)\right)
\right].
\end{align}
\end{definition}
As can be seen, $g_{\lambda}(0)$ represents an upper bound on the minimum achievable {\it {non-robust}} error rate across all measures within $\mathcal{G}$. Mathematically, this is expressed as:
\begin{align}
g_{\lambda}(0)\ge \sup_{P\in\mathcal{G}}
\inf_{\theta\in\Theta}
\mathbb{E}_{P}\left[\ell\left(y,h_{\theta}\left(\boldsymbol{X}\right)\right)\right].
\label{eq:def:compatibility}
\end{align}
We declare that $\mathcal{H}$ and $\mathcal{G}$ are \emph{perfectly~compatible} if the lower bound on the r.h.s. of \eqref{eq:def:compatibility}, and consequently $g_{\lambda}(0)$, is zero. This means for any $P\in\mathcal{G}$, at least a classifier in $\mathcal{H}$ can perform a perfect non-robust classification.

We believe the concept of "compatibility" as defined above is natural and can uniquely characterize the applicability of GDA to a problem set. For example, assume all measures in $\mathcal{G}$ exhibit some level of "cluster assumption" or have hard margins, and that $\mathcal{H}$ is rich enough to robustly classify all $P \in \mathcal{G}$ (with some margin). Then, there exists $\delta_0 > 0$ such that $g_{\lambda}(\delta) = 0$ for all $\delta \leq \delta_0$. We will soon see that such a property can perfectly eliminate error propagation, as long as consecutive unlabeled measures are chosen close enough as a function of $\delta_0$. More generally, the following theorem provides a general bound on the propagation of generalization error as a function of the compatibility measure in the asymptotic case where $\min_i~n_i \rightarrow \infty$.


\begin{algorithm}[t]
\SetKwInOut{KwInput}{Input}
\SetKwInOut{KwParam}{Params}
\caption{DRO-based Domain Adaptation ($\mathsf{DRODA}$)}
\KwParam{$\Theta$, $\mathcal{G}$, $p,q,\lambda$, and $\eta$}
\KwInput{$P_0,\left\{P_{i_{\mathcal{X}}}\right\}_{1:T}$}
\Init{}{
\vspace*{-4mm}
\begin{flalign}
\varepsilon_0 &\longleftarrow \eta , \quad \widehat{P}_0 \longleftarrow P_0 &&
\nonumber\\
\Delta^*_0,~\theta^*_{0}
&\longleftarrow
\Big\{
\min_{\theta\in\Theta},
\argmin_{\theta\in\Theta}
\Big\}
\sup\limits_{P\in\mathcal{B}_{\varepsilon_0}\left(\widehat{P}_0\vert\mathcal{G}\right)}
\mathbb{E}_P\left[
\ell\left(y,h_{\theta}\left(\boldsymbol{X}\right)\right)
\right]. &&
\nonumber
\end{flalign}
\vspace*{-2mm}}
\For{$i=1,\ldots,T-1$}{
    \vspace*{-3mm}
    \begin{flalign}
    \widehat{P}_i
    &\longleftarrow
    P_{i_{\mathcal{X}}}\left(\boldsymbol{X}\right)
    \mathbbm{1}
    \left(y = 
    h_{\theta^*_{i-1}}
    \left(\boldsymbol{X}\right) \right),\quad \forall \left(\boldsymbol{X},y\right)\in\mathcal{Z} &&
    \nonumber\\
    \varepsilon_i &\longleftarrow 
    \lambda \Delta^*_{i-1}+\eta
    &&
    \nonumber\\
    \Delta^*_i,\theta^*_{i}
    &\longleftarrow
    \Big\{
    \min_{\theta\in\Theta},
    \argmin\limits_{\theta\in\Theta}
    \Big\}
    \sup\limits_{P\in\mathcal{B}_{\varepsilon_i}
    \left(\widehat{P}_i\vert\mathcal{G}\right)}
    \mathbb{E}_P
    \left[
    \ell\left(y,h_{\theta}
    \left(\boldsymbol{X}\right)\right)
    \right]
    &&
    \nonumber
    \end{flalign}
    \vspace*{-2mm}
}
\KwResult{$\theta^*\longleftarrow \theta^*_{T-1}$}
\label{alg:iterativeAlg}
\end{algorithm}

\begin{theorem}
\label{thm:amirAsympBound}
For $\lambda>0$ and $p,q\ge 1$,
assume classifier set $\mathcal{H}\triangleq\left\{h_{\theta}\vert~\theta\in\Theta\right\}$ and distribution family $\mathcal{G}\subseteq\mathcal{M}\left(\mathcal{Z}\right)$ are compatible according to the Wasserstein metric $\mathcal{W}^q_{p,\lambda}(\cdot,\cdot)$ and a positive function $g_{\lambda}(\cdot)$. Additionally, for $T\ge1$ assume a finite sequence of distributions $P_0,P_1,\ldots,P_{T}$ in $\mathcal{G}$, where
$\mathcal{W}^q_{p,\lambda}\left(P_i,P_{i+1}\right)\leq \eta$
for
$i=0,\ldots,T-1$ and a given $\eta\ge0$. The initial measure $P_0$ is assumed to be known, however for $i\ge1$, we only have access to the marginals $P_{i_{\mathcal{X}}}$. Then, Algorithm \ref{alg:iterativeAlg} $\mathsf{(DRODA)}$ with parameters $\mathcal{H}$, $\mathcal{G}$,  $p,q,\lambda$, and $\eta$ outputs $\theta^*=\mathscr{A}\left(P_0,\left\{P_{i_{\mathcal{X}}}\right\}_{1:T}\right)$ which satisfies the following bound:
\begin{align}
\mathbb{E}_{P_{T}}
\left[
\ell\left(
y,h_{\theta^*}\left(\boldsymbol{X}\right)
\right)
\right]
\leq 
\left[g_{\lambda}\left(2\lambda\left(\cdot\right)+\eta\right)\right]^{{\bigcirc}T}\left(
\inf_{\theta\in\Theta}
\sup_{P\in\mathcal{B}_{\eta}\left(
P_0\vert\mathcal{G}
\right)}
\mathbb{E}_{P}\left[
\ell\left(y,h_{\theta}\left(\boldsymbol{X}\right)
\right)
\right]
\right),
\nonumber
\end{align}
where $\bigcirc T$ implies composition of function $u\rightarrow g_{\lambda}\left(2\lambda u+\eta\right)$ on the input for $T$ times. The input is the restricted robust loss on $P_0$ for a Wasserstein radius of $\eta$.
\end{theorem}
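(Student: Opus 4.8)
The plan is to establish, by induction on $i$, the single invariant that the true next-step distribution lies in the DRO ball the algorithm optimizes over, i.e.\ $P_{i+1}\in\mathcal{B}_{\varepsilon_i}(\widehat{P}_i\mid\mathcal{G})$. Once this is known, the worst-case value $\Delta^*_i$ of the inner program is automatically an upper bound on the \emph{true} error $\mathbb{E}_{P_{i+1}}[\ell(y,h_{\theta^*_i}(\boldsymbol{X}))]$, since $P_{i+1}$ is then one of the feasible measures in the supremum defining $\Delta^*_i$. Applying this with $i=T-1$ yields $\mathbb{E}_{P_T}[\ell(y,h_{\theta^*}(\boldsymbol{X}))]\le\Delta^*_{T-1}$, after which it remains to unroll the recursion obeyed by the $\Delta^*_i$. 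The base case is immediate: $\varepsilon_0=\eta$ and $\widehat{P}_0=P_0$, so $\mathcal{B}_{\varepsilon_0}(\widehat{P}_0\mid\mathcal{G})=\mathcal{B}_\eta(P_0\mid\mathcal{G})$, and $P_1\in\mathcal{G}$ together with $\mathcal{W}^q_{p,\lambda}(P_1,P_0)\le\eta$ places $P_1$ in this ball.

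For the inductive step I would use two geometric facts about the weighted Wasserstein cost. First, a \emph{label-flip bound}: because $\widehat{P}_i$ shares the marginal $P_{i_{\mathcal{X}}}$ with $P_i$ and only relabels points by $h_{\theta^*_{i-1}}$, the coupling that keeps $\boldsymbol{X}$ fixed gives $\mathcal{W}^q_{p,\lambda}(P_i,\widehat{P}_i)\le\lambda\,\mathbb{E}_{P_i}[\ell(y,h_{\theta^*_{i-1}}(\boldsymbol{X}))]$, where the $\|\boldsymbol{X}-\boldsymbol{X}'\|_p^q$ term vanishes and only the $\lambda\mathbbm{1}\{y\ne y'\}$ term survives. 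Second, a triangle-type inequality: gluing the optimal $P_{i+1}\!-\!P_i$ coupling with this label-flip coupling (which moves no mass in $\boldsymbol{X}$), and using that the label indicator itself satisfies the triangle inequality, yields $\mathcal{W}^q_{p,\lambda}(P_{i+1},\widehat{P}_i)\le\mathcal{W}^q_{p,\lambda}(P_{i+1},P_i)+\lambda\,\mathbb{E}_{P_i}[\ell(y,h_{\theta^*_{i-1}}(\boldsymbol{X}))]$. By the invariant applied one step earlier, $P_i\in\mathcal{B}_{\varepsilon_{i-1}}(\widehat{P}_{i-1}\mid\mathcal{G})$ gives $\mathbb{E}_{P_i}[\ell(y,h_{\theta^*_{i-1}}(\boldsymbol{X}))]\le\Delta^*_{i-1}$, so the right-hand side is at most $\eta+\lambda\Delta^*_{i-1}=\varepsilon_i$, closing the induction.

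To convert the invariant into the stated compositional bound I would turn $\Delta^*_i$ into a recursion through the compatibility function $g_\lambda$. The one catch is that $g_\lambda$ is guaranteed to upper bound the robust loss only on balls \emph{centered at a point of $\mathcal{G}$}, whereas the pseudo-labeled center $\widehat{P}_i$ need not lie in $\mathcal{G}$. I would therefore relocate the center to the genuine $P_i\in\mathcal{G}$ using the label-flip bound once more: every $P\in\mathcal{B}_{\varepsilon_i}(\widehat{P}_i\mid\mathcal{G})$ satisfies $\mathcal{W}^q_{p,\lambda}(P,P_i)\le\varepsilon_i+\lambda\Delta^*_{i-1}$, so $\mathcal{B}_{\varepsilon_i}(\widehat{P}_i\mid\mathcal{G})\subseteq\mathcal{B}_{\varepsilon_i+\lambda\Delta^*_{i-1}}(P_i\mid\mathcal{G})$. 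Applying Definition \ref{def:compatibility} with center $P_i$ then gives $\Delta^*_i\le g_\lambda(\varepsilon_i+\lambda\Delta^*_{i-1})=g_\lambda(2\lambda\Delta^*_{i-1}+\eta)$; the factor of two is exactly the sum of the $\lambda\Delta^*_{i-1}$ already baked into $\varepsilon_i$ and the additional $\lambda\Delta^*_{i-1}$ spent to move the center onto the manifold. Writing $G(u)=g_\lambda(2\lambda u+\eta)$ and iterating this inequality—using that $g_\lambda$, and hence $G$, may be taken non-decreasing—propagates from the base value $\Delta^*_0=\inf_{\theta}\sup_{P\in\mathcal{B}_\eta(P_0\mid\mathcal{G})}\mathbb{E}_P[\ell(y,h_\theta(\boldsymbol{X}))]$ through the sequence to the claimed $\big[g_\lambda(2\lambda(\cdot)+\eta)\big]^{\bigcirc T}$ form.

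I expect the main obstacle to be precisely the interaction between the pseudo-labeled center and the manifold constraint: the compatibility guarantee lives only on $\mathcal{G}$-centered balls, so the argument hinges on the center-relocation step and on controlling its cost by the previous stage's robust value $\Delta^*_{i-1}$, which is what forces the factor $2\lambda$ inside $g_\lambda$ and governs the error-propagation dynamics. A secondary technical point worth care is the triangle inequality for $\mathcal{W}^q_{p,\lambda}$ when $q>1$, where $\|\cdot\|_p^q$ is not itself a metric; this is safe here only because the relocation and relabeling legs transport no mass in $\boldsymbol{X}$, so the non-metric term never compounds and only the label indicator—which does obey the triangle inequality—contributes. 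Finally I would double-check the bookkeeping of the composition count and the monotonicity assumption on $g_\lambda$, since the clean recursion ties the output $\theta^*_{T-1}$ to $\Delta^*_{T-1}$.
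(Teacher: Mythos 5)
Your proposal is correct and follows essentially the same route as the paper's own proof: the zero-$\mathcal{X}$-cost label-flip coupling bounding $\mathcal{W}^q_{p,\lambda}(\widehat{P}_i,P_i)\le\lambda\Delta^*_{i-1}$, the triangle inequality placing $P_{i+1}$ in $\mathcal{B}_{\varepsilon_i}(\widehat{P}_i\vert\mathcal{G})$, the center-relocation inclusion $\mathcal{B}_{\lambda\Delta^*_{i-1}+\eta}(\widehat{P}_i\vert\mathcal{G})\subseteq\mathcal{B}_{2\lambda\Delta^*_{i-1}+\eta}(P_i\vert\mathcal{G})$ yielding $\Delta^*_i\le g_\lambda(2\lambda\Delta^*_{i-1}+\eta)$, and the monotone unrolling of the recursion. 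Your explicit justification of why the triangle-type inequality survives $q>1$ (the relabeling leg transports no mass in $\boldsymbol{X}$, so the non-metric cost $\|\cdot\|_p^q$ never compounds) is actually more careful than the paper, which invokes the Wasserstein triangle inequality without comment.
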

The proof can be found in the Appendix (supplementary material). As inferred from the bound, the shape of $g_{\lambda}$ determines the behavior of the generalization error on the last measure. For example, if $g$ increases linearly, i.e., if the robust loss increases linearly with the adversarial radius with a coefficient greater than or equal to $1/(2\lambda)$, it implies an exponential growth in the generalization error. However, if the manifold structure on $\mathcal{G}$ causes $g$ to grow linearly with a smaller coefficient, or behave similarly to a saturating (or at least sublinear) function, error propagation can be kept bounded. In this regard, the following corollary specifies the conditions under which our algorithm provides a bounded error regardless of $T$. Proof of Corollary \ref{corl:goodlambda} can be found inside the Appendix section.
\begin{corollary}[Elimination of Error Propagation]
\label{corl:goodlambda}
Consider the setting described in Theorem \ref{thm:amirAsympBound}. 
For a given hypothesis set $\mathcal{H}$, distribution manifold $\mathcal{G}$, $0\leq\lambda<1$ and $p,q\ge1$, assume the compatibility function $g_{\lambda}$ satisfies:
\begin{equation}
g_{\lambda}\left(\eta\right) \leq
\frac{1}{3\lambda}\eta+\alpha,\quad\forall\eta\ge0,
\end{equation}
where $\alpha\ge0$ can be any fixed value. Then, for any $T\in\mathbb{N}$ we have:
\begin{equation}
\mathbb{E}_{P_{T}} \left[\ell\left(y,h_{\theta^*}\left(\boldsymbol{X}\right)\right)\right] 
\leq
3\left(
\alpha+
\frac{1}{3\lambda}
\max_{i\in[T]}~\mathcal{W}^q_{p,\lambda}\left(P_{i-1},P_i\right)
\right).
\end{equation}
which is independent of $T$ as long as consecutive pairs remain distributionally close.
\label{Cor:goodgLambda}
\end{corollary}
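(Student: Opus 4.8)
The plan is to collapse the entire statement onto a one-dimensional recursion and recognize it as an affine contraction. Theorem~\ref{thm:amirAsympBound} already hands us the reduction: it guarantees $\mathbb{E}_{P_T}[\ell(y,h_{\theta^*}(\boldsymbol{X}))] \le a_T$, where $a_T = F^{\bigcirc T}(a_0)$ is produced by iterating the scalar map $F(u) \triangleq g_{\lambda}(2\lambda u + \eta)$ exactly $T$ times, starting from the restricted robust loss on $P_0$ at radius $\eta$, namely $a_0 \triangleq \inf_{\theta\in\Theta}\sup_{P\in\mathcal{B}_{\eta}(P_0\vert\mathcal{G})}\mathbb{E}_P[\ell(y,h_{\theta}(\boldsymbol{X}))]$. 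So everything reduces to bounding the iterates $a_{k+1}=F(a_k)$. At this point I would pin $\eta$ down to its tightest admissible value, $\eta \triangleq \max_{i\in[T]} \mathcal{W}^q_{p,\lambda}(P_{i-1},P_i)$, which is the smallest $\eta$ still consistent with the hypothesis $\mathcal{W}^q_{p,\lambda}(P_i,P_{i+1}) \le \eta$ demanded by Theorem~\ref{thm:amirAsympBound}; this is precisely the choice that turns a final $\eta/\lambda$ term into the stated $\max_i \mathcal{W}$ term.

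Next I would substitute the linear-growth hypothesis $g_{\lambda}(\eta)\le \frac{1}{3\lambda}\eta + \alpha$ into the map. Evaluating it at the argument $2\lambda u + \eta$ and observing that the factor $\lambda$ cancels against the $1/(3\lambda)$, one gets the pointwise affine bound
\[
F(u) = g_{\lambda}(2\lambda u + \eta) \le \frac{1}{3\lambda}\left(2\lambda u + \eta\right) + \alpha = \frac{2}{3}\,u + c, \qquad c \triangleq \frac{\eta}{3\lambda} + \alpha .
\]
Hence the genuine iterates obey $a_{k+1} \le \frac{2}{3}a_k + c$. The key structural fact is that the contraction factor comes out as $2/3<1$ \emph{independently} of $\lambda$, so the comparison affine map $u \mapsto \frac{2}{3}u + c$ has the globally attracting fixed point $3c = c/(1-\tfrac{2}{3})$, which is the natural candidate for a $T$-independent upper bound.

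I would then close with a one-line invariance induction: I claim $a_k \le 3c$ for all $k$. For the base case, compatibility (Definition~\ref{def:compatibility}, inequality~\eqref{eq:compatibility}) gives $a_0 \le g_{\lambda}(\eta) \le c \le 3c$. For the inductive step, $a_k \le 3c$ forces $a_{k+1}\le \frac{2}{3}(3c)+c = 3c$. Therefore $a_T \le 3c = 3\alpha + \eta/\lambda$, and substituting $\eta = \max_{i\in[T]}\mathcal{W}^q_{p,\lambda}(P_{i-1},P_i)$ produces exactly
\[
\mathbb{E}_{P_T}\left[\ell(y,h_{\theta^*}(\boldsymbol{X}))\right] \le 3\left(\alpha + \frac{1}{3\lambda}\max_{i\in[T]}\mathcal{W}^q_{p,\lambda}(P_{i-1},P_i)\right),
\]
which is manifestly independent of $T$.

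I do not anticipate a real obstacle; the entire content sits in the affine-contraction observation. The two places that demand care are: (i) correctly reading off the \emph{input} of the $T$-fold composition in Theorem~\ref{thm:amirAsympBound} as $a_0$, and establishing the base case $a_0\le c$ through the full compatibility inequality~\eqref{eq:compatibility} at radius $\eta$ rather than the weaker $g_{\lambda}(0)$ bound; and (ii) fixing $\eta = \max_i \mathcal{W}^q_{p,\lambda}(P_{i-1},P_i)$ so the final constant is tight. If one instead preferred to argue via the monotone domination $F^{\bigcirc T}(a_0) \le \widetilde F^{\bigcirc T}(a_0)$ with $\widetilde F(u)=\frac{2}{3}u+c$, one would additionally need $\widetilde F$ to be monotone increasing (it is); but working directly with the numerical recursion $a_{k+1}\le \frac{2}{3}a_k+c$ avoids even that step, and in particular never requires monotonicity of $g_{\lambda}$ itself.
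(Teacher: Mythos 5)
Your proof is correct and follows essentially the same route as the paper's: both exploit the fact that the linear hypothesis turns the iteration of Theorem \ref{thm:amirAsympBound} into an affine contraction with factor $2\lambda\cdot\frac{1}{3\lambda}=\frac{2}{3}<1$ (the paper keeps a general slope $\beta$, unrolls the recursion into the geometric series $\frac{1-(2\lambda\beta)^T}{1-2\lambda\beta}\left(\beta\eta+\alpha\right)$ and then sets $\beta=\frac{1}{3\lambda}$, whereas you specialize immediately and run a fixed-point invariance induction to $3c$). A minor bonus of your variant is that the numerical recursion $a_{k+1}\le \frac{2}{3}a_k+c$ never needs the monotonicity of $g_\lambda$, which the paper does invoke when it chains $g_{\lambda}\left(2\lambda g_{\lambda}(\eta)+\eta\right)\le g_{\lambda}\left(2\lambda(\beta\eta+\alpha)+\eta\right)$, and you are also more explicit that the composition's input is the restricted robust loss $a_0\le g_\lambda(\eta)$ rather than $\eta$ itself.
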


\section{Theoretical Guarantees on Gaussian Generative Models}
\label{sec:toyExample}

In the following two sections, we investigate practical and theoretically useful cases of potentially compatible pairs $\mathcal{G}$ and $\mathcal{H}$ to achieve mathematically explicit bounds. We first focus on the well-known and celebrated example of a two-component Gaussian mixture model, which has been the focus of various previous studies \cite{kumar2020understanding, carmon2019unlabeled,alayrac2019labels}. One main reason is that our results can be easily compared with those of prior works.

Mathematically, suppose that the set of features and labels, denoted as $\left(\boldsymbol{X}, y\right) \in \mathbb{R}^d \times \{0, 1\}$, originates from a Gaussian generative model. For some $L > 0$, we have:
\begin{align}
\label{eq:gaussianGenerativeModel}
\left\{
\begin{array}{l}
P\left(y = \pm 1\right)= \frac{1}{2},
\\
\boldsymbol{X}\vert y \sim \mathcal{N}\left(y\boldsymbol{\mu}, \sigma^2I\right)
\end{array}\right.
\quad\mathrm{with}\quad
\left\Vert\boldsymbol{\mu}\right\Vert_2\ge L.
\end{align}
This setting implies that the class-conditional density of feature vectors consists of two Gaussians with equal covariance matrices $\sigma^2I$ and mean vectors $\boldsymbol{\mu}$ and $-\boldsymbol{\mu}$, respectively. The $\ell_2$-norm of $\boldsymbol{\mu}$ is lower-bounded by some $L > 0$ to prevent the optimal Bayes' error from converging toward 1, thus the classification remains meaningful. Throughout this section, the distribution manifold $\mathcal{G}_g = \mathcal{G}_g(L)$ refers to this class, with the vector $\boldsymbol{\mu}$ as its only degree of freedom. In this context, our goal is to find the compatibility function $g_{\lambda}(\cdot)$ when linear classifiers are employed. The following theorem presents one of our main results for this purpose:
\begin{theorem}
\label{thm:upperboundforg_lambdaforGaussianAndLinearClassifier}
For $L>0$ and any $\lambda\ge0$, consider the distribution manifold $\mathcal{G}_g\left(L\right)$. Then, the compatibility function between $\mathcal{G}_g$ (w.r.t. Wasserstein metric $\mathcal{W}^1_{2,\lambda}$) and the set of linear binary classifiers as $\mathcal{H}$ satisfies this bound: 
\begin{equation}
g_\lambda\left(\eta\right) \leq e^{-\frac{L^2}{18\sigma^2}},
\quad\quad\forall\eta\in\left[0,{L}/{3}\right].
\end{equation}
Also, for any integer $T\in\mathbb{N}$ and any sequence of distributions $P_1,\ldots,P_T\in\mathcal{G}_g$ with $\mathcal{W}^1_{2,\lambda}\left(P_{i},P_{i+1}\right)\leq L/3$,
$\mathsf{DRODA}$ guarantees the following error bound on the last unlabeled measure:
\begin{equation}
\mathbb{E}_{P_{T}}\left[\ell\left(y,h_{\theta^*}\left(\boldsymbol{X}\right)\right)\right]\leq 
e^{-\frac{L^2}{18\sigma^2}}.
\end{equation}
\end{theorem}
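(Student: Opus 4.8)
The plan is to prove the compatibility bound first and then feed it into \Cref{thm:amirAsympBound} to obtain the sequence guarantee. For the compatibility bound, I would fix an arbitrary center $P_0 = P_{\boldsymbol{\mu}_0} \in \mathcal{G}_g(L)$ (so $\|\boldsymbol{\mu}_0\|_2 \ge L$) and exhibit a single linear classifier whose worst-case error over the restricted ball $\mathcal{B}_\eta(P_0 \vert \mathcal{G}_g)$ is at most $e^{-L^2/(18\sigma^2)}$; since this holds for every center, it upper-bounds the $\inf_\theta \sup_P$ quantity of \Cref{def:compatibility}. The first substantive step is a geometric characterization of the ball: because every member of $\mathcal{G}_g$ is a balanced two-component Gaussian $P_{\boldsymbol{\mu}}$ parametrized only by its mean, I would compute $\mathcal{W}^1_{2,\lambda}(P_{\boldsymbol{\mu}}, P_{\boldsymbol{\mu}_0})$ exactly. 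Matching labels and using the translation coupling $X' = X + (\boldsymbol{\mu}-\boldsymbol{\mu}_0)$ gives the upper bound $\|\boldsymbol{\mu}-\boldsymbol{\mu}_0\|_2$, while the matching lower bound follows from Kantorovich duality with the $1$-Lipschitz test function $x \mapsto \langle x, (\boldsymbol{\mu}_0-\boldsymbol{\mu})/\|\boldsymbol{\mu}_0-\boldsymbol{\mu}\|_2\rangle$, reducing the problem to one-dimensional translated Gaussians. In the relevant regime this gives $\mathcal{B}_\eta(P_0\vert\mathcal{G}_g) = \{P_{\boldsymbol{\mu}} : \|\boldsymbol{\mu}\|_2 \ge L,\ \|\boldsymbol{\mu}-\boldsymbol{\mu}_0\|_2 \le \eta\}$.

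With the ball characterized, I would take the homogeneous classifier $h(x) = \sign\langle x, \boldsymbol{\mu}_0\rangle$ aligned with the center. For any $P_{\boldsymbol{\mu}}$ in the ball the projection $\langle X, \boldsymbol{\mu}_0\rangle$ is Gaussian, so the class-symmetric $0/1$ error equals $\Phi\bigl(-\langle \boldsymbol{\mu}, \boldsymbol{\mu}_0\rangle/(\sigma\|\boldsymbol{\mu}_0\|_2)\bigr)$. Cauchy--Schwarz gives $\langle \boldsymbol{\mu}, \boldsymbol{\mu}_0\rangle \ge \|\boldsymbol{\mu}_0\|_2^2 - \eta\|\boldsymbol{\mu}_0\|_2$, so the argument is at least $(\|\boldsymbol{\mu}_0\|_2 - \eta)/\sigma \ge (L-\eta)/\sigma \ge 2L/(3\sigma)$ whenever $\eta \le L/3$. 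Plugging into the tail bound $\Phi(-t) \le \tfrac12 e^{-t^2/2}$ yields an error at most $e^{-L^2/(18\sigma^2)}$ uniformly over the ball (with room to spare), which is exactly the asserted compatibility bound; the same computation keeps the error below this constant on a slightly larger interval than $[0,L/3]$, a fact I will need below.

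For the sequence statement I would invoke \Cref{thm:amirAsympBound} with $\eta = L/3$. It bounds $\mathbb{E}_{P_T}[\ell]$ by the $T$-fold composition of $u \mapsto g_\lambda(2\lambda u + \eta)$ applied to the initial restricted robust loss. Since the compatibility bound shows $g_\lambda$ is dominated by the constant $c = e^{-L^2/(18\sigma^2)}$ on an interval strictly containing $[0,L/3]$, I would verify by induction that every iterate stays $\le c$ and that each argument $2\lambda u + \eta$ remains inside the interval on which the constant bound is valid; because $c$ is exponentially small, this holds and the composition collapses to the single value $c$, giving the claimed final-domain error.

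The main obstacle is the label-swap coupling. A coupling may match the $+1$ class of $P_0$ with the $-1$ class of $P_{\boldsymbol{\mu}}$, paying feature cost $\|\boldsymbol{\mu}+\boldsymbol{\mu}_0\|_2$ but only $\lambda$ for the flipped labels, so the true distance is $\min\bigl(\|\boldsymbol{\mu}-\boldsymbol{\mu}_0\|_2,\ \|\boldsymbol{\mu}+\boldsymbol{\mu}_0\|_2 + \lambda\bigr)$. If $\lambda$ is small, the sign-flipped model $P_{-\boldsymbol{\mu}_0}$ enters the ball, and then no single linear classifier can control the error on both $P_{\boldsymbol{\mu}_0}$ and $P_{-\boldsymbol{\mu}_0}$, which would break the bound. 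Hence the clean geometric characterization above --- and therefore the whole argument --- requires $\lambda$ large enough (essentially $\lambda \gtrsim L/3$) that the swap coupling costs more than $\eta$, and I would carry this threshold explicitly through the regime analysis rather than claiming it for literally every $\lambda \ge 0$.
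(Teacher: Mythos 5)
Your high-level strategy (outer-approximate the restricted ball in mean-parameter space, exhibit the classifier aligned with the center, apply a Gaussian tail bound, then feed the constant bound into \Cref{thm:amirAsympBound}) is the same as the paper's, but two of your key steps have genuine gaps.

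First, your ``exact'' characterization of the restricted ball rests on a duality argument that fails. To upper-bound $g_\lambda$ you need a \emph{superset} of the ball, i.e.\ a \emph{lower} bound on $\mathcal{W}^1_{2,\lambda}\left(P_{\boldsymbol{\mu}},P_{\boldsymbol{\mu}_0}\right)$, and your proposed witness is the linear test function $x \mapsto \langle x, (\boldsymbol{\mu}_0-\boldsymbol{\mu})/\|\boldsymbol{\mu}_0-\boldsymbol{\mu}\|_2\rangle$. But both $P_{\boldsymbol{\mu}}$ and $P_{\boldsymbol{\mu}_0}$ are balanced two-component mixtures, so both feature marginals have mean zero; every linear test function has expectation $0$ under both measures and yields the vacuous lower bound $0$. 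The natural label-dependent repair $(x,y)\mapsto\langle yx,u\rangle$ is not Lipschitz with respect to the cost $\|x-x'\|_2+\lambda\mathbbm{1}\{y\neq y'\}$ (when $y\neq y'$ its increment is $|\langle x+x',u\rangle|$, which is unbounded). This lower bound is exactly the hard technical core of the proof: the paper devotes \Cref{lemma:lowerBoundOnWDofGaussianGenerativeModels2} (a long coupling-relaxation argument) to it and only recovers the class-conditional distances up to a factor $1/2$, giving the containment $\mathcal{B}_{\eta}\left(P_0\vert\mathcal{G}_g\right)\subseteq\{\|\boldsymbol{\mu}-\boldsymbol{\mu}_0\|_2\leq 2\eta\}\cup\{\|\boldsymbol{\mu}+\boldsymbol{\mu}_0\|_2\leq 2\eta\}$ of \Cref{lemma:lowerBoundOnWDofGaussianGenerativeModels1}. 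That factor $2$ is where the constant comes from: with $\eta\leq L/3$ one gets margin $\|\boldsymbol{\mu}_0\|_2-2\eta\geq L/3$, hence $e^{-L^2/(18\sigma^2)}$. Your computation produces the margin $L-\eta\geq 2L/3$ (``room to spare''), which is a telltale sign that the ball characterization you assumed is stronger than what can actually be proved.

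Second, your treatment of the label-swap coupling proves a different theorem. You correctly identify the obstruction — for small $\lambda$ the flipped model enters the ball and no single linear classifier controls both — but your resolution is to require $\lambda \gtrsim L/3$, which contradicts the statement's ``any $\lambda\geq 0$.'' The paper concedes the same obstruction and resolves it without restricting $\lambda$: it keeps both branches of the ball, replaces the risk by the flip-symmetrized $\tilde{R}\left(P,\theta\right)=\min\left\{\mathbb{E}_P[\ell\left(y,h_\theta(\boldsymbol{X})\right)],\ \mathbb{E}_P[\ell\left(-y,h_\theta(\boldsymbol{X})\right)]\right\}$, uses the identity $\mathcal{W}_{2,\lambda}^1\left(P_i,P_{i+1}\right)=\mathcal{W}_{2,\lambda}^1\left(P_i^{-1},P_{i+1}^{-1}\right)$ to propagate accuracy up to a single global flip, and disambiguates with one labeled target sample. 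Separately, your composition step (``because $c$ is exponentially small, this holds'') silently requires an \emph{upper} bound on $\lambda$ as well; the paper makes this explicit via $2\lambda e^{-L^2/(18\sigma^2)}+L/3\leq L/2$, i.e.\ $\lambda\leq \frac{L}{12}e^{L^2/(18\sigma^2)}$, so the regime analysis you defer is not free even in your restricted setting.
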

Proof of this theorem can be found in Appendix (suppementary materials). Note that there are no error propagations, and the guaranteed error term is close to the Bayes' optimal error. In fact, it can become arbitrarily close with more sophisticated mathematics, which goes beyond the scope of this work. The condition $\eta \leq L/3$ is necessary to prevent the two Gaussians from swapping, as tracking them becomes impossible if that happens.

An important question to consider is what happens to $ g_{\lambda} $ if one does not restrict the Wasserstein ball to the manifold $ \mathcal{G}_g $. In other words, assume we set $ \mathcal{G}_g $ to be the entire space of measures and not the restricted Gaussian manifold considered so far. We will show that the \textit{manifold constraint} is a key property that provides us with the desirable result of Theorem \ref{thm:upperboundforg_lambdaforGaussianAndLinearClassifier}, and losing this assumption can have catastrophic consequences.
\begin{theorem}[Potentials for Error Propagation]
\label{thm:importanceOfConstraint}
For $L>0$, consider the Gaussian manifold $\mathcal{G}_g\left(L\right)$ versus the set of linear classifiers in $\mathcal{X}$. Also, assume Wasserstein metric $\mathcal{W}^1_{2,\lambda}$ is being employed, for any $\lambda\ge0$. By $g^{\mathrm{C}}_{\lambda}\left(\cdot\right)$, let us denote the compatibility function when manifold constraint is taken into account similar to Theorem \ref{thm:upperboundforg_lambdaforGaussianAndLinearClassifier}, while $g^{\mathrm{UC}}_{\lambda}$ represents the unconstrained compatibility function when there are no manifold constraints, i.e., $\mathcal{G}_g=\mathcal{M}\left(\mathcal{Z}\right)$. Then,
\begin{align}
g^{\mathrm{C}}_{\lambda}\left(\eta\right) 
&\leq 
e^{-L^2/\left(18\sigma^2\right)},\quad\eta\in\left[0,L/3\right],
\\
\quad\mathrm{and}\quad  
g^{\mathrm{UC}}_{\lambda}\left(\eta\right) 
&\geq 
\Omega\left(e^{-\frac{L^2}{2\sigma^2}} + \sqrt{\eta e^{-\frac{L^2}{2\sigma^2}}}\right),
\quad\forall\eta\ge0.
\nonumber
\end{align}
\end{theorem}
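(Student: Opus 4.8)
The first inequality is exactly the constrained bound of Theorem~\ref{thm:upperboundforg_lambdaforGaussianAndLinearClassifier}, so nothing new is needed there; all the work lies in the unconstrained lower bound. The plan is to exhibit a single well-chosen source measure against which \emph{every} linear classifier fails. Concretely, I would fix $P_0\in\mathcal{G}_g(L)$ to be the canonical model with $\boldsymbol{\mu}=L\boldsymbol{e}_1$ (the smallest admissible margin, which makes the instance hardest), and lower bound
\[
g^{\mathrm{UC}}_{\lambda}(\eta)\ \ge\ \inf_{\theta\in\Theta}\ \sup_{P:\,\mathcal{W}^1_{2,\lambda}(P,P_0)\le\eta}\ \mathbb{E}_{P}\!\left[\ell\left(y,h_{\theta}(\boldsymbol{X})\right)\right],
\]
where now the ball ranges over all of $\mathcal{M}(\mathcal{Z})$. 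Since this is an infimum over $\theta$, the strategy is to construct, for each fixed linear classifier $h_\theta$, a specific perturbed measure $P_\theta$ inside the ball on which $h_\theta$ incurs large $0\!-\!1$ loss; this bounds $\sup_P$ pointwise in $\theta$ and hence the infimum. The essential leverage of the unconstrained setting is that $P_\theta$ need not be Gaussian, so the adversary may pile mass arbitrarily close to the decision hyperplane, a move that is forbidden in the constrained (Gaussian-only) case.

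For a fixed $\theta$ I would write the decision boundary as $\{x:\langle w,x\rangle=b\}$ with $\|w\|_2=1$, set $a=\langle w,\boldsymbol{\mu}\rangle\in[-L,L]$, and project onto $w$ to reduce everything to a one-dimensional two-Gaussian problem. Take $P_\theta$ to be the pushforward of $P_0$ under the map that \emph{reflects} across the boundary every correctly-classified point within signed distance $\delta$ of it, leaving labels untouched. Reflection displaces a point at distance $s\in(0,\delta]$ by $2s$, so the feature-transport cost is $\int_0^{\delta}2s\,\rho(s)\,ds\approx\rho_{\mathrm{bd}}\,\delta^2$ while the newly misclassified mass is $\int_0^{\delta}\rho(s)\,ds\approx\rho_{\mathrm{bd}}\,\delta$, where $\rho_{\mathrm{bd}}(\theta)=\frac{1}{2\sqrt{2\pi}\,\sigma}\bigl(e^{-(b-a)^2/(2\sigma^2)}+e^{-(b+a)^2/(2\sigma^2)}\bigr)$ is the density of correctly-classified mass at the boundary. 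Because labels are preserved, the $\lambda\mathbbm{1}\{y\neq y'\}$ term of the coupling cost vanishes (so the bound is $\lambda$-independent, as claimed) and the cost is pure feature displacement; choosing $\delta\asymp\sqrt{\eta/\rho_{\mathrm{bd}}}$ keeps $\mathcal{W}^1_{2,\lambda}(P_\theta,P_0)\le\eta$ while flipping mass $\asymp\sqrt{\eta\,\rho_{\mathrm{bd}}(\theta)}$. This yields the pointwise estimate $\sup_{P}\mathbb{E}_P[\ell]\ge\mathrm{err}(\theta)+c\sqrt{\eta\,\rho_{\mathrm{bd}}(\theta)}$, with $\mathrm{err}(\theta)=\mathbb{E}_{P_0}[\ell(y,h_\theta(\boldsymbol{X}))]$ and an absolute constant $c>0$.

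It then remains to take the infimum over $\theta$ and show both summands survive uniformly. For the base term I invoke optimality of the Bayes rule: no classifier beats the Gaussian Bayes error, so $\mathrm{err}(\theta)\ge\Phi(-L/\sigma)=\Omega(e^{-L^2/(2\sigma^2)})$ for all $\theta$. For the $\sqrt{\eta}$ term I split on $\theta$: if the classifier is poorly oriented (so that $|b|\ge a$, or $a$ is far from $\pm L$), then one class is largely misclassified and $\mathrm{err}(\theta)$ already exceeds the target; otherwise the classifier is correctly oriented, forcing $a>|b|$, and since $a=\langle w,\boldsymbol{\mu}\rangle\le\|\boldsymbol{\mu}\|=L$ we get $\min(|b-a|,|b+a|)=a-|b|\le L$ automatically, whence $\rho_{\mathrm{bd}}(\theta)\ge\frac{1}{2\sqrt{2\pi}\,\sigma}e^{-L^2/(2\sigma^2)}$ and $c\sqrt{\eta\,\rho_{\mathrm{bd}}(\theta)}=\Omega\bigl(\sqrt{\eta\,e^{-L^2/(2\sigma^2)}}\bigr)$. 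Combining the two regimes delivers $g^{\mathrm{UC}}_\lambda(\eta)\ge\Omega\bigl(e^{-L^2/(2\sigma^2)}+\sqrt{\eta\,e^{-L^2/(2\sigma^2)}}\bigr)$.

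The main obstacle I anticipate is making the transport estimate and the case analysis rigorous simultaneously: the heuristic $\rho_{\mathrm{bd}}\delta^2$ cost linearizes $\rho$ over the band, but reaching mass $\sqrt{\eta\,e^{-L^2/(2\sigma^2)}}$ may require a band that is not infinitesimal, so one must control the approximation (and handle saturation of the flipped mass near $1$ for large $\eta$). I expect the cleanest route is to carry a single explicit band width $\delta$ and bound the flipped mass by the exact Gaussian integral rather than its linearization, which simultaneously certifies the cost bound $\mathcal{W}^1_{2,\lambda}(P_\theta,P_0)\le\eta$ and the uniformity over every $(w,b)$. By contrast, in the constrained case the adversary stays inside $\mathcal{G}_g$ and can only translate $\boldsymbol{\mu}$, which cannot manufacture a boundary spike; this is precisely why $g^{\mathrm{C}}_\lambda$ remains constant in $\eta$ while $g^{\mathrm{UC}}_\lambda$ grows like $\sqrt{\eta}$, which is the point of the theorem.
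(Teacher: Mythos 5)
Your proposal is correct in substance, but it takes a genuinely different --- primal --- route from the paper's, which is dual. The paper invokes the strong duality of Wasserstein DRO (Blanchet et al.\ and Gao et al.), writing $\sup_{P\in\mathcal{B}_{\eta}(P_0)}\mathbb{E}_P[\ell]=\inf_{\gamma\ge 0}\bigl\{\gamma\eta+\mathbb{E}_{P_0}\bigl[\sup_{Z'}\{\ell-\gamma c\}\bigr]\bigr\}$; it handles the discontinuity of the $0$--$1$ loss (which duality cannot tolerate) by substituting the continuous ramp surrogate $\ell_{\alpha,-\alpha}\le\ell$, removes label transport by sending $\lambda\to\infty$ (a restriction of the adversary, hence still a lower bound), and then lower-bounds the inner expectation by a Gaussian-tail computation so that minimizing $\gamma\eta+c/(4\gamma\sigma)+c$ over $\gamma$ produces exactly the $e^{-L^2/(2\sigma^2)}+\sqrt{\eta\,e^{-L^2/(2\sigma^2)}}$ shape. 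You instead exhibit, for each fixed classifier $(w,b)$, an explicit feasible measure in the unconstrained ball --- reflecting the correctly classified mass in a band of width $\delta\asymp\sqrt{\eta/\rho_{\mathrm{bd}}}$ across the hyperplane with labels kept --- and certify the cost through the induced coupling; your case split (badly oriented classifiers already pay constant error since one class mean lies on the wrong side, while well-oriented ones satisfy $a-|b|\le L$ and hence have boundary density at least $e^{-L^2/(2\sigma^2)}/(2\sqrt{2\pi}\,\sigma)$) plays the role that the optimization over $\gamma$ plays in the paper. What each buys: the dual route outsources the adversarial construction to a duality theorem and reduces everything to one-dimensional calculus, but it needs the surrogate-loss detour and leaves implicit side constraints on $\gamma$ and $\alpha$ (which the paper handles somewhat loosely); your primal route is self-contained, manifestly $\lambda$-independent, and makes the mechanism transparent --- the adversary piles mass on the decision boundary, which is exactly what the Gaussian manifold constraint forbids --- at the price of the rigor you flag yourself, namely replacing the linearization $\rho_{\mathrm{bd}}\delta^2$ by exact Gaussian band integrals. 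Two shared loosenesses worth noting: your Bayes term $\Phi(-L/\sigma)$ is $\Omega\bigl(e^{-L^2/(2\sigma^2)}\cdot\sigma/L\bigr)$, so the stated $\Omega$ silently absorbs a polynomial factor (the paper does the same), and for very large $\eta$ both bounds exceed $1$ while the $0$--$1$ risk cannot, so both arguments implicitly require the flipped mass (respectively the dual value) to be capped --- your explicit saturation remark actually addresses this more honestly than the paper does.
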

Proof can be found in Appendix \ref{sec:AppProofs}. We already know the generalization error from the manifold constrained version of $\mathsf{DRODA}$ does not propagate after $T$ iterations. However, the error term stemming from the unconstrained version, according to Theorem \ref{thm:importanceOfConstraint}, can be shown to be only bounded by
\begin{align}
\mathbb{E}_{P_{T}}
\left[
\ell\left(
y,h_{\theta^*}\left(\boldsymbol{X}\right)
\right)
\right]
\leq 
\mathcal{O}\left(\left(2\lambda e^{\frac{-L^2}{2\sigma^2}}\right)^2\eta^{\left(1/2^T\right)} +e^{\frac{-L^2}{2\sigma^2}}\right),
\end{align}
for any given $\eta$. This shows significant potential for error propagation.

The results so far are in the statistical sense, meaning that we have assumed $\min_i~n_i \rightarrow \infty$. A slight variation of our bounds still applies to the non-asymptotic case, where we can propose PAC-like generalization guarantees. The following theorem is, in fact, the non-asymptotic version of Theorem \ref{thm:upperboundforg_lambdaforGaussianAndLinearClassifier} (proof is given in Appendix \ref{sec:AppProofs}):
\begin{theorem}[Non-asymptotic Generalization Guarantee]
\label{thm:upperboundforg_lambdaforGaussianAndLinearClassifierNonAsymptotic}
In the setting of Theorem \ref{thm:upperboundforg_lambdaforGaussianAndLinearClassifier} with some $L>0$ and any $\lambda\ge0$, suppose we have $n_0$ labeled samples from distribution $P_0$ and $n_i$ unlabeled samples from distribution $P_i$ for $i \in [T]$. $T$ can be unbounded, but consecutive pairs $P_i,P_{i+1}$ must have a Wasserstein distance $\mathcal{W}^1_{2,\lambda}$ bounded by $L/3$. For any $\delta\in(0,1]$ and using algorithm $\mathsf{DRODA}$, the error in the last (most recent) domain with probability at least $1 - \delta$ is bounded by:
\begin{equation}
    \Delta^*_T \leq 2e^{-\frac{L^2}{2\sigma^2}} + \left(\frac{d\log{\frac{2T}{\delta}}}{n_i}\right)^{\frac{1}{4}}\sum_{i=1}^T{\left(\frac{4L^2}{\sigma^2}e^{-\frac{L^2}{18\sigma^2}}\right)^i}.
\end{equation}
\end{theorem}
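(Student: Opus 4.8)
The plan is to upgrade the population analysis behind Theorem~\ref{thm:upperboundforg_lambdaforGaussianAndLinearClassifier} by tracking the finite-sample fluctuations it ignores, and then pushing these fluctuations through the recursion of $\mathsf{DRODA}$. The key structural fact to exploit is that in the Gaussian model the only free parameter of $P_i\in\mathcal{G}_g$ is the mean $\mu_i$, and the robust $0$-$1$ loss of a linear classifier against $\mathcal{N}(\pm\mu,\sigma^2 I)$ is an explicit Gaussian-tail expression $\Phi(-m/\sigma)$, where $m$ is the effective margin left after the adversary spends its Wasserstein budget. At the population level this is flat and equal to $e^{-L^2/(18\sigma^2)}$ on $[0,L/3]$, which is precisely why no propagation occurred in the asymptotic statement; non-asymptotically the flatness is broken only by the estimation error in $\mu_i$.

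First I would isolate the per-stage statistical error. Because $\widehat{P}_i$ is built from $n_i$ i.i.d.\ samples and all the relevant geometry is carried by the empirical mean $\widehat\mu_i$, Gaussian/chi-square concentration gives $\|\widehat\mu_i-\mu_i\|_2\lesssim\sigma\sqrt{d/n_i}$ on an event of probability $1-\delta/(2T)$; a union bound over the $T$ stages produces the $\log(2T/\delta)$ factor and controls all stages simultaneously, with a second $\delta/2$ budget reserved for the uniform-convergence event. Since $\mathcal{H}$ is the class of linear classifiers with VC dimension $d+1$, a standard uniform bound controls the gap between empirical and population $0$-$1$ loss uniformly over $\theta\in\Theta$ and over the adversarial perturbations inside $\mathcal{B}_{\varepsilon_i}(\widehat P_i\mid\mathcal{G}_g)$; this is what licenses replacing population robust losses by their empirical versions $\Delta^*_i$ inside the recursion.

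The delicate quantitative step, which I expect to be the main obstacle, is converting the mean-estimation error into robust-loss error and recovering the fourth-root rate $(d\log(2T/\delta)/n_i)^{1/4}$. The point is that the statistical noise in $\widehat\mu_i$ nudges the estimated problem slightly \emph{off} the clean Gaussian manifold, and there—exactly as the unconstrained lower bound $\sqrt{\eta\,e^{-L^2/(2\sigma^2)}}$ in Theorem~\ref{thm:importanceOfConstraint} reveals—the robust loss responds to an effective Wasserstein perturbation like a \emph{square root} rather than flatly. Composing this square-root response with the sample-mean rate $\sqrt{d/n_i}$ yields $(d/n_i)^{1/4}$, and pinning down the surviving prefactor $\tfrac{4L^2}{\sigma^2}$ (through the local slope of $\Phi(-m/\sigma)$ near $m=L/3$, which scales like $\tfrac{1}{\sigma}e^{-L^2/(18\sigma^2)}$) is the part that requires the most care.

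Finally I would close the recursion. At stage $i$ the accumulated error feeds both the adaptive radius $\varepsilon_i=\lambda\Delta^*_{i-1}+\eta$ and the pseudo-labels handed to $\widehat P_{i+1}$, so the fluctuations are no longer flat but are amplified at each step by the local Lipschitz constant of the tail function. Unrolling the recursion turns this amplification into the geometric weight $\bigl(\tfrac{4L^2}{\sigma^2}e^{-L^2/(18\sigma^2)}\bigr)^{i}$ multiplying the stage-$i$ statistical term, and summing over $i=1,\dots,T$ gives the stated $\sum_{i=1}^{T}$ (which stays bounded in $T$ whenever $\tfrac{4L^2}{\sigma^2}e^{-L^2/(18\sigma^2)}<1$, recovering the ``no propagation'' regime). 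The leading term $2e^{-L^2/(2\sigma^2)}$ absorbs the irreducible Bayes-type floor $\Phi(-L/\sigma)\asymp e^{-L^2/(2\sigma^2)}$ of the Gaussian model together with a factor-two slack from bounding the realized error by the robust surrogate on the concentration event; assembling the two union bounds then yields the claimed high-probability guarantee on $\Delta^*_T$.
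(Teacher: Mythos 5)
Your high-level skeleton (per-stage concentration, a union bound over the $T$ stages giving the $\log(2T/\delta)$ factor, and unrolling the recursion into a geometric series) matches the paper's structure, but the two quantitative mechanisms you rely on are not the ones that make the proof work, and one of them would actually break it. The paper's proof runs through a modified algorithm (Algorithm~\ref{alg:iterativeAlgNonAsymptotic}) in which the only statistic extracted from the data is the pseudo-labeled mean $\widehat{\boldsymbol{\mu}}_i=\tfrac{1}{n_i}\sum_j \sign(\langle\theta^*_{i-1},\boldsymbol{X}_j\rangle)\boldsymbol{X}_j$, and the DRO ball is kept \emph{on} the Gaussian manifold, centered at $\mathcal{N}(y\widehat{\boldsymbol{\mu}}_i,\sigma^2 I_d)$. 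Consequently the robust loss responds to mean-estimation error \emph{linearly} (through the slope of the Gaussian tail, with the exponentially small coefficient $\tfrac{2L}{\sigma}e^{-L^2/(18\sigma^2)}$), not like a square root. The fourth-root rate $(d\log(2T/\delta)/n_i)^{1/4}$ has a purely statistical origin: the paper shows $\|\boldsymbol{Z}-\mathbb{E}[\boldsymbol{Z}]\|_2^2$ (for $\boldsymbol{Z}=\sign(\langle\theta^*_{i-1},\boldsymbol{X}\rangle)\boldsymbol{X}$, a truncated-Gaussian-type vector) is sub-exponential, so the \emph{squared} norm $\|\widehat{\boldsymbol{\mu}}_i-\mathbb{E}[\widehat{\boldsymbol{\mu}}_i]\|_2^2$ concentrates at rate $\sigma^2\sqrt{d\log(1/\delta)/n_i}$, and taking a square root of that deviation yields the $1/4$ power. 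Your alternative mechanism --- invoking the off-manifold square-root response $\sqrt{\eta\,e^{-L^2/(2\sigma^2)}}$ from Theorem~\ref{thm:importanceOfConstraint} and composing it with a $\sqrt{d/n}$ mean rate --- is inconsistent with the algorithm being analyzed (which never leaves the manifold) and is also internally inconsistent with your last step: if the per-stage response were genuinely a square root (non-Lipschitz at zero), iterating it $T$ times would produce the catastrophic $\eta^{1/2^T}$-type propagation of the unconstrained bound, not the geometric series you (correctly) want.

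The second genuine gap is that you treat $\widehat{\boldsymbol{\mu}}_i$ as an unbiased estimator of $\boldsymbol{\mu}_i$ with only $O(\sigma\sqrt{d/n_i})$ fluctuation. Because the labels are pseudo-labels produced by $\theta^*_{i-1}$, the estimator is \emph{biased}: the paper computes, via rotation invariance and truncated-Gaussian moments, that
\begin{equation}
\mathbb{E}[\widehat{\boldsymbol{\mu}}_i]=\boldsymbol{\mu}_i\left(1-2\Delta_i\right)+\widehat{\theta}^*_{i-1}\cdot O\!\left(\sigma e^{-\langle\theta^*_{i-1},\boldsymbol{\mu}_i\rangle^2/(2\sigma^2)}\right),
\end{equation}
so the bias contributes a term $2\Delta_{i-1}\|\boldsymbol{\mu}_i\|_2\le 2\Delta_{i-1}L$ to $\|\widehat{\boldsymbol{\mu}}_i-\boldsymbol{\mu}_i\|_2$. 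This bias, multiplied by the linear tail sensitivity $\tfrac{2L}{\sigma}e^{-L^2/(18\sigma^2)}$, is precisely what couples $\Delta^*_i$ to $\Delta^*_{i-1}$ with ratio $\tfrac{4L^2}{\sigma^2}e^{-L^2/(18\sigma^2)}$ and produces the geometric weights in the claimed bound; your attribution of that ratio solely to ``the local Lipschitz constant of the tail function'' misses the pseudo-labeling bias that supplies one of the two factors of $\tfrac{2L}{\sigma}$. Finally, the VC/uniform-convergence step you reserve half the confidence budget for is unnecessary on the paper's route: once $\widehat{\boldsymbol{\mu}}_i$ is fixed, the inner inf-sup over linear classifiers and Gaussian mean-balls is solved in closed form via the $\mathcal{Q}$-function, so no uniform control over $\Theta$ against empirical measures is ever needed.
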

\begin{corollary}[Elimination of Error Propagation in Non-asymptotic Regime]
\label{corl:noErrorPropNonasymptotic}
In the setting of Theorem \ref{thm:upperboundforg_lambdaforGaussianAndLinearClassifierNonAsymptotic}, assume $L\ge11\sigma$ (e.g., each component of mean vectors $\boldsymbol{\mu}$ and $-\boldsymbol{\mu}$ are larger than $11\sigma/\sqrt{d}$). Also, assume each dataset $P_i$ for $i\ge1$ has at least $n$ unlabeled data points, where 
$$
n\ge \frac{d\log T}{\varepsilon^4}
$$
for some $\varepsilon>0$. Then, the following bound holds for $\Delta^*_T$ regardless of $T\ge2$:
\begin{equation}
\Delta^*_T \leq 2e^{-\frac{L^2}{2\sigma^2}} + 
\frac{\varepsilon}{1-\frac{4L^2}{\sigma^2}e^{-\frac{L^2}{18\sigma^2}}},
\end{equation}
which means error propagation is perfectly eradicated.
\end{corollary}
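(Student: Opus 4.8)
The plan is to derive this corollary directly from the non-asymptotic bound of Theorem~\ref{thm:upperboundforg_lambdaforGaussianAndLinearClassifierNonAsymptotic}, by showing that under the stated hypotheses the summation appearing there is a convergent geometric series whose value is bounded independently of $T$. Write $r \triangleq \frac{4L^2}{\sigma^2}e^{-L^2/(18\sigma^2)}$ for the common ratio of the series $\sum_{i=1}^T r^i$, and set $t \triangleq L^2/\sigma^2$, so that $r = 4t\,e^{-t/18}$. The assumption $L\ge 11\sigma$ translates to $t \ge 121$.

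The first and main step is to verify that $r<1$ for all admissible $t$. To that end I would study the function $\varphi(t) \triangleq 4t\,e^{-t/18}$ and show it is strictly decreasing on $[18,\infty)\supseteq[121,\infty)$, since $\varphi'(t) = 4e^{-t/18}\left(1 - \tfrac{t}{18}\right) < 0$ for $t>18$. Consequently $\sup_{t\ge 121} r = \varphi(121) = 484\,e^{-121/18}$, which a direct numerical estimate shows is strictly below $1$ (indeed roughly $0.58$). This is the crux of the argument and the only place where the specific threshold $11\sigma$ is used: the value $11$ is chosen so that $\varphi(121)<1$ with room to spare, whereas for $L\approx 10.46\sigma$ (i.e.\ $t\approx 109.5$) one would have $r=1$.

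Once $r<1$ is established, the geometric sum is controlled uniformly in $T$ via $\sum_{i=1}^T r^i \le \sum_{i=1}^\infty r^i = \frac{r}{1-r} \le \frac{1}{1-r}$, where the denominator is exactly $1 - \frac{4L^2}{\sigma^2}e^{-L^2/(18\sigma^2)}$, now guaranteed positive. It then remains to bound the sampling prefactor. Using $n_i \ge n \ge d\log T/\varepsilon^4$ (absorbing the $\delta$-dependence of the logarithm into the $\log T$ factor), we obtain $\left(\frac{d\log(2T/\delta)}{n_i}\right)^{1/4} \le \varepsilon$. Multiplying this prefactor by the uniform bound $\frac{1}{1-r}$ on the sum, and retaining the residual term $2e^{-L^2/(2\sigma^2)}$ from Theorem~\ref{thm:upperboundforg_lambdaforGaussianAndLinearClassifierNonAsymptotic}, yields precisely the claimed inequality, independent of $T$.

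The main obstacle I anticipate is not algebraic but the careful handling of the two competing factors in $r=4t\,e^{-t/18}$: the polynomial growth $4t$ must be dominated by the exponential decay $e^{-t/18}$, and one has to pin down a clean threshold on $t$ (hence on $L/\sigma$) that forces $r<1$ while keeping the constant $11$ easy to state. The remaining steps—recognizing the geometric series and folding the sample-complexity factor into $\varepsilon$—are routine, modulo the mild abuse of writing $\log T$ in place of $\log(2T/\delta)$.
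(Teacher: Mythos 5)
Your proposal is correct and follows exactly the route the paper intends: the paper's entire ``proof'' is the single remark that the corollary ``can be directly proved from the result of Theorem~\ref{thm:upperboundforg_lambdaforGaussianAndLinearClassifierNonAsymptotic},'' and your argument supplies precisely the missing details—verifying that $r=\tfrac{4L^2}{\sigma^2}e^{-L^2/(18\sigma^2)}=4te^{-t/18}$ is decreasing for $t>18$ and satisfies $r\approx 0.58<1$ at $t=121$, summing the geometric series uniformly in $T$, and folding the sampling prefactor into $\varepsilon$. The one blemish you flag (the hypothesis uses $\log T$ while the theorem's prefactor has $\log(2T/\delta)$) is an imprecision already present in the paper's own statement, so your handling of it is as rigorous as the source allows.
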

Corollary \ref{corl:noErrorPropNonasymptotic} can be directly proved from the result of Theorem \ref{thm:upperboundforg_lambdaforGaussianAndLinearClassifierNonAsymptotic}.

\section{Expandable Distribution Manifolds and Learnable Classifiers}
\label{Sec:ExpandableDistributionAssyptoticGuarantee}

The class of isotropic Gaussians, while a well-known theoretical benchmark, is still a very stringent and impractical case to study. In this section, we investigate a much more general class of distribution manifold/classifier pair families and provide both asymptotic and non-asymptotic guarantees for this regime. Before introducing our target regime, let us define some required concepts. Assume $\left(\mathcal{X},\Sigma\right)$ is a measurable space, and let $P$ be a distribution supported over $\mathcal{X}$. For $r\ge0$, the $r$-neighborhood of a point $\boldsymbol{X} \in \mathcal{X}$, denoted by $\mathcal{N}_r(\boldsymbol{X})$, is defined as:
\begin{equation}
\mathcal{N}_r(\boldsymbol{X}) = \left\{\boldsymbol{X}^\prime \mid \|\boldsymbol{X} - \boldsymbol{X}^\prime\|_2 \leq r\right\}.
\end{equation}
Similarly, the $r$-neighborhood of a Borel set $A \subseteq \mathcal{X}$ (i.e., $A\in\Sigma$) is defined as:
\begin{equation}
\mathcal{N}_r(A) = \left\{\boldsymbol{X}^\prime \mid \exists \boldsymbol{X} \in A \text{ such that } \|\boldsymbol{X} - \boldsymbol{X}^\prime\|_2 \leq r\right\},
\end{equation}
we also define the $\boldsymbol{\delta}$-neighborhood of a Borel set $A \subseteq \mathcal{X}$, for $\boldsymbol{\delta} \in \mathbb{R}^d$ as:
\begin{equation}
\mathcal{N}_{\boldsymbol{\delta}}(A) = \left\{\boldsymbol{X}^\prime \mid \exists \boldsymbol{X} \in A, \vert\alpha\vert\leq1 \text{ such that } \boldsymbol{X}^\prime = \boldsymbol{X} + \alpha \boldsymbol{\delta}\right\}.
\end{equation}
Following \cite{wei2020theoretical}, we define the {\it expansion} property as:
\begin{definition}[$\left(C_1, C_2\right)-\text{expansion}$]
\label{Def:ExapansionProperty}
For a fixed $0<\underline{a}\leq\Bar{a}<\frac{1}{2}$ and given $C_1,C_2\ge0$, consider 
$\mathcal{A}\triangleq\left\{A\subseteq\mathcal{X}\vert~\underline{a}\leq P(A)\leq \Bar{a}\right\}$. Then, we say a distribution $P$ has $\left(C_1,C_2\right)$-expansion property if
\begin{align}
\sup_{A\subseteq\mathcal{A}} \frac{P\left(\mathcal{N}_r\left(A\right)\right)}{P\left(A\right)} \leq
1+C_1r
\quad,\quad
\inf_{A\subseteq\mathcal{A}}\frac{P\left(\mathcal{N}_r\left(A\right)\right)}{P\left(A\right)}\geq
1+C_2r,
\nonumber
\end{align}
for sufficiently small $r\ge0$.
\end{definition}
This definition extends the $(a,c)$-expansion property defined by \cite{wei2020theoretical}.
A $(C_1, C_2)$-expandable distribution is required to have a continuous support and avoid singularity, aligning with the majority of practical measures. Expandable distributions can be further restricted to have additional theoretical properties, such as $\epsilon$-smoothness, defined as follows:
\begin{definition}[$\epsilon-\text{smoothness}$]
\label{Def:SmoothnessProperty}
We say that a distribution $ P $ supported on a feature-label space $ \mathbb{R}^d \times \{\pm1\} $ satisfies the $\epsilon$-smoothness property if for all $ A \in \mathcal{A} $, there exists a constant $ C $ which depends only on $ P(A) $, where the class-conditional measures of $ P $, i.e., $ P^{+}(\boldsymbol{X}) $ and $ P^{-}(\boldsymbol{X}) $, satisfy the following for sufficiently small $ r \ge 0 $:
\begin{align}
\frac{1}{r} \left( \frac{P^s(\mathcal{N}_{\boldsymbol{\delta}}(A))}{P^s(A)} - 1 \right) \asymp C_A (1 \pm \epsilon), \quad \forall s \in \{\pm\},~ \boldsymbol{\delta} \in \mathcal{X},~ \|\boldsymbol{\delta}\|_2 \leq r.
\end{align}
\end{definition}
Another necessary definition ensures that a classifier family $\mathcal{H}$ is inherently capable of achieving a low classification error on a distribution, i.e., a low bias for $\mathcal{H}$ and simultaneously a small Bayes' error for $P$.
\begin{definition}[$\alpha-\text{separation}$]
\label{Def:SeperationProperty}
For $\alpha\ge0$, a distribution $P$ supported on feature-label set $\mathbb{R}^d\times \{\pm1\}$ has the $\alpha-\text{seperation}$ property with respect to a binary classification hypothesis set $\mathcal{H}$, if
\begin{align}
\inf_{h \in \mathcal{H}} P\left(yh\left(\boldsymbol{X}\right)\leq 0\right) \leq \alpha.
\end{align}
\end{definition}
We can now explain our proposed setting for the expandable distribution manifold $\mathcal{G}$, which consists of expandable distributions (in both senses of $(C_1, C_2)$-expansion and $\epsilon$-smoothness, which are slight variations of each other). The core idea is to use the dual formulation of \cite{blanchet2019quantifying} and \cite{gao2023distributionally} for a (non-manifold constrained) Wasserstein DRO, which can be stated as follows:
\begin{align}
\sup_{P \in \mathcal{B}_{\eta}(P_0)} \mathbb{E}_P\left[\ell(\theta; \boldsymbol{Z}) \right] = \inf_{\gamma \geq 0}\bigg\{\gamma \eta + \mathbb{E}_{P_0}\left[\sup_{\boldsymbol{Z}^\prime} \left\{\ell(\theta; \boldsymbol{Z}^\prime) - \gamma c(\boldsymbol{Z}, \boldsymbol{Z}^\prime)\right\} \right]\bigg\}.
\label{Eq:regularDualForDRL}
\end{align}
To add the manifold constraint, we propose restricting the space of adversarial examples $\boldsymbol{Z}'$ to be generated from a predetermined function class $\mathcal{F}$, where for $f \in \mathcal{F}$ we have $f: \mathcal{Z} \rightarrow \mathcal{Z}$. Each $f$ is a fixed mapping from the feature-label space to itself. By controlling the complexity of $\mathcal{F}$, one can limit the adversarial budget of the DRO and effectively simulate the condition of optimizing within a Wasserstein ball in addition to some kind of "manifold constraint." Mathematically, we can replace the original dual form with the following (more restricted) formulation:
\begin{equation}
\sup_{P\in\mathcal{B}_{\eta}\left(P_0\vert \mathcal{G}\right)} \mathbb{E}_p\left[\ell\left(\theta; \boldsymbol{Z}\right) \right]  = \inf_{\gamma\geq 0} \sup_{f \in \mathcal{F}}\bigg\{\gamma \eta + \mathbb{E}_{P_0}\left[\left\{\ell\left(\theta; f\left(\boldsymbol{Z}\right)\right) - \gamma c\left(\boldsymbol{Z}, f\left(\boldsymbol{Z}\right)\right)\right\} \right]\bigg\},
\label{Eq:ConstrainedDualForDRL}
\end{equation}
There exists a (potentially intricate) mathematical relationship between the distributional manifold $\mathcal{G}$ on the left-hand side and the mapping function class $\mathcal{F}$ on the right-hand side of the above formulation. Our main contribution in this part can be informally stated as follows:
\begin{itemize}
\item
We theoretically show that restricting the distributional manifold $\mathcal{G}$ to include only expandable distributions, as defined in Definitions \ref{Def:ExapansionProperty} and \ref{Def:SmoothnessProperty}, is \textit{equivalent} to restricting the dual optimization formulation such that the mapping class $\mathcal{F}$ consists only of "smooth" mappings.
\end{itemize}
Note that if we do not impose any constraints on $\mathcal{F}$, and $f$ could be any function, there is no difference between the quantities in Equations \eqref{Eq:regularDualForDRL} and \eqref{Eq:ConstrainedDualForDRL}.
\begin{theorem}[Transportability of $\epsilon$-Smooth Measures]
\label{Thm:functionClassConstraint}
For some $\epsilon>0$, let us consider two data distributions $ P_1 $ and $ P_2 $, both of which are $ \epsilon $-smooth according to Definition \ref{Def:SmoothnessProperty}. Let $ f^{+} $ and $ f^{-} $ represent the optimal transport (Monge) mappings between $ P^{+}_1 \rightarrow P^{+}_2 $ and $ P^{-}_1 \rightarrow P^{-}_2 $, respectively. These mappings are also known as push-forward functions, which transform one measure into another. Let $ J^{+} $ and $ J^{-} $ represent the respective $ d \times d $ Jacobian matrices of the mappings, where $ d = \mathrm{dim}(\mathcal{X}) $. Then, the eigenvalues of the Jacobian matrices satisfy the following conditions:
\begin{align}
1 - 2\epsilon \leq \mathsf{EIG}_i(J^{s}) \leq 1 + 2\epsilon, \quad \forall i \in [d],~s \in \{\pm1\}.
\end{align}
\end{theorem}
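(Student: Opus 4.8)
The plan is to combine two ingredients: the Brenier structure of the quadratic optimal transport map, and the near-isotropy of local mass growth that $\epsilon$-smoothness enforces. First I would record the structural fact that, since $f^{s}$ is the Monge map for the squared-Euclidean cost between $P_1^s$ and $P_2^s$, Brenier's theorem writes $f^s=\nabla\varphi^s$ for a convex potential $\varphi^s$; hence the Jacobian $J^s$ is symmetric positive semidefinite, its eigenvalues $\mathsf{EIG}_i(J^s)$ coincide with its singular values, and $\|J^s(\boldsymbol{X})u\|_2$ ranges over $\left[\min_i\mathsf{EIG}_i(J^s),\max_i\mathsf{EIG}_i(J^s)\right]$ as $u$ runs over unit vectors. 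It therefore suffices to sandwich $\|J^s u\|_2$ between $1-2\epsilon$ and $1+2\epsilon$ for every direction $u$ and every $s\in\{\pm1\}$.

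Next I would transport the thickening operation. Fix $s$ and a small set $A\in\mathcal{A}$ concentrated near a point $\boldsymbol{X}_0$. Because $f^s$ pushes $P_1^s$ forward to $P_2^s$, it preserves mass, so $P_2^s(f^s(A))=P_1^s(A)$ and $P_2^s(f^s(\mathcal{N}_{\boldsymbol{\delta}}(A)))=P_1^s(\mathcal{N}_{\boldsymbol{\delta}}(A))$. The key geometric step is the first-order identity $f^s(\mathcal{N}_{\boldsymbol{\delta}}(A))\approx\mathcal{N}_{J^s\boldsymbol{\delta}}(f^s(A))$: a point $\boldsymbol{X}+\alpha\boldsymbol{\delta}$ maps to $f^s(\boldsymbol{X})+\alpha J^s\boldsymbol{\delta}+o(\|\boldsymbol{\delta}\|)$, so the segment $\pm\boldsymbol{\delta}$ is carried to the segment $\pm J^s\boldsymbol{\delta}$ whenever $A$ is small enough that $J^s$ is essentially constant on it. Combining these yields the limiting identity $\frac{P_1^s(\mathcal{N}_{\boldsymbol{\delta}}(A))}{P_1^s(A)}=\frac{P_2^s(\mathcal{N}_{J^s\boldsymbol{\delta}}(f^s(A)))}{P_2^s(f^s(A))}$.

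I would then feed both ratios into Definition~\ref{Def:SmoothnessProperty}. Writing $\boldsymbol{\delta}=ru$ with $\|u\|_2=1$, the left ratio equals $1+rC_A(1\pm\epsilon)$, while the right ratio has thickening vector $J^s\boldsymbol{\delta}$ of length $r\|J^s u\|_2$, so it equals $1+r\|J^s u\|_2\,C_{f^s(A)}(1\pm\epsilon)$. Since transport preserves the conditional mass, $P_2^s(f^s(A))=P_1^s(A)$, forcing the two smoothness constants to agree, $C_A=C_{f^s(A)}$. Equating the expressions and cancelling $rC_A$ confines $\|J^s u\|_2$ to $\left[\tfrac{1-\epsilon}{1+\epsilon},\tfrac{1+\epsilon}{1-\epsilon}\right]$; since $\tfrac{1-\epsilon}{1+\epsilon}\ge 1-2\epsilon$ and $\tfrac{1+\epsilon}{1-\epsilon}=1+2\epsilon+\mathcal{O}(\epsilon^2)$, this gives $1-2\epsilon\le\|J^s u\|_2\le 1+2\epsilon$ to leading order for every unit $u$, which by the first step is precisely the claimed eigenvalue bound.

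The main obstacle I anticipate is the localization. The identity $f^s(\mathcal{N}_{\boldsymbol{\delta}}(A))\approx\mathcal{N}_{J^s\boldsymbol{\delta}}(f^s(A))$ is only first order, whereas the smoothness definition quantifies over sets with $\underline{a}\le P(A)\le\bar{a}$ that need not be small, so $J^s$ is not literally constant on them; making this rigorous requires passing to a shrinking family of sets around a point of differentiability of $f^s$ and controlling the $o(\|\boldsymbol{\delta}\|)$ remainder uniformly, together with an argument that the smoothness constant extends continuously to such sets. An alternative is to convert the boundary-growth statement into the pointwise Monge--Ampère identity $p_1^s=(p_2^s\circ f^s)\det J^s$ and read off the eigenvalue bounds from a directional (rather than volumetric) version of it. A second delicate point is matching the constants $C_A$ and $C_{f^s(A)}$ and interpreting the $\asymp$ in Definition~\ref{Def:SmoothnessProperty}, since the stated constant depends on the joint mass $P(A)$ while transport preserves the conditional mass; reconciling these (e.g.\ under balanced label priors) is where the bookkeeping must be done carefully.
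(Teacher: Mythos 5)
Your proposal is correct and follows essentially the same route as the paper's own proof: both arguments apply the $\epsilon$-smoothness ratios to a small set $A$ and its image under the transport map, use pushforward mass preservation to identify the smoothness constants, and exploit the first-order action of the Jacobian on $\boldsymbol{\delta}$-neighborhoods to trap each local stretch factor in $\left[\tfrac{1-\epsilon}{1+\epsilon},\tfrac{1+\epsilon}{1-\epsilon}\right]\approx\left[1-2\epsilon,\,1+2\epsilon\right]$. The paper executes this with a parallelepiped aligned to the eigendirections of $J$, pulling neighborhoods back as $\mathcal{N}_{\boldsymbol{\delta}/\lambda_i}(A)$ and absorbing the remainder by choosing $r=\Delta\epsilon^2$, while your Brenier symmetrization (making the eigenvalues real and equal to singular values) and your explicit flagging of the localization and $C_A$-versus-$C_{f^s(A)}$ bookkeeping are refinements of points the paper's proof silently assumes.
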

Proof is given in Appendix \ref{sec:AppProofs}. Essentially, the theorem states that each pair of $ \epsilon $-smooth class-conditional measures can be optimally transported into each other via highly-smooth mappings, where the Jacobian of the mapping resembles the identity matrix. At this point, we can present a theorem that provides an upper bound for the compatibility function between a distribution class $ \mathcal{G} $ with expansion properties and a hypothesis set of learnable binary classifiers:
\begin{theorem}
\label{thm:mainTHeoremForCDRL}
For $ C_1, C_2, \alpha, \epsilon \ge 0 $, consider a distribution manifold $ \mathcal{G} $ where its distributions satisfy the $(C_1, C_2)$-expansion, $\alpha$-separation with respect to a hypothesis set $ \mathcal{H} $, and $\epsilon$-smoothness properties as defined in Definitions \ref{Def:ExapansionProperty} through \ref{Def:SeperationProperty}. Hypothesis set $\mathcal{H}$ is general up to $\alpha$-separation property. For $ \lambda, \eta \ge 0 $ and $ T \in \mathbb{N} $, consider the GDA setting of Theorem \ref{thm:amirAsympBound} with a distributional sequence $ P_0, \ldots, P_T \in \mathcal{G} $ where the pairwise distance between consecutive measures satisfies $ \mathcal{W}^{1}_{2,\lambda}\left(P_i,P_{i+1}\right) \le \eta $. Moreover, make the following assumptions: i) assume all of the mass of $ P_0 $ falls inside a hypersphere with radius at most $R$, ii) assume $ \epsilon \le \frac{\eta}{14R} $, and iii) $ \lambda > \eta $. Then the compatibility function between $ \mathcal{G} $ and $ \mathcal{H} $ has the following upper bound:
\begin{align}
g_\lambda\left(\eta\right) 
\leq
\mathcal{O}\left(\left(1+C_1 \left(4R\epsilon + 2\eta\right)\right) \alpha\right).
\end{align}
\end{theorem}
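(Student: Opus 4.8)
The plan is to bound the manifold-constrained robust loss for the single classifier handed to us by the $\alpha$-separation property, and then use the expansion property to control how much the misclassified region can grow under any admissible adversary. Since the compatibility function is an infimum over $\theta$, exhibiting one good classifier suffices. Fix $P_0\in\mathcal{G}$ and let $h^\star\in\mathcal{H}$ achieve $P_0(yh^\star(\boldsymbol{X})\le 0)\le\alpha$ from Definition \ref{Def:SeperationProperty}. Writing $A^{+}=\{\boldsymbol{X}:h^\star(\boldsymbol{X})\le 0\}$ and $A^{-}=\{\boldsymbol{X}:h^\star(\boldsymbol{X})\ge 0\}$ for the two class-conditional misclassification regions, the non-robust error decomposes as $\tfrac12 P_0^{+}(A^{+})+\tfrac12 P_0^{-}(A^{-})\le\alpha$, and the goal reduces to bounding $\sup_{P\in\mathcal{B}_\eta(P_0\vert\mathcal{G})}\bigl[\tfrac12 P^{+}(A^{+})+\tfrac12 P^{-}(A^{-})\bigr]$.

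First I would reduce the worst-case measure $P\in\mathcal{B}_\eta(P_0\vert\mathcal{G})$ to a \emph{smooth, label-preserving} class-conditional transport of $P_0$. Because assumption (iii) gives $\lambda>\eta$, flipping labels is prohibitively expensive under the cost in \eqref{eq:def:wasserstein}, so the optimal coupling realizing $\mathcal{W}^1_{2,\lambda}(P,P_0)\le\eta$ keeps labels intact and splits into Monge maps $f^{+},f^{-}$ pushing $P_0^{\pm}$ onto $P^{\pm}$. Since both $P_0$ and $P$ lie on the $\epsilon$-smooth manifold $\mathcal{G}$, Theorem \ref{Thm:functionClassConstraint} applies and forces each $f^{s}$ to have Jacobian eigenvalues in $[1-2\epsilon,1+2\epsilon]$; as the Monge map is a gradient of a convex potential, its Jacobian is symmetric, so the displacement field $v^{s}=f^{s}-\mathrm{id}$ has Jacobian of operator norm at most $2\epsilon$ and is therefore $2\epsilon$-Lipschitz.

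The key step is upgrading the Wasserstein budget (an \emph{average} displacement bound) into a \emph{pointwise} one. The feature cost decomposes as $\tfrac12\mathbb{E}_{P_0^{+}}\|v^{+}\|_2+\tfrac12\mathbb{E}_{P_0^{-}}\|v^{-}\|_2\le\eta$, so each class-conditional average displacement is at most $2\eta$, and in particular each support contains a point where $\|v^{s}\|_2\le 2\eta$. Assumption (i) confines all mass to a ball of radius $R$, so any two support points lie within distance $2R$, and the $2\epsilon$-Lipschitz property propagates the base point to give $\|v^{s}(\boldsymbol{X})\|_2\le 2\eta+4\epsilon R=:r$ uniformly. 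Assumption (ii), $\epsilon\le\eta/(14R)$, keeps $r\le\tfrac{16}{7}\eta$ inside the small-$r$ regime where the expansion property is valid. With the pointwise bound in hand, any point sent into $A^{s}$ by $f^{s}$ starts within distance $r$ of $A^{s}$, yielding the containment $(f^{s})^{-1}(A^{s})\subseteq\mathcal{N}_r(A^{s})$.

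Finally I would invoke the $(C_1,C_2)$-expansion of Definition \ref{Def:ExapansionProperty}: $P_0^{s}(\mathcal{N}_r(A^{s}))\le(1+C_1 r)\,P_0^{s}(A^{s})$. Pushing forward, $P^{s}(A^{s})=P_0^{s}\bigl((f^{s})^{-1}(A^{s})\bigr)\le(1+C_1 r)\,P_0^{s}(A^{s})$, and summing over the two classes with prior $\tfrac12$ collapses the right-hand side back to the non-robust error, giving $\tfrac12 P^{+}(A^{+})+\tfrac12 P^{-}(A^{-})\le(1+C_1 r)\,\alpha=(1+C_1(4R\epsilon+2\eta))\,\alpha$. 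Taking the supremum over $P$ and then the infimum over $\theta$ delivers the stated $\mathcal{O}\bigl((1+C_1(4R\epsilon+2\eta))\alpha\bigr)$ bound. I expect the main obstacle to be the third step: rigorously converting the $\mathcal{W}^1$ average budget into a pointwise displacement while reconciling it with the quadratic-cost map supplied by Theorem \ref{Thm:functionClassConstraint}, together with the boundary case where $P_0^{s}(A^{s})$ dips below the lower threshold $\underline{a}$ of Definition \ref{Def:ExapansionProperty}, which I would absorb into the $\mathcal{O}(\cdot)$ constant.
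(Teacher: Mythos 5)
Your proposal follows essentially the same route as the paper's own proof: reduce the restricted Wasserstein ball to smooth push-forward maps via Theorem \ref{Thm:functionClassConstraint}, use $\lambda>\eta$ to keep the transport label-preserving (the paper routes this through Lemma \ref{lemma:lowerBoundOnWDofGaussianGenerativeModels2}), upgrade the average Wasserstein budget to the uniform displacement bound $2\eta+4R\epsilon$ using the radius-$R$ support together with the near-identity Jacobian (the paper's Lemma \ref{lemma:movingAllPoints}, which is exactly your $2\epsilon$-Lipschitz displacement-field argument), and conclude with the $(C_1,C_2)$-expansion property to obtain $\left(1+C_1\left(4R\epsilon+2\eta\right)\right)\alpha$. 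Your per-class decomposition into $A^{+},A^{-}$, the Brenier-symmetry justification for passing from eigenvalue bounds to an operator-norm bound, and the flag about $P_0^{s}(A^{s})$ possibly falling below the threshold $\underline{a}$ of Definition \ref{Def:ExapansionProperty} are points of care that the paper glosses over, but they are refinements of the same argument rather than a different one.
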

Proof can be found in Appendix \ref{sec:AppProofs}. Based on the bound on the compatibility function, using Theorem \ref{thm:amirAsympBound}, it can be easily shown that a modified version of Algorithm $\mathsf{DRODA}$, presented in Appendix \ref{sec:AppProofs} in Algorithm \ref{alg:iterativeAlgForF}, guarantees a generalization error of at most $\mathcal{O}(\alpha + \eta)$ on the last (most recent) distribution $P_T$, which is irrespective of $T$, thereby entirely eliminating error propagation.

\subsection{Non-Asymptotic Analysis of Manifold-Constrained DRO on Expandable Distributional Manifold}
\label{sec:nonAsymptoticAnalysis}
This section explores the non-asymptotic analysis of manifold-constrained DRO on the expandable distribution manifold. We assume empirical estimates from $ P_i $'s, where each empirical measure $ \widehat{P}_i $ is obtained via $ n_i $ i.i.d. samples from $ P_i $. For simplicity in our results, we assume $ n_i = n $ for all $ i \in \{0,1,\ldots,T\} $. First, let us redefine the loss in its dual format:
\begin{equation}
R\left(\theta;P_0\right) = \sup_{f \in \mathcal{F}}\mathbb{E}_{P_0}\left[\left\{\ell\left(\theta; f\left(\boldsymbol{Z}\right)\right) - \gamma c\left(\boldsymbol{Z}, f\left(\boldsymbol{Z}\right)\right)\right\} \right].
\end{equation}
Assuming $ R(\theta;\widehat{P}_0) $ is the empirical version of $ R\left(\theta;P_0\right) $, let the minimizer of $ R(\theta;\widehat{P}_0) $ be denoted as $\widehat{\theta} $. For simplicity, we only consider the class of linear binary classifiers and Gaussian mixture models. However, the main distinction between the setting of Theorem \ref{thm:upperboundforg_lambdaforGaussianAndLinearClassifierNonAsymptotic} and this section lies in the fact that we assume no prior knowledge regarding the Gaussian assumption; only the expandable distribution assumption is considered. In other words, there are no implicit or explicit projection onto the manifold of Gaussian mixture models any more. At this point, we present our main theorems, which provide the generalization bound for $\mathsf{DRODA}$ algorithm:
\begin{theorem}[Generalization Bound for $\mathsf{DRODA}$ in the Non-asymptotic Regime]
\label{thm:firstGenBound}
Consider the class of linear classifiers as $\Theta$, and the zero-one loss function as $\ell$. The rest of the setting is similar to Theorem \ref{thm:mainTHeoremForCDRL}. Assume we limit $\mathcal{F}$ to the displacement functions and let $P_0$ be a Gaussian generative model with mean $\boldsymbol{\mu}$ and covariance matrix $\sigma^2 I_d$ as defined in \eqref{eq:gaussianGenerativeModel}, then for $\epsilon,\delta>0$ we have the following generalization bound with probability at least $1-\delta$:
\begin{equation}
R\left(\widehat{\theta};P_0\right) \leq \min_{\theta \in \Theta}R\left(\theta;P_0\right) + 64 \sqrt{\frac{d}{n}\log \left(\frac{R}{\delta} \sqrt{\frac{n^3}{d}}\right)},
\end{equation}
where $n$ is the number of i.i.d. samples from $P_0$ and $d$ is the dimension of the feature space.
\end{theorem}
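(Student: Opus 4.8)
The plan is to run the standard empirical-risk-minimization excess-risk argument, reducing the claim to a one-sided uniform-convergence statement for the fixed-$\gamma$ dual objective $R(\theta;\cdot)$ over the linear hypothesis class $\Theta$. Writing $\theta^\star\in\argmin_{\theta\in\Theta}R(\theta;P_0)$, I decompose
\[
R(\widehat\theta;P_0)-R(\theta^\star;P_0)=\big[R(\widehat\theta;P_0)-R(\widehat\theta;\widehat P_0)\big]+\big[R(\widehat\theta;\widehat P_0)-R(\theta^\star;\widehat P_0)\big]+\big[R(\theta^\star;\widehat P_0)-R(\theta^\star;P_0)\big].
\]
The middle bracket is $\le 0$ because $\widehat\theta$ minimizes $R(\cdot;\widehat P_0)$, so the excess risk is at most $2\sup_{\theta\in\Theta}\big|R(\theta;P_0)-R(\theta;\widehat P_0)\big|$. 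It then suffices to bound this supremum, with probability $1-\delta$, by half of the claimed additive term.

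For the uniform bound I would first unfold the definition of $R$. Since $\mathcal{F}$ is restricted to displacement maps $f_{\boldsymbol\delta}(\boldsymbol X,y)=(\boldsymbol X+\boldsymbol\delta,y)$ and the adversarial $\sup_f$ sits \emph{outside} the expectation in \eqref{Eq:ConstrainedDualForDRL}, the gap telescopes through a joint supremum, $\big|R(\theta;P_0)-R(\theta;\widehat P_0)\big|\le\sup_{\boldsymbol\delta}\big|(\mathbb E_{P_0}-\mathbb E_{\widehat P_0})[\psi_{\theta,\boldsymbol\delta}]\big|$, where $\psi_{\theta,\boldsymbol\delta}(\boldsymbol X,y)=\ell(\theta;\boldsymbol X+\boldsymbol\delta,y)-\gamma\|\boldsymbol\delta\|_2$. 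The crucial structural observation is twofold: for displacement maps the transport cost $\gamma\|\boldsymbol\delta\|_2$ is constant in $\boldsymbol Z$, so it cancels in the centred difference $\mathbb E_{P_0}-\mathbb E_{\widehat P_0}$; and for a linear classifier the zero-one term collapses to an affine threshold $\mathbbm 1\{y\,\theta^\top\boldsymbol X\le -y\,\theta^\top\boldsymbol\delta\}$. Hence the doubly-indexed family $\{\psi_{\theta,\boldsymbol\delta}\}$ reduces, \emph{for the purpose of the generalization gap}, to the class of halfspace indicators parametrised by a direction and a scalar offset, whose effective dimension is $O(d)$. I would also record the a~priori bound $\|\boldsymbol\delta\|_2\le 1/\gamma$, since any larger displacement makes $\psi_{\theta,\boldsymbol\delta}$ strictly below its value at $\boldsymbol\delta=0$ and is never selected by the inner supremum.

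To finish I would use a covering-plus-union-bound argument, and here the Gaussianity of $P_0$ (assumed in the theorem) does the real work. Build an $\epsilon'$-net over the unit sphere of directions, of cardinality $(3/\epsilon')^d$, together with the bounded offset range. For a fixed net point the summand varies within an interval of length one, so Hoeffding gives deviation $\sqrt{\log(2/\delta')/(2n)}$, and a union bound with $\delta'=\delta/N$ yields $\sup_{\text{net}}|R-\widehat R|\lesssim\sqrt{(d\log(1/\epsilon')+\log(1/\delta))/n}$. The discontinuity of the indicator across net-neighbours is controlled by the bounded Gaussian density: the mass of $P_0$ in a slab of width $w$ around any hyperplane is $O(w/\sigma)$, and since $\|\boldsymbol X\|_2\le R$ on (essentially all of) the support, neighbouring directions induce slabs of width $O(\epsilon' R)$; this both tames the population discretization error and, after a further union bound, guarantees that whp no empirical sample falls into such a slab. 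Choosing the resolution $\epsilon'$ polynomially small in $n$ (of order $R^{-1}\sqrt{d/n^{3}}$) makes the discretization term negligible relative to the statistical term while producing exactly the logarithmic factor $\log\!\big(\tfrac{R}{\delta}\sqrt{n^{3}/d}\big)$; the universal constant $64$ absorbs the factors from Hoeffding, the net cardinality, and the factor $2$ from the excess-risk decomposition.

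The main obstacle I anticipate is taming a uniform deviation of a \emph{non-Lipschitz} (zero-one) adversarial loss class: one must simultaneously show that passing to the dual and taking the inner supremum over $\mathcal{F}$ does not inflate the effective dimension beyond $O(d)$, and control the threshold discontinuities that prevent a naive $L^\infty$ cover of the real parameters. The resolution hinges on the two facts specific to this setting — that the displacement cost is data-independent and cancels in the centred empirical process, and that for linear $h_\theta$ the worst-case displacement only shifts the decision threshold — together with the Gaussian boundary-mass bound, which is precisely where the radius $R$ enters the estimate. Making these reductions rigorous, including measurability of the suprema and the a~priori restriction $\|\boldsymbol\delta\|_2\le1/\gamma$ on the adversary, is where the technical effort lies; once the class is certified to be an $O(d)$-dimensional halfspace family with Gaussian-controlled boundaries, the concentration step is routine.
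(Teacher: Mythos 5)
Your proof follows essentially the same route as the paper's: reduce the claim to uniform convergence of $R(\theta;\cdot)$ over $\Theta$ (the paper leaves the ERM decomposition implicit; you make it explicit), exploit the displacement/linear structure to collapse the adversarial loss to threshold indicators $\mathbbm{1}\{y\,\theta^\top\boldsymbol{X}\le t\}$ with a bounded scalar offset (your a priori bound $\|\boldsymbol{\delta}\|_2\le 1/\gamma$ plays the same role as the paper's remark that for $\gamma$ above a density bound the adversary moves no point), and finish with an $\epsilon'$-net over directions at resolution $\epsilon'\approx R^{-1}\sqrt{d/n^3}$, which is exactly where the paper's factor $\log\bigl(\tfrac{R}{\delta}\sqrt{n^3/d}\bigr)$ comes from. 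The one methodological difference is how the inner supremum over the offset is handled for a fixed direction: the paper reduces to $\sup_{t\ge0}(\widehat{F}_\theta(t)-\gamma t)$ and invokes the DKW inequality for the empirical CDF, whereas you place a net on the offset range and apply Hoeffding pointwise; either works.

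The genuine gap is in your empirical discretization step. You assert that, ``after a further union bound,'' with high probability no sample falls in a slab of width $O(\epsilon'R)$ around any net hyperplane. This cannot hold: there are $N=(3/\epsilon')^d$ net slabs, each of $P_0$-mass $\Theta(\epsilon'R/\sigma)$, so the expected number of sample--slab incidences is of order $nN\epsilon'R/\sigma\gg 1$; already for $d\ge2$ some sample lies in some net slab with probability close to one, and the union bound you invoke produces a failure probability far exceeding $1$. What you actually need is weaker and true: not that net slabs are empirically empty, but that the \emph{empirical mass} of each net slab is small. That follows from a Bernstein bound per slab plus the same union bound over the $N$ slabs, giving empirical slab mass $O(\epsilon'R/\sigma)+O\bigl(\sqrt{\epsilon'R\log(N/\delta)/(\sigma n)}\bigr)+O(\log(N/\delta)/n)$, which is dominated by your main statistical term. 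Cleaner still is the device the paper uses, which needs no slab-mass control at all: if $\|\theta-\theta'\|\le\epsilon'$ and all samples lie in the $R$-ball, every margin $\langle\theta,x_i\rangle$ moves by at most $\epsilon'R$, so one has the deterministic monotonicity relation $\widehat{F}_\theta(t)\le\widehat{F}_{\theta'}(t+\epsilon'R)$ and hence the empirical sup-objectives of neighbouring directions differ by at most $\gamma\epsilon'R$ --- the direction perturbation is absorbed as a shift of the threshold variable, over which your fixed-direction deviation bound is already uniform. With that replacement your argument closes and matches the paper's bound.
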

Proof can be found in Appendix \ref{sec:AppProofs}. As demonstrated, not only is error propagation eliminated, but the generalization error also decreases with increasing $ n $. 


The polynomial-time convergence of Wasserstein-based DRO programs have been extensively studied (see \cite{sinha2017certifying}). Given sufficient assumptions on the smoothness of our loss functions and transportation costs in the Wasserstein metric, the convergence rate of $\mathcal{O}\left(1/\varepsilon^2\right)$ iterations (for any $\varepsilon>0$) in order to get to the $ \varepsilon $-proximity of the optimal solution is already guaranteed.
\section{Experimental Results}
\label{sec:Experiments}
In this section we present our experimental results. It should be noted that several existing works have already experimentally validated the first part of the paper which concerns Gaussian mixture models. Hence, our contributions for those parts are mainly theoretical. In this section, we mainly focus on the second part of our contributions, i.e., Section \ref{Sec:ExpandableDistributionAssyptoticGuarantee}.

\begin{figure}[t]
\includegraphics[width=8cm]{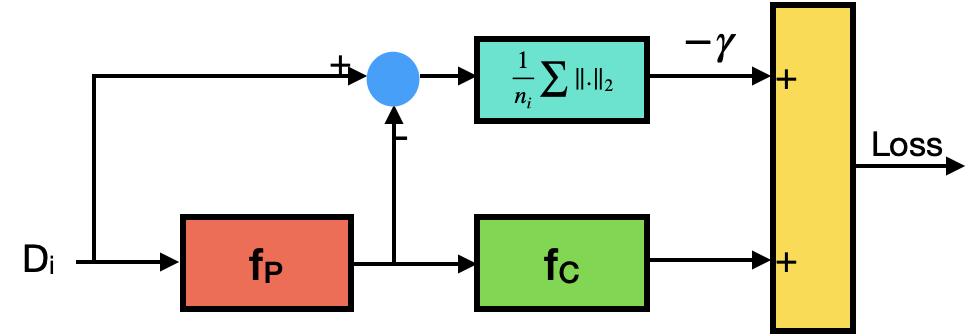}
\centering
\caption{A schematic view of the proposed procedure for our manifold-constrained DRO. A restricted adversarial block, modeled by $f_P$, tries to perturb the source distribution at each step $i$ to prepare the algorithm for the worst possible distribution in step $i+1$. Meanwhile, a classifier $f_C$ tries to learn a classifier based on the perturbed distribution.}
\label{fig:method}
\end{figure}

In figure \ref{fig:method} we illustrate the workings of our method to generate adaptive mappings between consecutive distributions and the following projection onto the manifold, which is mathematically modeled by the function space $\mathcal{F}$ in our formulations. As depicted, at the $i$th step, we perturb the data samples $\left(\boldsymbol{X}_j, y_j\right), j\in[n_i]$ from $P_i$ using a parametric function class, denoted as $f_p$, and penalize the extent of perturbation using the following term
\begin{equation}
    \frac{\gamma}{n_i}\sum_{j=1}^{n_i}{\|f_p(\boldsymbol{X}_j) - \boldsymbol{X}_j\|_2}.
\end{equation}
These perturbed samples are then classified using a classifier. Let $L_C\left(f_P;\boldsymbol{X}_1,\cdots, \boldsymbol{X}_{n_i}\right)$ represent the cross-entropy loss of the classifier on the perturbed samples. Our objective is to solve the following program:
\begin{equation}
\min_{f_C\in\mathcal{C}}~\max_{f_P\in\mathcal{P}}~
\left\{
L_C\left(f_p;\boldsymbol{X}_1,\cdots, \boldsymbol{X}_{n_i}\right) - \frac{\gamma}{n_i}\sum_{j=1}^{n_i}{\|f_P(\boldsymbol{X}_j) - \boldsymbol{X}_j\|_2}
\right\},
\label{eq:exp:min-max}
\end{equation}
which is a minimization with respect to the parametric classifier family $f_C\in\mathcal{C}$, while simultaneously maximizing it with respect to the parametric family of generator function $f_P\in\mathcal{P}$. In our experiments, we employed a two-layer CNN with a $7 \times 7$ kernel in the first layer and a $5 \times 5$ kernel in the second layer for $\mathcal{P}$. We also utilized an affine grid and grid sample function in PyTorch, following the approach introduced in \cite{jaderberg2015spatial}. For the classifier family $\mathcal{C}$, we used a three-layer CNN with max pooling and a fully connected layer, applying dropout with a rate of $0.5$ in the fully connected layer. A standard Stochastic Gradient Descent (SGD) procedure has been used for the min-max optimization procedure described in \eqref{eq:exp:min-max}.

We implemented this method on the "Rotating MNIST" dataset, similar to \cite{kumar2020understanding}. In particular, we sampled 6 batches, each with a size of 4200, without replacement from the MNIST dataset, and labeled these batches as $D_0, D_1, \cdots, D_4$, which represent the datasets obtained from $P_0, P_1, \cdots, P_4$, respectively. The images in dataset $D_i$ were then rotated by $i\times 15$ degrees, with $D_0$ serving as the source dataset and $D_4$ as the target dataset. We provided the source dataset with labels and left $D_1, D_2, D_3$, and $D_4$ unlabeled for our algorithm. We then tested the accuracy of $\theta_0^*, \cdots, \theta_3^*$—the outputs of our algorithm at each step—on $D_1, D_2, D_3$, and $D_4$, respectively.

For comparison, we implemented the GDA method exactly as described in \cite{kumar2020understanding}. We compared our method to the GDA and detailed the results in Figure \ref{fig:comparisonGDA}. Additionally, we reported the accuracy of $\theta_0^*$ on $D_0$ as an example of in-domain accuracy. Our results show that our method outperforms GDA by a significant margin of $8$ percent in the last domain $D_4$.

\begin{figure}[t]
\includegraphics[width=8cm]{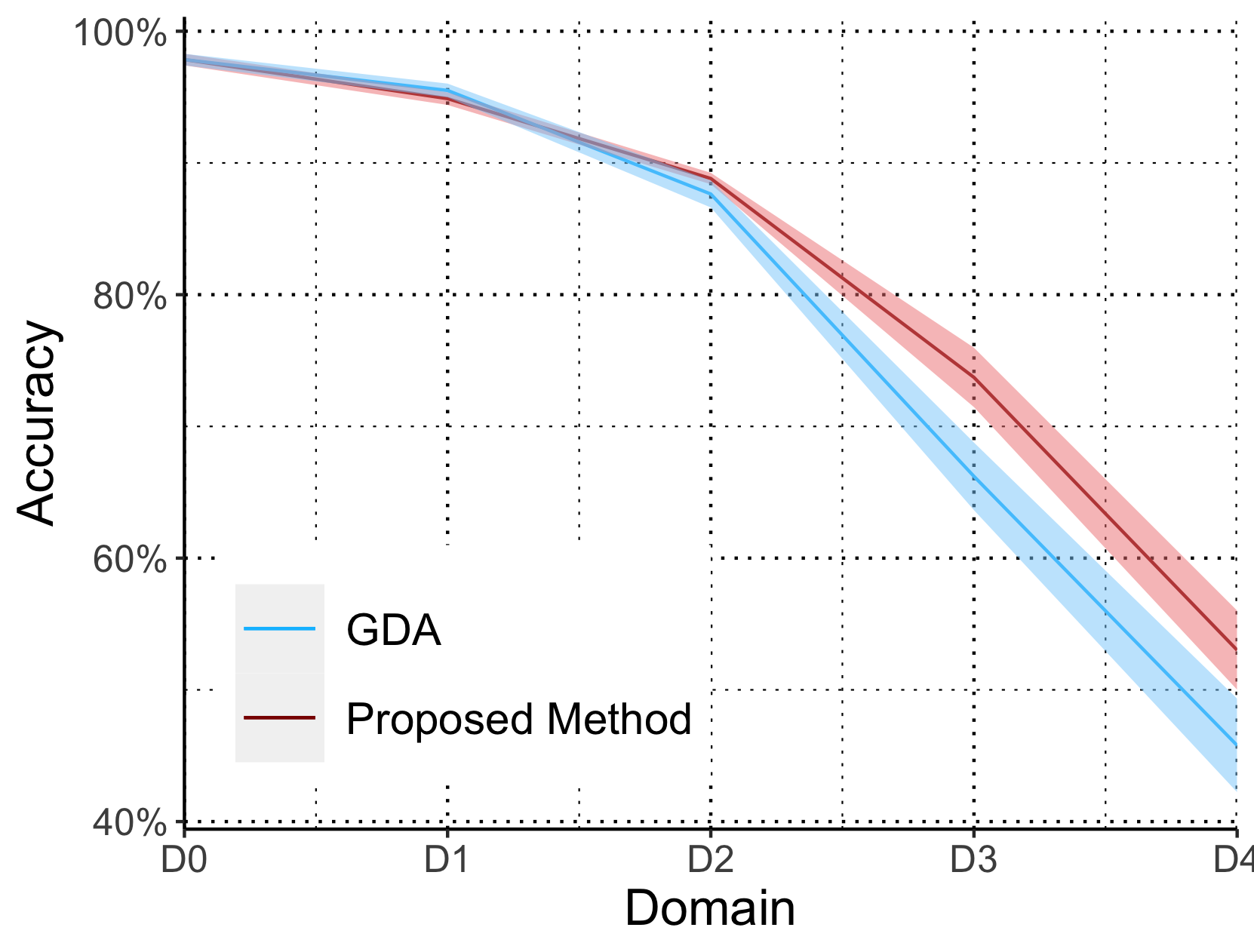}
\centering
\caption{Comparison of the performance of our proposed method with the
GDA \cite{kumar2020understanding} on rotating MNIST dataset.}
\label{fig:comparisonGDA}
\end{figure}

\section{Conclusions}
\label{sec:Conclusions}

In conclusion, we have introduced a novel approach to gradual domain adaptation leveraging distributionally robust optimization (DRO). Our methodology provides theoretical guarantees on model adaptation across successive datasets by bounding the Wasserstein distance between consecutive distributions and ensuring that distributions lie on a manifold with favorable properties. Through theoretical analysis and experimental validation, we have demonstrated the efficacy of our approach in controlling error propagation and improving generalization across domains. A key tool for achieving this is our newly introduced complexity measure, termed the "compatibility function."

We have investigated two theoretical settings: i) a two-component Gaussian mixture model, a well-known theoretical benchmark, and ii) a more general class of distributions termed "expandable" distributions, along with general expressive (low-bias) classifier families. Theoretical analyses show that our method completely eliminates error propagation in both scenarios, and also in both asymptotic and non-asymptotic cases. These findings contribute to a better understanding of gradual domain adaptation and provide practical insights for developing robust machine learning models in real-world situations.



\bibliographystyle{alpha}
\bibliography{ref}

\appendix

\section{Proofs for the Theorems and Corollaries}\label{sec:AppProofs}

\begin{proof}[Proof of Theorem \ref{thm:amirAsympBound}]
The proof follows an inductive approach, relying on the steps outlined in Algorithm \ref{alg:iterativeAlg} ($\mathsf{DRODA}$). To enhance clarity, we initially delve into the \emph{step} component of the induction. Subsequently, we proceed to establish the \emph{base} case.

\paragraph{Step:}
For all $j=1,\ldots,i$,
assume $h_{\theta^*_{j-1}}$ guarantees an error rate of at most $\Delta^*_{j-1}$ on $P_j$. Then, our aim in this part of the proof is to show that:
\begin{itemize}
\item  $h_{\theta^*_{i}}$ also guarantees an error rate of at most $\Delta^*_{i}$ on $P_{i+1}$.
\item We have $\Delta^*_i\leq g_{\lambda}\left(2\lambda \Delta^*_{i-1}+\eta\right)$.
\end{itemize}
Recall that for all $i=0,1,\ldots,T$, the marginals of $P_i$ and $\widehat{P}_i$ on $\mathcal{X}$ are the same by definition. Hence, we have
$$
\mathcal{W}^q_{p}\left(\widehat{P}_{i_{\mathcal{X}}},P_{i_{\mathcal{X}}}\right)=0.
$$
The conditional distribution of label $y$ given feature vector $\boldsymbol{X}$ according to $P_i$ is denoted as $P_i\left(\cdot\vert\boldsymbol{X}\right)$. However, again according to the definition in $\mathsf{DRODA}$ the conditional distribution of labels given a feature vector $\boldsymbol{X}$ for $\widehat{P}_i$ is 
$$\mathbbm{1}\left(\cdot=h_{\theta^*{i-1}}\left(\boldsymbol{X}\right)\right).
$$
Let us define $Q$ as the set of all couplings between the two above-mentioned conditionals, i.e.,
$P_{i}\left(\cdot\vert\boldsymbol{X}\right)$ and $\mathbbm{1}\left(\cdot=h_{\theta^*_{i-1}}\left(\boldsymbol{X}\right)\right)$.
In this regard, we have
\begin{align}
\mathcal{W}^q_{p,\lambda}\left(\widehat{P}_i,P_i\right)
\leq
\mathcal{W}^q_{p}\left(\widehat{P}_{i_{\mathcal{X}}},P_{i_{\mathcal{X}}}\right)
+
\lambda
\inf_{\mu\in\mathcal{Q}}
\mathbb{E}_{P_{i_{\mathcal{X}}}}
\mathbb{E}_{\mu}\left[
\mathbbm{1}\left(y\neq y'\right)
\vert
\boldsymbol{X}
\right]
\leq\lambda \Delta^*_{i-1}.
\end{align}
Due to the triangle inequality for Wasserstein metrics, we have 
\begin{align}
\mathcal{W}^q_{p,\lambda}\left(\widehat{P}_i,P_{i+1}\right)
\leq
\mathcal{W}^q_{p,\lambda}\left(\widehat{P}_i,P_{i}\right)+
\mathcal{W}^q_{p,\lambda}\left(P_i,P_{i+1}\right)
\leq \lambda\Delta^*_{i-1}+\eta.
\end{align}
Consequently, having defined $\varepsilon_i\triangleq \lambda\Delta^*_{i-1}+\eta$ as in the algorithm, and noting the fact that due to the theorem's assumptions we have $P_{i+1}\in\mathcal{G}$, guarantees that $P_{i+1}\in\mathcal{B}_{\varepsilon_i}\left(\widehat{P}_i\vert\mathcal{G}\right)$. Therefore, we have
\begin{align}
\mathbb{E}_{P_{i+1}}\left[
\ell\left(y,h_{\theta^*_i}\left(\boldsymbol{X}\right)\right)
\right]
\leq \Delta^*_i.
\end{align}
On the other hand, we have the following useful inequality for the Wasserstein balls centered on $\widehat{P}_i$ and $P_i$, respectively:
\begin{align}
\mathcal{B}_{\lambda\Delta^*_{i-1}+\eta}\left(\widehat{P}_i\vert\mathcal{G}\right)
\subseteq
\mathcal{B}_{2\lambda\Delta^*_{i-1}+\eta}\left(P_i
\vert\mathcal{G}
\right),
\end{align}
which again directly results from triangle inequality. Therefore, we have
\begin{align}
\inf_{\theta\in\Theta}
\sup_{\mathcal{B}_{\lambda\Delta^*_{i-1}+\eta}
\left(\widehat{P}_i\vert\mathcal{G}\right)}&
\mathbb{E}_p\left[
\ell\left(y,h_{\theta}\left(\boldsymbol{X}\right)\right)
\right]
\nonumber\\
&\leq
\inf_{\theta\in\Theta}
\sup_{\mathcal{B}_{2\lambda\Delta^*_{i-1}+\eta}
\left(P_i\vert\mathcal{G}\right)}
\mathbb{E}_p\left[
\ell\left(y,h_{\theta}\left(\boldsymbol{X}\right)\right)
\right]
\nonumber\\
&\leq
g_{\lambda}\left(2\lambda \Delta^*_{i-1}+\eta\right),
\end{align}
and thus $\Delta^*_i\leq g_{\lambda}\left(2\lambda\Delta^*_{i-1}+\eta\right)$ which completes the \emph{step} part of the induction.

\paragraph{Base:}
After the initialization step, $\theta^*_0$ represents a robust classifier that is guaranteed to have an expected error rate of $\Delta^*_0\leq g_{\lambda}\left(\eta\right)$ on all probability measures inside a Wasserstein ball of radius $\eta$ centered on $P_0$. This also includes $P_1$, since we have 
$$
\mathcal{W}^q_{p,\lambda}\left(P_1,P_0\right)\leq \eta.
$$
According to $\mathsf{DRODA}$, $\widehat{P}_1$ denotes a measure supported over $\mathcal{Z}$ whose marginal on $\mathcal{X}$ is the same as that of $P_1$. However, and similar to the arguments in the previous part of the proof, the conditional $P_{1_{\mathcal{Y}\vert\mathcal{X}}}$ has a total variation distance of at most $\lambda\Delta^*_0$ from that of $\widehat{P}_1$. Recall that the feature-conditioned distribution for labels in $\widehat{P}_1$ is a deterministic rule represented by $h_{\theta^*_0}:\mathcal{X}\rightarrow\mathcal{Y}$. In mathematical terms, the error rate on $P_1$ is guaranteed to satisfy the following upper-bound:
\begin{align}
\mathbb{E}_{P_1}\left[
\ell\left(y,h_{\theta^*_0}\left(\boldsymbol{X}\right)\right)
\right]
\leq \Delta^*_0\leq g_{\lambda}\left(\eta\right).
\end{align}
which completes the \emph{base} part.

\vspace*{2mm}
\hfill{\bf {End of induction}}
\vspace*{2mm}

Combining the base with the step, one can conclude that:
\begin{align}
\inf_{\theta\in\Theta}\mathbb{E}_{P_T}
\left[
\ell\left(y,h_{\theta}\left(\boldsymbol{X}\right)\right)
\right]
&\leq
g\left(2\lambda \Delta^*_{T-1}+\eta\right),
\nonumber\\
\Delta^*_{T-1}
&\leq
g\left(2\lambda \Delta^*_{T-2}+\eta\right),
\nonumber\\
\vdots
\nonumber\\
\Delta^*_0
&\leq
\inf_{\theta\in\Theta}
\sup_{P\in\mathcal{B}_{\eta}\left(
P_0\vert\mathcal{G}
\right)}
\mathbb{E}_{P}\left[
\ell\left(y,h_{\theta}\left(\boldsymbol{X}\right)
\right)
\right].
\end{align}
Additionally, $g_{\lambda}\left(\cdot\right)$ is an increasing function which directly results from its definition according to Definition \ref{def:compatibility}.
This completes the whole proof.
\end{proof}


\begin{proof}[Proof of Corollary \ref{corl:goodlambda}]
Recall that $\alpha$ represents a value independent of $\eta$, which in fact indicates the loss of the best standard (non-robust) classifier. In general, assume we have
$$
g_{\lambda}\left(\eta\right)\leq \beta\eta+\alpha,
$$
for any fixed $\alpha,\beta\ge0$. Then, it can be simply seen that
we have
\begin{align}
g_{\lambda}\left(2\lambda g_{\lambda}(\eta) + \eta\right) 
&\leq 
g_{\lambda}\left(2\lambda\beta\eta + \eta + 2\lambda\alpha\right)
\nonumber\\
&=
g_{\lambda}\left(
\left(1+2\lambda\beta\right)
\eta +2\lambda\alpha
\right)
\nonumber\\
&\leq
\beta\left(1+2\lambda\beta\right)\eta+\left(1+2\lambda\beta\right)\alpha.
\end{align}
By induction, and assuming $2\lambda\beta<1$, we have
\begin{align}
\left[g_{\lambda}\left(2\lambda\left(\cdot\right)+\eta\right)\right]^{{\bigcirc}T}\left(
\eta
\right)
&\leq
\left(\sum_{i=0}^{T-1}
\left(2\lambda\beta\right)^i
\right)\left(\beta\eta+\alpha\right)
\nonumber\\
&=
\frac{1-\left(2\lambda\beta\right)^T}{1-2\lambda\beta}
\left(\beta\eta+\alpha\right)
\nonumber\\
&\leq
\frac{\beta\eta+\alpha}{1-2\lambda\beta}.
\end{align}
Substituting with $\beta=1/\left(3\lambda\right)$, and considering $\eta$ as a bound on the distance between consecutive distribution pairs, the result of Theorem \ref{thm:amirAsympBound} gives us the inequality and completes the proof.
\end{proof}


\begin{proof}[Proof of Theorem \ref{thm:upperboundforg_lambdaforGaussianAndLinearClassifier}]
    To prove this theorem, we first need to determine the restricted Wasserstein ball for any distribution in this class. The following lemma provides a super-set for this ball:
\begin{lemma}
Consider a distribution $P_0 \in \mathcal{G}_g$ with parameter $\boldsymbol{\mu}_0$. Based on Definition \ref{Def:RestrictedWassersteinBall}, we have
\begin{equation}
\mathcal{B}_{\eta}\left(P_0 \vert \mathcal{G}_g\right) \subseteq \left\{P_{\boldsymbol{\mu}} : \|\boldsymbol{\mu} - \boldsymbol{\mu}_0\|_2 \leq 2\eta \lor \|\boldsymbol{\mu} + \boldsymbol{\mu}_0\|_2 \leq 2\eta \right\},
\end{equation}
where $P_{\boldsymbol{\mu}}$ is a Gaussian generative model with parameter $\boldsymbol{\mu}$.
\label{lemma:lowerBoundOnWDofGaussianGenerativeModels1}
\end{lemma}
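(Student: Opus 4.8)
The statement is really a lower bound on the ground-metric Wasserstein distance written in contrapositive form. Since $\mathcal{B}_\eta\!\left(P_0\mid\mathcal{G}_g\right)$ consists exactly of those $P_{\boldsymbol{\mu}}\in\mathcal{G}_g$ with $\mathcal{W}^1_{2,\lambda}\!\left(P_{\boldsymbol{\mu}},P_{\boldsymbol{\mu}_0}\right)\le\eta$, it suffices to prove the dimension-free lower bound
\[
\mathcal{W}^1_{2,\lambda}\!\left(P_{\boldsymbol{\mu}},P_{\boldsymbol{\mu}_0}\right)\ \ge\ \tfrac12\min\!\bigl(\|\boldsymbol{\mu}-\boldsymbol{\mu}_0\|_2,\ \|\boldsymbol{\mu}+\boldsymbol{\mu}_0\|_2\bigr),
\]
which immediately gives $\min(\|\boldsymbol{\mu}-\boldsymbol{\mu}_0\|_2,\|\boldsymbol{\mu}+\boldsymbol{\mu}_0\|_2)\le 2\eta$ and hence the claimed inclusion. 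The plan is to fix an arbitrary coupling $\mu\in\mathcal{C}\!\left(P_{\boldsymbol{\mu}},P_{\boldsymbol{\mu}_0}\right)$, lower-bound its cost $\mathbb{E}_\mu\|\boldsymbol{X}-\boldsymbol{X}'\|_2+\lambda\,\mathbb{P}_\mu(y\neq y')$, and then take the infimum over couplings.

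The workhorse is the signed first-moment identity $\mathbb{E}_{P_{\boldsymbol{\mu}}}[y\boldsymbol{X}]=\boldsymbol{\mu}$ (and likewise $\mathbb{E}_{P_{\boldsymbol{\mu}_0}}[y'\boldsymbol{X}']=\boldsymbol{\mu}_0$), which survives the mixture symmetry even though the raw feature means vanish. Conditioning the coupling on the source label $y=+1$ and applying Jensen's inequality to the convex map $\boldsymbol{z}\mapsto\|\boldsymbol{z}\|_2$ yields $\mathbb{E}_\mu[\|\boldsymbol{X}-\boldsymbol{X}'\|_2\mid y=+1]\ge\|\boldsymbol{\mu}-\boldsymbol{w}\|_2$, where $\boldsymbol{w}:=\mathbb{E}_\mu[\boldsymbol{X}'\mid y=+1]$. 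Because $\mathbb{E}_\mu[\boldsymbol{X}']=0$ forces $\mathbb{E}_\mu[\boldsymbol{X}'\mid y=-1]=-\boldsymbol{w}$ while $\mathbb{E}_\mu[\boldsymbol{X}\mid y=-1]=-\boldsymbol{\mu}$, the $y=-1$ slice gives the identical bound, so the entire feature cost obeys $\mathbb{E}_\mu\|\boldsymbol{X}-\boldsymbol{X}'\|_2\ge\|\boldsymbol{\mu}-\boldsymbol{w}\|_2$. The crucial structural fact is that $\boldsymbol{w}$ is pinned to $\boldsymbol{\mu}_0$ \emph{up to the label mismatch}: refining the conditioning by the target label $y'$ shows that $\boldsymbol{w}-\boldsymbol{\mu}_0$ is a vector weighted by the swap probability $\mathbb{P}_\mu(y\neq y')$, so that when labels are perfectly matched the bound collapses to the exact inequality $\mathcal{W}^1_{2,\lambda}\ge\|\boldsymbol{\mu}-\boldsymbol{\mu}_0\|_2$.

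To produce the disjunction I would split into two modes according to the label mass. When $\mathbb{P}_\mu(y\neq y')\le\tfrac12$ the drift keeps $\boldsymbol{w}$ on the $\boldsymbol{\mu}_0$ side, delivering the first alternative $\|\boldsymbol{\mu}-\boldsymbol{\mu}_0\|_2\le 2\eta$; in the complementary mode the very same argument, run against the label-flipped model $P_{-\boldsymbol{\mu}_0}$ (whose feature marginal coincides with that of $P_{\boldsymbol{\mu}_0}$), delivers $\|\boldsymbol{\mu}+\boldsymbol{\mu}_0\|_2\le 2\eta$. The slack factor $2$ is precisely what absorbs the swap-induced displacement of $\boldsymbol{w}$ together with the overlap of the two Gaussian lobes, so the tight constant $1$ is deliberately relaxed to obtain a clean superset.

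The delicate point — the main obstacle — is controlling this drift term: a large displacement $\boldsymbol{w}-\boldsymbol{\mu}_0$ would require the conditional feature means on the two swap cells to be far apart, yet each unit of swapped mass is charged $\lambda$ in the label term \emph{and} incurs an $\ell_2$ feature cost, so a swap cannot be simultaneously cheap and move $\boldsymbol{w}$ appreciably. Making this trade-off quantitative — namely showing that the label penalty $\lambda\,\mathbb{P}_\mu(y\neq y')$ plus the feature cost restricted to $\{y\neq y'\}$ dominates the drift $\|\boldsymbol{w}-\boldsymbol{\mu}_0\|_2$ — is the technical crux, and it is where the generous constant $2$ is spent. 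The separation regime $\eta\le L/3$ invoked in the subsequent theorem guarantees that the two lobes are well separated, which is exactly the condition under which this domination, and hence the superset characterization, goes through.
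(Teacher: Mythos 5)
Your opening moves are sound: the contrapositive reduction is the same one the paper uses, and the Jensen step is correct — for any coupling $\mu$, setting $\boldsymbol{w}=\mathbb{E}_\mu[\boldsymbol{X}'\mid y=+1]$, label balance forces $\mathbb{E}_\mu[\boldsymbol{X}'\mid y=-1]=-\boldsymbol{w}$ and hence $\mathbb{E}_\mu\Vert\boldsymbol{X}-\boldsymbol{X}'\Vert_2\ge\Vert\boldsymbol{\mu}-\boldsymbol{w}\Vert_2$. The gap is the step you explicitly defer as ``the technical crux'': that the drift $\min_{s\in\{\pm1\}}\Vert\boldsymbol{w}-s\boldsymbol{\mu}_0\Vert_2$ is dominated by $\lambda\,\mathbb{P}_\mu(y\neq y')$ plus the feature cost on $\{y\neq y'\}$. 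That step is not a technicality — it is the entire content of the lemma — and, as you have formulated it, it is false in part of the range the lemma claims, so no amount of work can make the trade-off quantitative. Concretely, take $\sigma=1$, $\boldsymbol{\mu}_0=\epsilon\boldsymbol{e}_1$, $\boldsymbol{\mu}=\epsilon\boldsymbol{e}_2$ with $\epsilon$ small (both lie in $\mathcal{G}_g(L)$ with $L=\epsilon$). The feature marginals $\tfrac12\mathcal{N}(\epsilon\boldsymbol{e}_i,I)+\tfrac12\mathcal{N}(-\epsilon\boldsymbol{e}_i,I)$, $i=1,2$, agree to first order in $\epsilon$ (the order-$\epsilon$ terms cancel by the $\pm$ symmetry), so their $\mathcal{W}_1$ distance is $O(\epsilon^2)$; take a near-optimal feature coupling and extend it to a coupling of the joints by drawing $y\sim P_{\boldsymbol{\mu}}(\cdot\mid\boldsymbol{X})$ and $y'\sim P_{\boldsymbol{\mu}_0}(\cdot\mid\boldsymbol{X}')$ conditionally independently. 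Its total cost is at most $O(\epsilon^2)+\lambda$, yet under it $\boldsymbol{w}\approx\mathbb{E}[\boldsymbol{X}\mid y=+1]=\epsilon\boldsymbol{e}_2$, so the drift is $\approx\sqrt{2}\,\epsilon$ toward \emph{both} $\pm\boldsymbol{\mu}_0$, exceeding your label-plus-swap budget by an arbitrary factor once $\lambda\ll\epsilon$. The drift is produced by correlation between $y$ and $\boldsymbol{X}'$ inherited through a cheap feature coupling, not by mass that pays either cost, and no finite constant (certainly not the factor $2$) absorbs the lobe overlap. Note also that your appeal to the regime $\eta\le L/3$ is not available: that hypothesis belongs to Theorem~\ref{thm:upperboundforg_lambdaforGaussianAndLinearClassifier}, not to the lemma (which is stated for all $\eta\ge0$), and in any case it constrains the distance between means, not the overlap ratio $L/\sigma$ that your domination actually requires.

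The paper's proof takes a genuinely different route that localizes this difficulty: it invokes the distribution-free Lemma~\ref{lemma:lowerBoundOnWDofGaussianGenerativeModels2}, which lower-bounds $\mathcal{W}_{\tilde c}(P,Q)$ for arbitrary label-balanced distributions by $\tfrac12\max_i\min_j\mathcal{W}_c\left(f_1(\cdot\mid y=i),f_2(\cdot\mid y'=j)\right)$ via a relaxation of the coupling constraints to one-sided constraints and a linear-programming argument, and then simply substitutes the closed form $\mathcal{W}_c\left(\mathcal{N}(\boldsymbol{\mu}_1,\sigma^2I),\mathcal{N}(s\boldsymbol{\mu}_2,\sigma^2I)\right)=\Vert\boldsymbol{\mu}_1-s\boldsymbol{\mu}_2\Vert_2$ before taking the contrapositive. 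In other words, the label-mismatch accounting you leave open is exactly what the paper's auxiliary lemma is designed to supply, so completing your sketch would amount to reproving it; and the counterexample regime above (small $\lambda$, overlapping lobes) shows this accounting cannot be done unconditionally, which puts the same pressure on the LP-activity step of that auxiliary lemma. As it stands, your argument establishes the inclusion only when the optimal coupling matches labels almost surely — the easy case in which $\boldsymbol{w}=\boldsymbol{\mu}_0$ exactly.
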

To prove this lemma we need the following lemma:
\begin{lemma}
\label{lemma:lowerBoundOnWDofGaussianGenerativeModels2}
Consider two arbitrary (and not necessarily Gaussian) distributions $P$ and $Q$ on $\mathcal{Z}$ with respective densities $f_1$ and $f_2$. Also, assume $P(y=1)=Q(y=1)=1/2$.
Let $c:\mathcal{X}^2\rightarrow\mathbb{R}$ be a proper and lower semi-continuous transportation cost defined in the space of features, i.e., $\mathcal{X}$. For any $\lambda\ge0$, let us define $\tilde{c}:\mathcal{Z}^2\rightarrow\mathbb{R}$ as
\begin{equation}
\tilde{c}\left(\boldsymbol{X},y,\boldsymbol{X}',y'\right)
\triangleq
c\left(\boldsymbol{X},\boldsymbol{X}'\right)+\lambda\mathbbm{1}\left\{y\neq y'\right\}.
\end{equation}
Then, the following lower-bound holds for the Wasserstein distance between $P$ and $Q$ with respect to transportation cost $\tilde{c}$:
\begin{equation}
\mathcal{W}_{\tilde{c}}\left(P,Q\right)  
\geq 
\mathcal{W}_{\tilde{c}}\left(f_1, f_2\right)
\geq 
\frac{1}{2} 
\max_{i\in \{\pm1\}}~
\min_{j\in\{\pm1\}}~ 
\mathcal{W}_c\left(
f_1\left(\cdot\vert y=i\right), 
f_2\left(\cdot\vert y^\prime = j\right)
\right).
\end{equation}
\end{lemma}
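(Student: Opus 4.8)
The plan is to read the first inequality as essentially an identification of $P,Q$ with their densities $f_1,f_2$ (so that $\mathcal{W}_{\tilde c}(P,Q)=\mathcal{W}_{\tilde c}(f_1,f_2)$) and to concentrate all the work on the second inequality. I would start from a coupling $\mu$ attaining $\mathcal{W}_{\tilde c}(f_1,f_2)$ and disintegrate it according to the label pair $(y,y')\in\{\pm1\}^2$, writing $\mu=\sum_{i,j}\mu_{ij}$ where $\mu_{ij}$ is the restriction to $\{y=i,\,y'=j\}$, of total mass $p_{ij}$. Since $\mathbbm{1}\{y\neq y'\}$ is constant on each block, the cost splits cleanly as $\mathbb{E}_\mu[\tilde c]=\sum_{i,j}C_{ij}+\lambda(p_{+-}+p_{-+})$, where $C_{ij}=\int c\,\mathrm{d}\mu_{ij}$ is the pure feature-transport cost of block $ij$. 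This keeps the label penalty $\lambda$ explicit, which I expect to be essential (a reduction to the feature marginals alone discards $\lambda$ and is too lossy).

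The marginal constraints do the first piece of bookkeeping. Because $P(y=1)=Q(y'=1)=\tfrac12$, summing the $p_{ij}$ over rows and over columns forces $p_{++}=p_{--}$ and $p_{+-}=p_{-+}=:s$, so the label term is exactly $2\lambda s$ and the ``matched'' and ``swapped'' masses are automatically balanced. The feature marginals of the blocks are sub-measures $a_{ij}\le\tfrac12 f_1^{\,i}$ and $b_{ij}\le\tfrac12 f_2^{\,j}$ of equal mass $p_{ij}$, with $a_{i+}+a_{i-}=\tfrac12 f_1^{\,i}$ and $b_{+j}+b_{-j}=\tfrac12 f_2^{\,j}$; thus each source conditional is split across the two target blocks, and the ``wrong'' piece $a_{i\bar i}$ has mass $p_{i\bar i}=\tfrac12-p_{ii}$ (writing $\bar i$ for the opposite class). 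Crucially, the uncovered remainder of $\tfrac12 f_2^{\,i}$ is precisely $b_{\bar i\,i}$, of mass $p_{\bar i\, i}=\tfrac12-p_{ii}=p_{i\bar i}$, so the leftover pieces have matching mass and can in principle be coupled to complete a transport plan.

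The heart of the argument is then to rebuild, for each source class $i$, a genuine coupling between the full conditional $f_1^{\,i}$ and a target conditional $f_2^{\,j}$ out of these partial blocks: take the direct block $\mu_{ii}$ and complete it by matching $a_{i\bar i}$ to $b_{\bar i\, i}$. Bounding the cost of this completion by the gluing lemma (triangle inequality for $\mathcal{W}_c$, valid since $c=\|\cdot\|_p$ is a metric) should yield $\min_j \mathcal{W}_c\!\left(f_1^{\,i},f_2^{\,j}\right)\le 2\,\mathbb{E}_\mu[\tilde c]$ for every $i$; taking the maximum over $i$ and dividing by $2$ then gives exactly the claimed bound, with the factor $\tfrac12$ arising from the mass $\tfrac12$ carried by each conditional.

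I expect the completion step to be the main obstacle. The swapped mass links the four conditionals in a $4$-cycle ($f_1^{+}\!\to f_2^{-}$ against $f_1^{-}\!\to f_2^{+}$), and closing this cycle re-introduces the cost of transporting $a_{i\bar i}$ onto $b_{\bar i\, i}$, a distance between two sub-measures that is \emph{not} directly controlled by the block costs $C_{ij}$ alone. Routing it back through the images recorded by the swapped blocks $\mu_{i\bar i}$ and $\mu_{\bar i\, i}$ via the triangle inequality is the natural device, and this is exactly where the penalty $2\lambda s$ must be spent: if little mass is swapped ($s$ small) the direct blocks already nearly realize $\mathcal{W}_c(f_1^{\,i},f_2^{\,i})$, whereas if much is swapped the label term is large. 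Balancing these two regimes so that the completion cost stays comparable to $\mathbb{E}_\mu[\tilde c]$ is the delicate part of the proof, and I would expect it to lean on the separation of the conditionals (as holds for the Gaussian family to which the lemma is ultimately applied).
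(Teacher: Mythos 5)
Your proposal stops exactly where the lemma's content begins, and the gap you flag cannot be closed in the stated generality. The completion step --- transporting the leftover piece $a_{i\bar i}$ (a sub-measure of $\frac12 f_1^{\,i}$) onto $b_{\bar i\,i}$ (a sub-measure of $\frac12 f_2^{\,i}$) --- is controlled neither by the block costs $C_{ij}$ nor by the label budget $2\lambda s$: that budget charges $\lambda$ per unit of swapped mass, whereas the distance this mass must travel to close the $4$-cycle is completely unrelated to $\lambda$. Worse, the inequality you are trying to extract, $\min_j \mathcal{W}_c\left(f_1^{\,i},f_2^{\,j}\right)\le 2\,\mathbb{E}_\mu[\tilde c\,]$, is simply false when $\lambda$ is small, so no bookkeeping over the four blocks can produce it. Concretely: let $f_1(\cdot\vert y=+1)=g_1$ and $f_1(\cdot\vert y=-1)=g_2$ be far apart, and let $Q$ have feature law $\bar g=\frac12(g_1+g_2)$ with $y'$ uniform and \emph{independent} of the features (so both conditionals of $Q$ equal $\bar g$). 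The coupling that keeps $\boldsymbol{X}'=\boldsymbol{X}$ and draws $y'$ uniformly at random is a legitimate coupling of $P$ and $Q$ with total cost $\lambda/2$, while $\min_j\mathcal{W}_c(f_1^{\,i},f_2^{\,j})=\mathcal{W}_c(g_i,\bar g)$ is a fixed positive number. Hence for small $\lambda$ the desired bound fails outright, and your fallback --- leaning on ``separation of the conditionals'' --- imports an assumption the lemma does not make: it is stated for arbitrary $P,Q$ and all $\lambda\ge0$. (Your obstruction is in fact a substantive observation: it shows any valid argument must either restrict $\lambda$ relative to the geometry of the conditionals or avoid the cycle completion altogether; the same example puts pressure on the lemma as stated.)

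The paper's proof proceeds by an entirely different mechanism and never attempts to spend the $\lambda$ term. It discards the label-mismatch cost immediately (harmless for a lower bound) and then, instead of repairing couplings, \emph{relaxes} the coupling constraints: from a coupling $\rho$ it builds the unnormalized kernel $\tilde\rho(\boldsymbol{X},\boldsymbol{X}',y')=\frac12\rho(\boldsymbol{X}',y'\vert\boldsymbol{X},y{=}1)+\frac12\rho(\boldsymbol{X}',y'\vert\boldsymbol{X},y{=}-1)\,f_1(\boldsymbol{X}\vert y{=}-1)/f_1(\boldsymbol{X}\vert y{=}1)$, which satisfies only inequality versions of the marginal and normalization constraints; it then passes through the relaxed classes $\Pi$, $\Pi^{\oplus}$, $\Pi_{\pm}$, splits $f_2$ into its two conditionals, and argues that the resulting linear program attains its optimum where the constraints are active, identifying its value with $\mathcal{W}_c\left(f_1(\cdot\vert y{=}1),f_2(\cdot\vert y'{=}s)\right)$ and symmetrizing to get the $\max_i\min_j$ form. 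So the paper's route is constraint relaxation plus an LP-activity argument on an unnormalized transport problem, not coupling surgery; the surgery you propose is precisely the step that provably cannot work without extra hypotheses, which is why your outline cannot be completed as written.
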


The proofs for the above lemmas can be found in section \ref{sec:AppProofsLemmas}.

Now, we aim to find an upper bound for the compatibility function $g_{\lambda}\left(\cdot\right)$ between the class of Gaussian generative distributions and linear classifiers.
\begin{align}
g_{\lambda}\left(\eta\right) 
= & \max_{0\leq i \leq T}{g_\lambda^i}
\nonumber \\
= & \max_{0\leq i \leq T} \inf_{\theta\in\Theta}
\sup_{P\in\mathcal{B}_{\eta}\left(P_i\vert\mathcal{G}\right)}
\mathbb{E}_p\left[
\ell\left(y,h_{\theta}\left(\boldsymbol{X}\right)\right)
\right]
\nonumber \\
\leq & \max_{0\leq i \leq T} \inf_{\theta\in\Theta}
\sup_{P_{\boldsymbol{\mu}}: \|\boldsymbol{\mu} - \boldsymbol{\mu}_i\|\leq 2\eta}
\mathbb{E}_{P_{\boldsymbol{\mu}}}\left[
\ell\left(y,h_{\theta}\left(\boldsymbol{X}\right)\right)
\right].
\label{eq:upperBoundForCompatibilityFunction1}
\end{align}
If we consider the $(0-1)$-loss function and the set of linear classifiers with a $d$-dimensional vector $\boldsymbol{\omega}$, where $\|\boldsymbol{\omega}\|_2 = 1$, then we have:
\begin{equation}
    \ell\left(y,h_{\theta}\left(\boldsymbol{X}\right)\right) = \mathbbm{1}\left(y\langle\boldsymbol{\omega}, \boldsymbol{X}\rangle\leq0\right) 
\end{equation}
hence we can write
\begin{equation}
    \mathbb{E}_{P_{\boldsymbol{\mu}}}\left[
\ell\left(y,h_{\theta}\left(\boldsymbol{X}\right)\right)
\right] = P_{\boldsymbol{\mu}}\left(y\langle\boldsymbol{\omega}, \boldsymbol{X}\rangle\leq0\right) = P_{\boldsymbol{\mu}}\left(\frac{y\langle\boldsymbol{\omega}, \boldsymbol{X}\rangle}{\sigma}\leq0\right).
\end{equation}
We know that if $\left(\boldsymbol{X}, y\right) \sim P_{\boldsymbol{\mu}}$ then $z = \frac{y\langle\boldsymbol{\omega}, \boldsymbol{X}\rangle}{\sigma} \sim \mathcal{N}\left(\frac{\langle\boldsymbol{\omega},\boldsymbol{\mu}\rangle}{\sigma}, 1\right)$. Therefore we can extend inequalities in \eqref{eq:upperBoundForCompatibilityFunction1} as follows:
\begin{align}
g_{\lambda}^0\left(\eta\right) 
\leq & \inf_{\theta\in\Theta}
\sup_{P_{\boldsymbol{\mu}}: \|\boldsymbol{\mu} - \boldsymbol{\mu}_0\|\leq 2\eta}
\mathbb{E}_{P_{\boldsymbol{\mu}}}\left[
\ell\left(y,h_{\theta}\left(\boldsymbol{X}\right)\right)
\right]
\nonumber \\
\leq & \inf_{\boldsymbol{\omega} \in \mathbb{R}^d:\|\boldsymbol{\omega}\|_2 = 1}\sup_{\boldsymbol{\mu}:\|\boldsymbol{\mu} - \boldsymbol{\mu}_0\|\leq 2\eta} \mathcal{Q}\left(\frac{\langle\boldsymbol{\omega},\boldsymbol{\mu}\rangle}{\sigma}\right)
\nonumber \\
= & \inf_{\boldsymbol{\omega} \in \mathbb{R}^d:\|\boldsymbol{\omega}\|_2 = 1}\sup_{\boldsymbol{v} \in \mathbb{R}^d:\|\boldsymbol{v}\|\leq 2\eta} \mathcal{Q}\left(\frac{\langle\boldsymbol{\omega},\boldsymbol{\mu}_0\rangle}{\sigma} + \frac{\langle\boldsymbol{\omega},\boldsymbol{v}\rangle}{\sigma}\right)
\nonumber \\
\leq & \sup_{\boldsymbol{v} \in \mathbb{R}^d:\|\boldsymbol{v}\|\leq 2\eta} \mathcal{Q}\left(\frac{\|\boldsymbol{\mu}_0\|_2}{\sigma} - \frac{\|\boldsymbol{v}\|_2}{\sigma}\right)
\nonumber \\
\leq & \mathcal{Q}\left(\frac{\|\boldsymbol{\mu}_0\|_2}{\sigma} - \frac{2\eta}{\sigma}\right)
\nonumber \\
\leq & e^{-\frac{\left(\|\boldsymbol{\mu}_0\|_2 - 2\eta\right)^2}{2\sigma^2}},
\label{eq:upperBoundForCompatibilityFunction2}
\end{align}
where the last inequality is true if we have:
\begin{equation}
    \eta \leq \frac{\|\boldsymbol{\mu}_0\|_2}{2}.
\end{equation}
Due to the above inequalities we have that $\eta \leq \frac{L}{3}$ then we have the following:
\begin{equation}
    g_\lambda\left(\eta\right) \leq e^{-\frac{L^2}{18\sigma^2}}.
\end{equation}
Based on the above results if we use Algorithm \ref{alg:iterativeAlg} ($\mathsf{DRODA}$) when the distribution class $\mathcal{G}$, is the class of two labeled Gaussian generative model we have:
\begin{align}
\mathbb{E}_{P_{T}}
\left[
\ell\left(
y,h_{\theta^*}\left(\boldsymbol{X}\right)
\right)
\right] &\leq 
\left[g_{\lambda}\left(2\lambda\left(\cdot\right)+\eta\right)\right]^{{\bigcirc}T}\left(\eta\right)\leq e^{-\frac{L^2}{18\sigma^2}},
\end{align}
where the last inequality holds if we have:
\begin{equation*}
    2\lambda e^{-\frac{L^2}{18\sigma^2}} + \frac{L}{3}\leq \frac{L}{2} \to \lambda \leq \frac{L}{12}e^{\frac{L^2}{18\sigma^2}}.
\end{equation*}

We know that the error of the Bayes classifier in the target domain in the scenario of this example is equal to $e^{-\frac{\|\boldsymbol{\mu}_T\|_2^2}{2\sigma^2}}$.

There is a point here that we should note. According to Lemma \ref{lemma:lowerBoundOnWDofGaussianGenerativeModels2}, the last inequality in \eqref{eq:upperBoundForCompatibilityFunction1} is not entirely correct because it is possible for the labels of all samples to be multiplied by $-1$. To address this problem, we slightly modify our algorithm and replace the risk function as follows:
\begin{equation}
    R\left(P, \theta\right) = \mathbb{E}_P[\ell\left(y, h_\theta(\boldsymbol{X})\right)] \to 
    \tilde{R}\left(P, \theta\right) = \min\left\{\mathbb{E}_P[\ell\left(y, h_\theta(\boldsymbol{X})\right)], \mathbb{E}_P[\ell\left(-y, h_\theta(\boldsymbol{X})\right)]\right\}.
\end{equation}
If we change the risk function as in the above equation, we have:
\begin{align}
    g_{\lambda}^i\left(\eta\right) 
    &= \inf_{\theta\in\Theta} \sup_{P\in\mathcal{B}_{\eta}\left(P_0\vert\mathcal{G}\right)} \tilde{R}\left(P, \theta\right)
    \nonumber \\
    &\leq \inf_{\theta\in\Theta} \sup_{\substack{P_{\boldsymbol{\mu}}: \|\boldsymbol{\mu} - \boldsymbol{\mu}_i\|\leq 2\eta \vee \\ \|\boldsymbol{\mu} + \boldsymbol{\mu}_i\|\leq 2\eta}} \tilde{R}\left(P_{\boldsymbol{\mu}}, \theta\right)
    \nonumber \\
    & = \inf_{\theta\in\Theta} \max_{i\in \{-1,+1\}}\sup_{P_{\boldsymbol{\mu}}: \|i\boldsymbol{\mu} - \boldsymbol{\mu}_i\|\leq 2\eta } \tilde{R}\left(P_{i\boldsymbol{\mu}}, \theta\right)
    \nonumber \\
    & \leq \inf_{\boldsymbol{\omega} \in \mathbb{R}^d:\|\boldsymbol{\omega}\|_2 = 1} \max_{i\in \{-1,+1\}}\sup_{P_{\boldsymbol{\mu}}: \|i\boldsymbol{\mu} - \boldsymbol{\mu}_i\|\leq 2\eta } \min\left\{\mathcal{Q}\left(\frac{\langle\boldsymbol{\omega},\boldsymbol{i\mu}\rangle}{\sigma}\right), \mathcal{Q}\left(-\frac{\langle\boldsymbol{\omega},\boldsymbol{i\mu}\rangle}{\sigma}\right)\right\}
    \nonumber \\
    & = \inf_{\boldsymbol{\omega} \in \mathbb{R}^d:\|\boldsymbol{\omega}\|_2 = 1} \sup_{P_{\boldsymbol{\mu}}: \|\boldsymbol{\mu} - \boldsymbol{\mu}_i\|\leq 2\eta } \mathcal{Q}\left(\big\vert\frac{\langle\boldsymbol{\omega},\boldsymbol{\mu}\rangle}{\sigma}\big\vert\right)
    \nonumber \\ 
    &\leq  e^{-\frac{\left(\|\boldsymbol{\mu}_i\|_2 - 2\eta\right)^2}{2\sigma^2}}.
\end{align}
The above inequality means that the result of the algorithm $\mathsf{DRODA}$, with this new risk function, on the $i$th distribution has an error less than $g_{\lambda}^i$ either on $P_{i+1}$ or $P_{i+1}^{-1}$. Here, $P_{i}^{-1}$ refers to the distribution $P_i$ with its labels flipped. On the other hand, from the definition of Wasserstein distance in \ref{eq:def:wasserstein}, for the class of distributions here, we have:
\begin{equation}
    \mathcal{W}_{p,\lambda}^q\left(P_i, P_{i+1}\right) = \mathcal{W}_{p,\lambda}^q\left(P_i^{-1}, P_{i+1}^{-1}\right).
\end{equation}
Based on the above statements the error of the output of the algorithm in the target domain can be described as follows:
\begin{equation}
    \min\{\mathbb{E}_{P_{T}}\left[\ell\left(y,h_{\theta^*}\left(\boldsymbol{X}\right)\right)\right], \mathbb{E}_{P_{T}}\left[\ell\left(-y,h_{\theta^*}\left(\boldsymbol{X}\right)\right)\right]\}\leq e^{-\frac{L^2}{18\sigma^2}},
\end{equation}
This implies that if we consider the labeling by the algorithm or multiply this labeling by -1, one of these two will have an error bound as described above in the target domain. If we have one sample in the target domain, we can choose the better classifier between these two with high probability.
\end{proof}
\begin{proof}[Proof of Theorem \ref{thm:importanceOfConstraint}]
    For the constrained version, the proof is similar to the one in Theorem \ref{thm:upperboundforg_lambdaforGaussianAndLinearClassifier}. Here, we present the proof for the unconstrained version of the compatibility function.

    Due to \cite{blanchet2019quantifying} and \cite{gao2023distributionally}, one can deduce that the following holds for any continuous functions $\ell$ and $c$:
    \begin{align}
    \sup_{P\in\mathcal{B}_{\eta}\left(P_0\right)}&
    \mathbb{E}_p \left[
    \ell\left(y,h_{\theta}\left(\boldsymbol{X}\right)\right)
    \right] 
    \\
    &= \inf_{\gamma\geq 0}\bigg\{\gamma \eta + 
    \mathbb{E}_{P_0}\left[\sup_{\boldsymbol{X}^\prime, y^\prime} \left\{\ell\left(y,h_{\theta}\left(\boldsymbol{X}\right)\right) - \gamma \|\boldsymbol{X} - \boldsymbol{X}^\prime\|_2 - \lambda\gamma \vert y - y^\prime\vert\right\} \right]\bigg\}
    \nonumber
    \end{align}
    On the other hand If we set $\lambda = \infty$ it will give a lower bound for the above quantity. 
    \begin{align}
    \sup_{P\in\mathcal{B}_{\eta}\left(P_0\right)}
    \mathbb{E}_p\left[
    \ell\left(y,h_{\theta}\left(\boldsymbol{X}\right)\right)
    \right]  \geq \inf_{\gamma\geq 0}\bigg\{\gamma \eta + 
    \mathbb{E}_{P_0}\left[\sup_{\boldsymbol{X}^\prime} \left\{\ell\left(y,h_{\theta}\left(\boldsymbol{X}\right)\right) - \gamma \|\boldsymbol{X} - \boldsymbol{X}^\prime\|_2\right\} \right]\bigg\}
    \end{align}

    The Problem is that the $(0-1)$-loss is not continuous. To Address this problem, we introduce a modified version of the $(0-1)$-loss function. Let us define the $(0-1)$-loss function for the class of linear classifiers as follows:
    \begin{align}
        \ell\left(y, h_\theta\left(\boldsymbol{X}\right)\right) 
        =&~ \mathbbm{1}\left(yh_\theta\left(\boldsymbol{X}\right)\leq0\right)
        \nonumber \\
        =&~ \mathbbm{1}\left(y\langle\theta, \boldsymbol{X}\rangle\leq0\right).
    \end{align}
    Now, we define the modified version of the $(0-1)$-loss function, $\ell_{\alpha, \beta}$, as follows:
    \begin{align}
        &\ell^1_{\alpha, \beta}(x) =~ \max\{1-\frac{x-\beta}{\alpha}, 0\},
        \nonumber \\
        &\ell^2_{\alpha, \beta}(x) =~ \min\{\ell^1_{\alpha, \beta}(x), 1\},
        \nonumber \\
        &\ell_{\alpha, \beta}\left(y, h_\theta\left(\boldsymbol{X}\right)\right)  =~ \ell^2_{\alpha, \beta}\left(y\langle\theta, \boldsymbol{X}\rangle\right).
    \end{align}
    From the above definitions, it can be seen that $\ell_{\alpha, -\alpha}$ is continuous and always less than or equal to the $(0-1)$-loss function. To provide a lower bound for $g_{\lambda}^{\mathrm{UC}}$, we consider the scenario where $\lambda = \infty$ and replace the $(0-1)$-loss function with $\ell_{\alpha, -\alpha}$ for some small positive $\alpha$. 
    \begin{align}
        g^{\mathrm{UC}}_{\lambda}\left(\eta\right) 
        = &\inf_{\theta\in\Theta}
        \sup_{P\in\mathcal{B}_{\eta}\left(P_{\boldsymbol{\mu}}\right)}
        \mathbb{E}_p\left[
        \ell\left(y,h_{\theta}\left(\boldsymbol{X}\right)\right)
        \right] 
        \nonumber \\
        \geq & \inf_{\theta\in\Theta}
        \sup_{P\in\mathcal{B}_{\eta}\left(P_{\boldsymbol{\mu}}\right)}
        \mathbb{E}_p\left[
        \ell_{\alpha, -\alpha}\left(y,h_{\theta}\left(\boldsymbol{X}\right)\right)
        \right]
        \nonumber \\ 
        = & \inf_{\theta\in\Theta} \inf_{\gamma\geq 0}\bigg\{\gamma \eta + \mathbb{E}_{P_{\boldsymbol{\mu}}}\left[\max_{\boldsymbol{X}^\prime} \left\{\ell_{\alpha, -\alpha}\left(y,h_{\theta}\left(\boldsymbol{X}\right)\right) - \gamma c\left(\boldsymbol{X}, \boldsymbol{X}^\prime\right)\right\} \right]\bigg\}.
    \end{align}
    Now suppose that $c\left(\boldsymbol{X}, \boldsymbol{X}^\prime\right) = \|\boldsymbol{X}- \boldsymbol{X}^\prime\|_2$ and $\gamma \leq \frac{1}{\alpha}$, then we can continue the above inequalities as follows:
    \begin{align}
        g^{\mathrm{UC}}_{\lambda}\left(\eta\right) 
        \geq & \inf_{\theta\in\Theta} \inf_{\gamma\geq 0}\bigg\{\gamma \eta + \mathbb{E}_{P_{\boldsymbol{\mu}}}\left[\max_{\boldsymbol{X}^\prime} \left\{\ell_{\alpha, -\alpha}\left(y,h_{\theta}\left(\boldsymbol{X}\right)\right) - \gamma c\|\boldsymbol{X}- \boldsymbol{X}^\prime\|_2\right\} \right]\bigg\}
        \nonumber \\
        \geq & 
        \inf_{\theta\in\Theta} \inf_{\gamma\geq 0}\bigg\{\gamma \eta + \mathbb{E}_{P_{\boldsymbol{\mu}}}\left[ \ell_{1/\gamma, -\alpha}\left(y,h_{\theta}\left(\boldsymbol{X}\right)\right) \right]\bigg\}
        \nonumber \\
        \geq & 
        \inf_{\gamma\geq 0}\bigg\{\gamma \eta + \frac{e^{-\frac{(\|\boldsymbol{\mu}\|_2 + \alpha)^2}{2\sigma^2}}}{4\gamma\sigma} + e^{-\frac{(\|\boldsymbol{\mu}\|_2 + \alpha)^2}{2\sigma^2}}\bigg\}
        \nonumber \\
        \geq &
        \Omega\left(e^{-\frac{(\|\boldsymbol{\mu}\|_2 + \alpha)^2}{2\sigma^2}} + \sqrt{\frac{\eta}{\sigma}e^{-\frac{(\|\boldsymbol{\mu}\|_2 + \alpha)^2}{2\sigma^2}}}\right).
    \end{align}

The above inequality holds for all $\alpha \leq \sqrt{\frac{e^{-\frac{|\boldsymbol{\mu}|_2}{2\sigma^2}}}{4\eta\sigma}}$. Therefore, the bound is valid as we let $\alpha$ approach zero. On the other hand if in some step $i$ we have $\boldsymbol{\mu}_i = L$ then we have:
    \begin{align}
        g^{\mathrm{UC}}_{\lambda}\left(\eta\right) \geq
        \Omega\left(e^{-\frac{L^2}{2\sigma^2}} + \sqrt{e^{-\frac{L^2}{2\sigma^2}}\eta}\right).
    \end{align} 

A very important point of Theorem \ref{thm:importanceOfConstraint} is that, constraining the Wasserstein ball could significantly improve the compatibility function $g_{\lambda}$. For example, if we do not constrain the Wasserstein ball and use $g^{\mathrm{UC}}_{\lambda}$ for the gradual domain adaptation, we can not guarantee a good upper-bound for the expected loss in the target domain, and if we use the proposed algorithm in this situation our upper-bound will be worse than the following in the target domain:
    \begin{align}
        \mathbb{E}_{P_{T}}
        \left[
        \ell\left(
        y,h_{\theta^*}\left(\boldsymbol{X}\right)
        \right)
        \right]
        &\leq 
        \left[g^{\mathrm{UC}}_{\lambda}\left(2\lambda\left(\cdot\right)+\eta\right)\right]^{{\bigcirc}T}\left(
        \inf_{\theta\in\Theta}
        \sup_{P\in\mathcal{B}_{\eta}\left(
        P_0
        \right)}
        \mathbb{E}_{P}\left[
        \ell\left(y,h_{\theta}\left(\boldsymbol{X}\right)
        \right)
        \right]
        \right)
        \nonumber
        \\
        &\leq 
        \mathcal{O}\left(\left(2\lambda e^{\frac{L^2}{2\sigma^2}}\right)^2\eta^{\frac{1}{2^T}} +e^{\frac{L^2}{2\sigma^2}}\right),
    \end{align}
    where, by increasing $T$, the upper bound will be independent of $\eta$. On the other hand if we constrain the Wasserstein ball and use $g^{\mathrm{C}}_{\lambda}$ for the gradual domain adaptation, we can guarantee an upper-bound as good as the following for the expected loss in the target domain:
    \begin{align}
        \mathbb{E}_{P_{T}}
        \left[
        \ell\left(
        y,h_{\theta^*}\left(\boldsymbol{X}\right)
        \right)
        \right]
        &\leq 
        \left[g^{\mathrm{C}}_{\lambda}\left(2\lambda\left(\cdot\right)+\eta\right)\right]^{{\bigcirc}T}\left(
        \inf_{\theta\in\Theta}
        \sup_{P\in\mathcal{B}_{\eta}\left(
        P_0
        \right)}
        \mathbb{E}_{P}\left[
        \ell\left(y,h_{\theta}\left(\boldsymbol{X}\right)
        \right)
        \right]
        \right)
        \nonumber
        \\
        &\leq 
        \mathcal{O}\left(e^{-\frac{(\|\mu\| - 2\eta)^2}{2\sigma^2}}\right).
    \end{align}
    And the proof is complete.
\end{proof}

\begin{proof}[Proof of Theorem \ref{thm:upperboundforg_lambdaforGaussianAndLinearClassifierNonAsymptotic}]
    To prove this theorem, we first present the non-asymptotic version of Algorithm $\mathsf{DRODA}$ in \ref{alg:iterativeAlgNonAsymptotic}.
    \begin{algorithm}[t]
    \SetKwInOut{KwInput}{Input}
    \SetKwInOut{KwParam}{Params}
    \caption{Non-asymptotic DRO-based Domain Adaptation ($\mathsf{DRODA}$)}
    \KwParam{$\Theta$, $\mathcal{G}$, $p,q,\lambda$, and $\eta$}
    \KwInput{$P_0,\left\{P_{i_{\mathcal{X}}}\right\}_{1:T}$}
    \Init{}{
    \vspace*{-4mm}
    \begin{flalign}
    \varepsilon_0 &\longleftarrow \eta , \quad \widehat{\boldsymbol{\mu}}_0 \longleftarrow \mathbb{E}_{\widehat{P}_0}\left[y\boldsymbol{X}\right]  &&
    \nonumber\\
    \Delta^*_0,~\theta^*_{0}
    &\longleftarrow
    \Big\{
    \min_{\theta\in\Theta},
    \argmin_{\theta\in\Theta}
    \Big\}
    \sup\limits_{\substack{P = \mathcal{N}\left(y\boldsymbol{\mu}, \sigma^2 I_d\right)\\ \|\boldsymbol{\mu} - \widehat{\boldsymbol{\mu}}_0\|\leq \varepsilon_0}}
    \mathbb{E}_P\left[
    \ell\left(y,h_{\theta}\left(\boldsymbol{X}\right)\right)
    \right]. &&
    \nonumber
    \end{flalign}
    \vspace*{-2mm}}
    \For{$i=1,\ldots,T-1$}{
        \vspace*{-3mm}
        \begin{flalign}
        \widehat{P}_i
        &\longleftarrow
        \widehat{P}_{i_{\mathcal{X}}}\left(\boldsymbol{X}\right)
        \mathbbm{1}
        \left(y = 
        h_{\theta^*_{i-1}}
        \left(\boldsymbol{X}\right) \right),\quad \forall \left(\boldsymbol{X},y\right)\in\mathcal{Z} &&
        \nonumber\\
        \widehat{\boldsymbol{\mu}}_i 
        &\longleftarrow 
        \mathbb{E}_{\widehat{P}_i}\left[y\boldsymbol{X}\right]
        \nonumber\\
        \varepsilon_i &\longleftarrow 
        \eta + \sigma\sqrt{\frac{d\log{\frac{2}{\delta}}}{n_i}} + \sigma e^{\frac{-\|\boldsymbol{\mu}_i\|_2^2}{2\sigma^2}}\left(1+\Delta^*_{i-1}\right)
        &&
        \nonumber\\
        \Delta^*_i,\theta^*_{i}
        &\longleftarrow
        \Big\{
        \min_{\theta\in\Theta},
        \argmin\limits_{\theta\in\Theta}
        \Big\}
        \sup\limits_{\substack{P = \mathcal{N}\left(y\boldsymbol{\mu}, \sigma^2 I_d\right)\\ \|\boldsymbol{\mu} - \widehat{\boldsymbol{\mu}}_i\|\leq \varepsilon_i}}
        \mathbb{E}_P\left[
        \ell\left(y,h_{\theta}\left(\boldsymbol{X}\right)\right)
        \right].
        &&
        \nonumber
        \end{flalign}
        \vspace*{-2mm}
    }
    \KwResult{$\theta^*\longleftarrow \theta^*_{T-1}$}
    \label{alg:iterativeAlgNonAsymptotic}
    \end{algorithm}
    As can be seen, we have made some modifications to the algorithm in \ref{alg:iterativeAlg} to make it suitable for the non-asymptotic regime. The main idea is that, since we know the class of distributions are Gaussian generative models with different means, in each step we define a ball in which the mean of the distribution $P_i$ is contained. To do this we first bound the distance between $\widehat{\boldsymbol{\mu}}_i$ and $\boldsymbol{\mu}_i$. Where $\widehat{\boldsymbol{\mu}}_i$ is defined in the algorithm \ref{alg:iterativeAlg} and $\boldsymbol{\mu}_i$ is the mean of $i$th distribution.
    \begin{align}
        \|\widehat{\boldsymbol{\mu}}_i - \boldsymbol{\mu}_i\|_2 &= \|\widehat{\boldsymbol{\mu}}_i - \mathbb{E}_{P_{i\mathcal{X}}}\left[\widehat{\boldsymbol{\mu}}_i\right] + \mathbb{E}_{P_{i\mathcal{X}}}\left[\widehat{\boldsymbol{\mu}}_i\right] - \boldsymbol{\mu}_i\|_2
        \nonumber \\
        & \leq 
        \|\widehat{\boldsymbol{\mu}}_i - \mathbb{E}_{P_{i\mathcal{X}}}\left[\widehat{\boldsymbol{\mu}}_i\right]\|_2 + \|\mathbb{E}_{P_{i\mathcal{X}}}\left[\widehat{\boldsymbol{\mu}}_i\right] - \boldsymbol{\mu}_i\|_2
        \nonumber \\
        & \leq \mathrm{I} + \mathrm{II}.
        \label{eq:nonasymptoticBoundForGaussianMean1}
    \end{align}
    To give an upper bound for $\mathrm{I}$ and $\mathrm{II}$ in the above inequality, we should first analyze the distribution $\Tilde{P}_{i}(\boldsymbol{X}, y) = P_{i_{\mathcal{X}}}(\boldsymbol{X})\mathbbm{1}(y = h_{\theta^*_{i-1}}(\boldsymbol{X}))$. According to the theorem, $P_i$ is a Gaussian generative model with mean $\boldsymbol{\mu}_i$. Thus, if $(\boldsymbol{X}, y) \sim P_i$, we have $w = y\boldsymbol{X} \sim \mathcal{N}(\boldsymbol{\mu}_i, \sigma^2 I_d)$. Also, we know that $h_{\theta^*_{i-1}}$ is a linear classifier with parameter $\theta^*_{i-1}$ and error $\Delta^*_i$. We know that, $\Tilde{\boldsymbol{\mu}}_i = \mathbb{E}_{\Tilde{P}_i}\left[y\boldsymbol{X}\right] = \mathbb{E}_{P_{i\mathcal{X}}}\left[\widehat{\boldsymbol{\mu}}_i\right]$. Therefore, we have:
    \begin{align}
        \Tilde{\boldsymbol{\mu}}_i 
        &= \mathbb{E}_{P_{i\mathcal{X}}}[\sign\left(\langle\theta^*_{i-1}, \boldsymbol{X}\rangle\right)\boldsymbol{X}]
        \nonumber\\
        &= \frac{1}{2} \mathbb{E}_{\mathcal{N}(\boldsymbol{\mu}_i, \sigma^2 I_d)}[\sign\left(\langle\theta^*_{i-1}  , \boldsymbol{X}\rangle\right)\boldsymbol{X}] + \frac{1}{2} \mathbb{E}_{\mathcal{N}(-\boldsymbol{\mu}_i, \sigma^2 I_d)}[\sign\left(\langle\theta^*_{i-1}  , \boldsymbol{X}\rangle\right)\boldsymbol{X}] 
        \nonumber \\
        &= \frac{1}{2} \mathbb{E}_{\mathcal{N}(\boldsymbol{\mu}_i, \sigma^2 I_d)}[\sign\left(\langle\theta^*_{i-1}  , \boldsymbol{X}\rangle\right)\boldsymbol{X}] + \frac{1}{2} \mathbb{E}_{\mathcal{N}(\boldsymbol{\mu}_i, \sigma^2 I_d)}[\sign\left(\langle\theta^*_{i-1}  , (-\boldsymbol{X})\rangle\right)(-\boldsymbol{X})]
        \nonumber \\
        &= \mathbb{E}_{\mathcal{N}(\boldsymbol{\mu}_i, \sigma^2 I_d)}[\sign\left(\langle\theta^*_{i-1}  , \boldsymbol{X}\rangle\right)\boldsymbol{X}] 
        \nonumber \\
        &= \mathbb{E}_{\mathcal{N}(0, I_d)}[\sign\left(\langle\theta^*_{i-1}  , \left(\boldsymbol{\mu}_i + \sigma \boldsymbol{u}\right)\rangle\right)\left(\boldsymbol{\mu}_i + \sigma \boldsymbol{u}\right)]
        \nonumber\\
        &=\boldsymbol{\mu}_i\mathbb{E}_{\mathcal{N}(0,  I_d)}[\sign\left(\langle\theta^*_{i-1}  , \left(\boldsymbol{\mu}_i + \sigma \boldsymbol{u}\right)\rangle\right)] + \sigma\mathbb{E}_{\mathcal{N}(0,  I_d)}[\sign\left(\langle\theta^*_{i-1}  , \left(\boldsymbol{\mu}_i + \sigma \boldsymbol{u}\right)\rangle\right)\boldsymbol{u}],
        \label{eq:nonasymptoticBoundForGaussianMean2}
    \end{align}
    where for the first term in the last line of the above equations we have:
    \begin{align}
        \mathbb{E}_{\mathcal{N}(0,  I_d)}[\sign\left(\langle\theta^*_{i-1}  , \left(\boldsymbol{\mu}_i + \sigma \boldsymbol{u}\right)\rangle\right)] 
        &=  \mathbb{E}_{\mathcal{N}(\boldsymbol{\mu}_i, \sigma^2 I_d)}[\sign\left(\langle\theta^*_{i-1}  , \boldsymbol{X}\rangle\right)]
        \nonumber\\
        &=  P_{\boldsymbol{\mu}_i}\left(\langle\theta^*_{i-1}  , \boldsymbol{X}\rangle>0\right) - P_{\boldsymbol{\mu}_i}\left(\langle\theta^*_{i-1}  , \boldsymbol{X}\rangle<0\right)
        \nonumber\\
        &=  1 - 2P_{\boldsymbol{\mu}_i}\left(\langle\theta^*_{i-1}  , \boldsymbol{X}\rangle<0\right)
        \nonumber\\
        &=  1 - 2P_i\left(y\langle\theta^*_{i-1}  , \boldsymbol{X}\rangle<0\right)
        \nonumber\\
        &=  1 - 2\Delta_i,
    \end{align}
    where in the above equations $P_{\boldsymbol{\mu}_i}$ is a Gaussian probability distribution with mean $\boldsymbol{\mu}_i$ and Covariance matrix $\sigma^2I_d$. Now we should compute the the second term in the last line of equations \ref{eq:nonasymptoticBoundForGaussianMean2}. We know that a zero mean Isotropic Gaussian random vector with identity covariance matrix is rotation invariant, therefore we can write $\boldsymbol{u} = u_{\theta}\widehat{\theta}^*_{i-1} + \boldsymbol{u}_{\theta}^{\bot}$, where $u_{\theta} \sim \mathcal{N}(0,1)$ and $\boldsymbol{u}_{\theta}^{\bot} \sim \mathcal{N}\left(0, \sigma^2I_{d-1}\right)$, where $\widehat{\theta}^*_{i-1}$ is a vector with norm $1$ in the direction of $\theta^*_{i-1}$ and $\boldsymbol{u}_{\theta}^{\bot}$ belongs to the subspace perpendicular to the $\theta^*_{i-1}$ and $u_\theta$ is independent from $\boldsymbol{u}_{\theta}^{\bot}$ . Now for the second term in the last line of equations \ref{eq:nonasymptoticBoundForGaussianMean2} we have:
    \begin{align}
        \mathbb{E}_{\mathcal{N}(0,  I_d)}[\sign\left(\langle\theta^*_{i-1}  , \left(\boldsymbol{\mu}_i + \sigma \boldsymbol{u}\right)\rangle\right)\boldsymbol{u}]
        =& \mathbb{E}_{\mathcal{N}(0,  I_d)}[\sign\left(\langle\theta^*_{i-1},\boldsymbol{\mu}_i\rangle + \sigma u_\theta \right)\boldsymbol{u}]
        \nonumber \\
        =& \widehat{\theta}^*_{i-1}\mathbb{E}_{\mathcal{N}(0,  1)}[\sign\left(\langle\theta^*_{i-1},\boldsymbol{\mu}_i\rangle + \sigma u_\theta \right)u_\theta] 
        \nonumber \\
        ~&+ \mathbb{E}_{\mathcal{N}(0,  1)}\mathbb{E}_{\mathcal{N}(0,  I_{d-1})}[\sign\left(\langle\theta^*_{i-1},\boldsymbol{\mu}_i\rangle + \sigma u_\theta \right)\boldsymbol{u}_{\theta}^{\bot}]
        \nonumber \\
        =& \widehat{\theta}^*_{i-1}\mathbb{E}_{\mathcal{N}(0,  1)}[\sign\left(\langle\theta^*_{i-1},\boldsymbol{\mu}_i\rangle + \sigma u_\theta \right)u_\theta] 
        \nonumber \\
        ~&+ \mathbb{E}_{\mathcal{N}(0,  1)}[\sign\left(\langle\theta^*_{i-1},\boldsymbol{\mu}_i\rangle + \sigma u_\theta \right)]\mathbb{E}_{\mathcal{N}(0,  I_{d-1})}[\boldsymbol{u}_{\theta}^{\bot}]
        \nonumber \\
        =& \widehat{\theta}^*_{i-1}\mathbb{E}_{\mathcal{N}(0,  1)}[\sign\left(\langle\theta^*_{i-1},\boldsymbol{\mu}_i\rangle + \sigma u_\theta \right)u_\theta] + 0 
        \nonumber \\
        =& \widehat{\theta}^*_{i-1}\mathbb{E}_{\mathcal{N}(0,  1)}\left[u_\theta\bigg\vert u_\theta > -\frac{\langle\theta^*_{i-1},\boldsymbol{\mu}_i\rangle}{\sigma}\right] \nonumber \\ 
        ~&- \widehat{\theta}^*_{i-1}\mathbb{E}_{\mathcal{N}(0,  1)}\left[u_\theta\bigg\vert u_\theta < -\frac{\langle\theta^*_{i-1},\boldsymbol{\mu}_i\rangle}{\sigma}\right]
        \nonumber \\
        =& \widehat{\theta}^*_{i-1} \left(\frac{\sqrt{\frac{2}{\pi}}e^{-\frac{\langle\theta^*_{i-1},\boldsymbol{\mu}_i\rangle^2}{2\sigma^2}}}{1-\Delta_i}\right),
    \end{align}
    where in the last line we use the fact that if $u_\theta$ has normal distribution its conditional distribution on $u_\theta > -\frac{\langle\theta^*_{i-1},\boldsymbol{\mu}_i\rangle}{\sigma}$ and $u_\theta < -\frac{\langle\theta^*_{i-1},\boldsymbol{\mu}_i\rangle}{\sigma}$ has truncated Gaussian distribution. Now we can continue equations in \ref{eq:nonasymptoticBoundForGaussianMean2} as follows:
    \begin{equation}
        \Tilde{\boldsymbol{\mu}}_i = \boldsymbol{\mu}_i\left(1- 2\Delta_i\right) + \widehat{\theta}^*_{i-1} \left(\frac{\sqrt{\frac{2\sigma^2}{\pi}}e^{-\frac{\langle\theta^*_{i-1},\boldsymbol{\mu}_i\rangle^2}{2\sigma^2}}}{1-\Delta_i}\right),
    \end{equation}
    and for $\mathrm{II}$ in equation \ref{eq:nonasymptoticBoundForGaussianMean1} we have:
    \begin{equation}
        \|\Tilde{\boldsymbol{\mu}}_i - \boldsymbol{\mu}_i\|_2 = 2\Delta_i\|\boldsymbol{\mu}_i\|_2 + \left(\frac{\sqrt{\frac{2\sigma^2}{\pi}}e^{-\frac{\langle\theta^*_{i-1},\boldsymbol{\mu}_i\rangle^2}{2\sigma^2}}}{1-\Delta_i}\right).
    \end{equation}
    Now we try to compute $\mathrm{I}$ in equation \ref{eq:nonasymptoticBoundForGaussianMean1}. For $\widehat{\boldsymbol{\mu}}_i$ we have:
    \begin{align}
        \widehat{\boldsymbol{\mu}}_i 
        = \frac{1}{n_i}\sum_{j=1}^{n_i}{\sign\left(\langle\theta^*_{i-1}, \boldsymbol{X}_j\rangle\right)\boldsymbol{X}_j}
        = \frac{1}{n_i}\sum_{j=1}^{n_i}{\boldsymbol{Z}_j},
    \end{align}
    where $\boldsymbol{X}_j \sim P_{i\mathcal{X}}$ come from a mixture of two Gaussian distribution where their means are in $\boldsymbol{\mu}_i$ and $-\boldsymbol{\mu}_i$. On the other hand distribution of $\sign\left(\langle\theta^*_{i-1}, \boldsymbol{X}_j\rangle\right)\boldsymbol{X}_j$ is not different when we have  $\boldsymbol{X}_j \sim \mathcal{N}\left(\boldsymbol{\mu}_i, \sigma^2I_d\right)$ from when $\boldsymbol{X}_j \sim \mathcal{N}\left(-\boldsymbol{\mu}_i, \sigma^2I_d\right)$. Therefore distribution of $\boldsymbol{Z}_j = \sign\left(\langle\theta^*_{i-1}, \boldsymbol{X}_j\rangle\right)\boldsymbol{X}_j$ when $\boldsymbol{X}_j$s are come from $P_{i\mathcal{X}}$ is not different from when  $\boldsymbol{X}_j$s are come from $\mathcal{N}\left(\boldsymbol{\mu}_i, \sigma^2I_d\right)$. For simplicity in the rest of the proof we will drop the subscript $j$ and use $\boldsymbol{Z}, \boldsymbol{X}$ instead of $\boldsymbol{Z}_j, \boldsymbol{X}_j$ respectively. Now for the variable $Z$ we have:
    \begin{align}
        \mathbb{P}\left(\boldsymbol{Z}\big\vert\langle\theta^*_{i-1}, \boldsymbol{X}\rangle > 0\right) 
        &= \mathbb{P}\left(\boldsymbol{X}\big\vert\langle\theta^*_{i-1}, \boldsymbol{X}\rangle > 0\right)
        \nonumber \\
        &= \mathbb{P}\left(\boldsymbol{\mu} + \sigma \boldsymbol{u}\big\vert\langle\theta^*_{i-1}, \boldsymbol{\mu}\rangle +\sigma \langle\theta^*_{i-1}, \boldsymbol{u}\rangle > 0\right).
    \end{align}
    Now suppose that we rotate the space such that the $\theta^*_{i-1}$ align to the first dimension of the space. We know that the zero mean isotropic Gaussian is rotation invariant, therefore by this rotation distribution of $\boldsymbol{u}$ doesn't change. So we have:
    \begin{align}
        \mathbb{P}\left(\boldsymbol{Z}\big\vert\langle\theta^*_{i-1}, \boldsymbol{X}\rangle > 0\right) 
        &= \mathbb{P}\left(\boldsymbol{\mu}_{\theta} + \sigma \boldsymbol{u}\big\vert\mu_{\theta1} + \sigma u_1 > 0\right),
    \end{align}
    where $\boldsymbol{\mu}_{\theta}$ is the rotated version of $\boldsymbol{\mu}$, and $\mu_{\theta1}$ and  $u_1$ are the first dimension of $\boldsymbol{\mu}_{\theta}$ and $\boldsymbol{u}$ respectively. Due to the above equation if we name the random vector with distribution $\mathbb{P}\left(\boldsymbol{\mu}_{\theta} + \sigma \boldsymbol{u}\big\vert\mu_{\theta1} + \sigma u_1 > 0\right)$, $\boldsymbol{Z}_\theta$, its first dimension is a truncated random variable and its other dimensions are normal random variable and all of them are independent from each other. Therefore due to \cite{honorio2014tight} and \cite{wainwright2019high} $\|Z_{\theta} - \mathbb{E}\left[Z_{\theta}\right]\|_2^2$ is a sub exponential random variable with parameters $\left(8\sqrt{d}\sigma^2, 4\sigma^2\right)$. With the same method the random variable with distribution $\mathbb{P}\left(\boldsymbol{Z}\big\vert\langle\theta^*_{i-1}, \boldsymbol{X}\rangle < 0\right)$ is  sub exponential random variable with parameters $\left(8\sqrt{d}\sigma^2, 4\sigma^2\right)$. Therefore we have:
    \begin{align}
        \mathbb{E}\left[e^{\lambda \|\boldsymbol{Z} - \mathbb{E}\left[\boldsymbol{Z}\right]\|_2^2}\right]  
        =& (1-\Delta)\mathbb{E}\left[e^{\lambda \|\boldsymbol{Z} - \mathbb{E}\left[\boldsymbol{Z}\right]\|_2^2}\big\vert\langle\theta^*_{i-1}, \boldsymbol{X}\rangle > 0\right] 
        \nonumber\\
        &+ \Delta\mathbb{E}\left[e^{\lambda \|\boldsymbol{Z} - \mathbb{E}\left[\boldsymbol{Z}\right]\|_2^2}\big\vert\langle\theta^*_{i-1}, \boldsymbol{X}\rangle < 0\right],
    \end{align}
    and $\|\boldsymbol{Z} - \mathbb{E}\left[\boldsymbol{Z}\right]\|_2^2$ is sub exponential with parameters $\left(8\sqrt{d}\sigma^2, 4\sigma^2\right)$. So with probability more than $1-\delta$ we have:
    \begin{align}
        \|\widehat{\boldsymbol{\mu}}_i - \mathbb{E}_{P_{i\mathcal{X}}}\left[\widehat{\boldsymbol{\mu}}_i\right]\|_2^2 \leq & \mathbb{E}\left[\|\widehat{\boldsymbol{\mu}}_i - \mathbb{E}_{P_{i\mathcal{X}}}\left[\widehat{\boldsymbol{\mu}}_i\right]\|_2^2 \right] + \sigma^2 \left(\frac{d}{n_i} \log{\frac{1}{\delta}}\right)^{\frac{1}{2}}
        \nonumber \\ 
        \leq& \frac{d\sigma^2}{n_i}+ \sigma^2 \left(\frac{d}{n_i} \log{\frac{1}{\delta}}\right)^{\frac{1}{2}}
    \end{align}
    and therefore:
    \begin{align}
        \|\widehat{\boldsymbol{\mu}}_i - \boldsymbol{\mu}_i\|_2
        &\leq \sigma\sqrt{\frac{d}{n_i}} + \sigma \left(\frac{d}{n_i} \log{\frac{1}{\delta}}\right)^{\frac{1}{4}} + 2\Delta_i\|\boldsymbol{\mu}_i\|_2 + \left(\frac{\sqrt{\frac{2\sigma^2}{\pi}}e^{-\frac{\langle\theta^*_{i-1},\boldsymbol{\mu}_i\rangle^2}{2\sigma^2}}}{1-\Delta_i}\right)
        \nonumber \\
        & \leq \sigma\sqrt{\frac{d}{n_i}} + \sigma \left(\frac{d}{n_i} \log{\frac{1}{\delta}}\right)^{\frac{1}{4}} + 2\Delta_i\|\boldsymbol{\mu}_i\|_2 + \sqrt{\frac{2\sigma^2}{\pi}}e^{-\frac{L^2}{4\sigma^2}}
        \nonumber \\
        & = \tilde{\epsilon}_i
    \end{align}
    Based on the Algorithm \ref{alg:iterativeAlgNonAsymptotic} we have:
    \begin{align}
        \Delta^*_i =& \min_{\theta\in\Theta}
        \sup\limits_{\substack{P = \mathcal{N}\left(y\boldsymbol{\mu}, \sigma^2 I_d\right)\\ \|\boldsymbol{\mu} - \widehat{\boldsymbol{\mu}}_i\|\leq \varepsilon_i}}
        \mathbb{E}_P\left[
        \ell\left(y,h_{\theta}\left(\boldsymbol{X}\right)\right)
        \right]
        \nonumber \\
        \leq&
        \min_{\theta\in\Theta}
        \sup\limits_{\substack{P = \mathcal{N}\left(y\boldsymbol{\mu}, \sigma^2 I_d\right)\\ \|\boldsymbol{\mu} - \boldsymbol{\mu}_i\|\leq \varepsilon_i + \Tilde{\varepsilon}_i}}
        \mathbb{E}_P\left[
        \ell\left(y,h_{\theta}\left(\boldsymbol{X}\right)\right)
        \right]
        \nonumber \\
        \leq& e^{-\frac{L^2}{18\sigma^2}}\left(1+\frac{2L}{\sigma}\left(\sqrt{\frac{d}{n_i}} + \left(\frac{d}{n_i} \log{\frac{1}{\delta}}\right)^{\frac{1}{4}} + 2\frac{\Delta_{i-1}L}{\sigma} + e^{-\frac{L^2}{4\sigma^2}}\right)\right)
    \end{align}
    Therefore for the error in the target domain we have:
    \begin{equation}
        \Delta^*_T \leq 2e^{-\frac{L^2}{2\sigma^2}} + \left(\frac{d\log{\frac{2T}{\delta}}}{n_i}\right)^{\frac{1}{4}}\sum_{i=1}^T{\left(\frac{4L^2}{\sigma^2}e^{-\frac{L^2}{18\sigma^2}}\right)^i}.
    \end{equation}
    And the proof is complete.
\end{proof}
\begin{proof}[Proof of Theorem \ref{Thm:functionClassConstraint}]
Since the proofs for both $f^+$ and $f^-$ are the same, we ignore the superscript and denote both functions simply by $f$. It should be noted that $f:\mathcal{X}\rightarrow\mathcal{X}$, and we have $\mathrm{dim}\left(\mathcal{X}\right)=d$.
    
for any given point $\boldsymbol{X}_0\in\mathcal{X}$ and $i\in[d]$, let $\boldsymbol{u}_i\left(\boldsymbol{X}_0\right)$ denote the unitary direction vector of the $i$th eigenvector (without any particular order) of the Jacobian matrix of $f$ at position $\boldsymbol{X}_0$. Also, let $\lambda_i\left(\boldsymbol{X}_0\right)$ represent its corresponding eigenvalue. We drop the input argument $\boldsymbol{x}_0$ throughout the remainder of the proof, for the sake of simplicity. 

For a sufficiently small $\Delta>0$, we consider a $d$-dimensional Parallelepiped that has the following properties: i) it contains $\boldsymbol{X}_0$, ii) its edges are aligned with $\boldsymbol{u}_i\left(\boldsymbol{X}_0\right)$s for $i\in[d]$, and iii) the probability mass inside the Parallelepiped according to $P_1$ is $\Delta$. Let us call this Parallelepiped $A\left(\boldsymbol{X}_0\right)$. Again, we drop $\boldsymbol{X}_0$ for simplicity throughout the remainder of the proof.

Hence, for $j\in\left\{1,2\right\}$, the probability mass of $A$ with respect to $P_j$ can be written as:
\begin{equation}
P_j\left(A\right)
= 
P_j\left(\left\{\boldsymbol{X}:\boldsymbol{X} \in A\right\}\right) = \int_{A}{d_j(\boldsymbol{X})\mathrm{d}\boldsymbol{X}},
\end{equation}
where $d_j$ represents the density function of $P_j$ with respect to Lebesgue measure. Suppose $\widehat{A}$ is the image of $A$ under the function $f$. For $P_1$ and $P_2$ satisfying the $\epsilon$-smoothness property and a vector $\boldsymbol{\delta} \in \mathcal{X}$ with $\|\boldsymbol{\delta}\|_2 \leq r$ (for sufficiently small $r>0$), we have
\begin{align}
& C(\Delta)\left(1-\epsilon\right)r \leq \frac{\int_{\mathcal{N}_{\boldsymbol{\delta}}\left(A\right)}{d_1(\boldsymbol{X})\mathrm{d}\boldsymbol{X}}}{\int_{A}{d_1(\boldsymbol{X})\mathrm{d}\boldsymbol{X}}} - 1 \leq C(\Delta)\left(1+\epsilon\right)r
\nonumber \\
& C(\Delta)\left(1-\epsilon\right)r \leq \frac{\int_{\mathcal{N}_{\boldsymbol{\delta}}\left(\widehat{A}\right)}{d_2(\boldsymbol{X})\mathrm{d}\boldsymbol{X}}}{\int_{\widehat{A}}{d_2(\boldsymbol{X})\mathrm{d}\boldsymbol{X}}} - 1 \leq C(\Delta)\left(1+\epsilon\right)r.
\label{Eq:functionSmoothnessCondition1}
\end{align}
On the other hand, due to the fact that $P_2 = f_{\#}P_1$ the following holds:
\begin{equation}
\int_{\mathcal{N}_{\boldsymbol{\delta}}\left(\widehat{A}\right)}{d_2(\boldsymbol{X})
\mathrm{d}\boldsymbol{X}}
= \int_{\mathcal{N}_{\boldsymbol{\delta}/\lambda_i}\left(A\right)}
{d_1(\boldsymbol{X})\mathrm{d}\boldsymbol{X}} + \mathcal{O}\left(r^2\right),
\quad
\forall \boldsymbol{\delta} \in \mathcal{X}, 
\quad
\|\boldsymbol{\delta}\|_2 \leq r.
\end{equation}
In the above inequality, the first order of $r$ appears in the volume over which the integral is calculated. This inequality directly results into the following bounds:
\begin{align}
        C(\Delta)\left(1-\epsilon\right)r &\leq \frac{\int_{\mathcal{N}_{\boldsymbol{\delta}}\left(\widehat{A}\right)}{d_2(\boldsymbol{X})d\boldsymbol{X}}}{\int_{\widehat{A}}{d_2(\boldsymbol{X})d\boldsymbol{X}}} - 1 
        \nonumber \\ 
        &= \frac{\int_{\mathcal{N}_{\boldsymbol{\delta}/\lambda_i}\left(A\right)}{d_1(\boldsymbol{X})d\boldsymbol{X}}}{\int_{A}{d_1(\boldsymbol{X})d\boldsymbol{X}}} - 1 + \mathcal{O}\left(\frac{r^2}{\Delta}\right) 
        \nonumber \\
        &\leq C(\Delta)\left(1+\epsilon\right)r/\lambda_i + \mathcal{O}\left(\frac{r^2}{\Delta}\right)
        \nonumber \\
        C(\Delta)\left(1-\epsilon\right)r/\lambda_i &\leq  \frac{\int_{\mathcal{N}_{\boldsymbol{\delta}/\lambda_i}\left(A\right)}{d_1(\boldsymbol{X})d\boldsymbol{X}}}{\int_{A}{d_1(\boldsymbol{X})d\boldsymbol{X}}} - 1 
        \nonumber \\ 
        &= \frac{\int_{\mathcal{N}_{\boldsymbol{\delta}}\left(\widehat{A}\right)}{d_2(\boldsymbol{X})d\boldsymbol{X}}}{\int_{\widehat{A}}{d_2(\boldsymbol{X})d\boldsymbol{X}}} - 1 + \mathcal{O}\left(\frac{r^2}{\Delta}\right)
        \nonumber \\ 
        &\leq C(\Delta)\left(1+\epsilon\right)r + \mathcal{O}\left(\frac{r^2}{\Delta}\right).
        \label{Eq:functionSmoothnessCondition2}
\end{align}
Now based on Equations \eqref{Eq:functionSmoothnessCondition1} and \eqref{Eq:functionSmoothnessCondition2} we have the following:
\begin{equation}
1-2\epsilon - \mathcal{O}\left(\frac{r}{\Delta}\right) \leq \lambda_i \leq 1 + 2\epsilon + \mathcal{O}\left(\frac{r}{\Delta}\right), ~ \forall i \in \{1, \ldots, d\}.
\end{equation}
If we set $r = \Delta\epsilon^2$ then we have:
\begin{equation}
1-2\epsilon \leq \lambda_i \leq 1 + 2\epsilon, ~ \forall i \in \{1, \ldots, d\}.
\end{equation}
And the proof is complete.
\end{proof}
\begin{proof}[Proof of Theorem \ref{thm:mainTHeoremForCDRL}]
    Based on the result of Theorem \ref{Thm:functionClassConstraint}, we know that if $P_0$ and $P_1$ both have the $\epsilon$-smoothness property and $P_1 = f_{\#}P_0$, then the eigenvalues of the Jacobian matrix of $f$ should not be far from 1. Therefore, if we define $\mathcal{F}$ as the class of functions with such Jacobian matrices, then we have:
    \begin{align}
         \sup_{P\in\mathcal{B}_{\eta}\left(P_0\vert\mathcal{D}\right)}\mathbb{E}_P\left[\ell\left(y,h\left(\boldsymbol{X}\right)\right)\right]
        \leq\sup_{P\in\mathcal{B}_{\eta}\left(P_0\vert \mathcal{F}\right)}\mathbb{E}_P\left[\ell\left(y,h\left(\boldsymbol{X}\right)\right)\right],
        \label{eq:appConstrainedCDRL}
    \end{align}
    where $\mathcal{B}_{\eta}\left(P_0\vert \mathcal{F}\right)$ is defined mathematically as follows:
    \begin{equation}
        \mathcal{B}_{\eta}\left(P_0\vert \mathcal{F}\right) \triangleq \left\{P: P = f_{\#}P_0, f \in \mathcal{F}, \mathcal{W}_{p,\lambda}^q\left(P, P_0\right) \leq \eta\right\}.
    \end{equation}
    Now suppose that for a point $\boldsymbol{X}_0 \in \mathbb{R}^d$, we have $\|\boldsymbol{X}_0 - f\left(\boldsymbol{X}_0\right)\|_2^2 = \Delta$. In this case, we have the following lemma:
    \begin{lemma}
        Suppose that $f$ is a function where the eigenvalues of its Jacobian matrix have the following property:
        \begin{equation}
            1-2\epsilon \leq \lambda_i \leq 1+2\epsilon, ~ \forall i \in \{1, \ldots, d\},
        \end{equation}
        and there exists some point $\boldsymbol{X}_0 \in \mathbb{R}^d$ where $\|\boldsymbol{X}_0 - f\left(\boldsymbol{X}_0\right)\|_2  = \Delta$, then we have the followings:
        \begin{align}
            \|\mathbb{E}\left[f\left(\boldsymbol{X}\right) - \boldsymbol{X}\right]\|_2 \geq& \Delta - 2\epsilon\mathbb{E}\left[\|\boldsymbol{X} - \boldsymbol{X}_0\|_2\right],
            \nonumber \\
            \|f\left(\boldsymbol{X}\right) - \boldsymbol{X}\|_2 \leq& \Delta +2R\epsilon,~ \forall \boldsymbol{X}:\|\boldsymbol{X} - \boldsymbol{X}_0\|_2\leq R,
            \nonumber \\
            \|f\left(\boldsymbol{X}\right) - \boldsymbol{X}\|_2 \geq& \Delta -2R\epsilon,~ \forall \boldsymbol{X}:\|\boldsymbol{X} - \boldsymbol{X}_0\|_2\leq R.
        \end{align}
        \label{lemma:movingAllPoints}
    \end{lemma}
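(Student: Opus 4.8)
The plan is to reduce all three estimates to a single Lipschitz bound on the \emph{displacement map} $g(\boldsymbol{X}) \triangleq f(\boldsymbol{X}) - \boldsymbol{X}$. Since $f$ arises as a Monge (optimal transport) map for the quadratic cost, by Brenier's theorem it is the gradient of a convex potential, so its Jacobian $J_f(\boldsymbol{X})$ is symmetric positive semidefinite. Under this symmetry the eigenvalue hypothesis $1-2\epsilon \le \lambda_i(J_f) \le 1+2\epsilon$ is precisely the operator-norm statement $\|J_f(\boldsymbol{X}) - I\|_{\mathrm{op}} \le 2\epsilon$. The Jacobian of $g$ is $J_g = J_f - I$, so $\|J_g(\boldsymbol{X})\|_{\mathrm{op}} \le 2\epsilon$ holds pointwise, and integrating this bound along the straight segment from $\boldsymbol{X}'$ to $\boldsymbol{X}$ (the mean-value inequality) yields
\[
\|g(\boldsymbol{X}) - g(\boldsymbol{X}')\|_2 \le 2\epsilon \,\|\boldsymbol{X} - \boldsymbol{X}'\|_2
\]
for all $\boldsymbol{X},\boldsymbol{X}'$; that is, $g$ is $2\epsilon$-Lipschitz. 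The only other ingredient is the anchor $\|g(\boldsymbol{X}_0)\|_2 = \|f(\boldsymbol{X}_0)-\boldsymbol{X}_0\|_2 = \Delta$ supplied by hypothesis.

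With the Lipschitz estimate in hand, all three claims follow mechanically. For the pointwise bounds, fix $\boldsymbol{X}$ with $\|\boldsymbol{X} - \boldsymbol{X}_0\|_2 \le R$ and decompose $g(\boldsymbol{X}) = g(\boldsymbol{X}_0) + \bigl(g(\boldsymbol{X}) - g(\boldsymbol{X}_0)\bigr)$. The ordinary triangle inequality gives $\|g(\boldsymbol{X})\|_2 \le \Delta + 2\epsilon\|\boldsymbol{X}-\boldsymbol{X}_0\|_2 \le \Delta + 2R\epsilon$, and the reverse triangle inequality gives $\|g(\boldsymbol{X})\|_2 \ge \Delta - 2\epsilon\|\boldsymbol{X}-\boldsymbol{X}_0\|_2 \ge \Delta - 2R\epsilon$, which are the second and third inequalities. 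For the expectation bound I would write $\mathbb{E}[g(\boldsymbol{X})] = g(\boldsymbol{X}_0) + \mathbb{E}[g(\boldsymbol{X}) - g(\boldsymbol{X}_0)]$, apply the reverse triangle inequality to obtain $\|\mathbb{E}[g(\boldsymbol{X})]\|_2 \ge \Delta - \|\mathbb{E}[g(\boldsymbol{X}) - g(\boldsymbol{X}_0)]\|_2$, and then, by Jensen's inequality followed by the Lipschitz estimate under the expectation, bound $\|\mathbb{E}[g(\boldsymbol{X}) - g(\boldsymbol{X}_0)]\|_2 \le \mathbb{E}[\|g(\boldsymbol{X}) - g(\boldsymbol{X}_0)\|_2] \le 2\epsilon\,\mathbb{E}[\|\boldsymbol{X} - \boldsymbol{X}_0\|_2]$; combining the two lines delivers $\|\mathbb{E}[f(\boldsymbol{X})-\boldsymbol{X}]\|_2 \ge \Delta - 2\epsilon\,\mathbb{E}[\|\boldsymbol{X} - \boldsymbol{X}_0\|_2]$.

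The main obstacle is the very first step: converting the spectral hypothesis into a Lipschitz bound on $g$. The eigenvalue condition controls $f$ only when $J_f$ is normal (symmetric suffices) — for a general matrix, eigenvalues near $1$ do \emph{not} bound $\|J_f - I\|_{\mathrm{op}}$ — so the argument genuinely hinges on the symmetry of the transport-map Jacobian coming from Brenier's theorem, and this is the point I would state explicitly rather than gloss over. The remaining issues are routine regularity bookkeeping: the mean-value inequality requires $f$ to be (almost everywhere) differentiable along segments with integrable Jacobian, and the Jensen/expectation step requires $\mathbb{E}[\|\boldsymbol{X}-\boldsymbol{X}_0\|_2] < \infty$, both of which are guaranteed by the standing assumptions of continuous densities and bounded support of radius $R$.
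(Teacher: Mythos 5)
Your proof is correct, and structurally it is the same proof as the paper's: anchor the displacement $g(\boldsymbol{X})=f(\boldsymbol{X})-\boldsymbol{X}$ at $\boldsymbol{X}_0$, control its variation through the Jacobian bound, and finish with the triangle and reverse-triangle inequalities, plus Jensen for the expectation claim. The value of your write-up is that it is careful precisely where the paper is not. The paper's proof invokes a vector-valued ``mean value theorem'' $f(\boldsymbol{X})=f(\boldsymbol{X}_0)+\boldsymbol{J}_f(\boldsymbol{X}')(\boldsymbol{X}-\boldsymbol{X}_0)$ with a single intermediate point $\boldsymbol{X}'$ (false in general for maps into $\mathbb{R}^d$; your integral mean-value inequality is the correct substitute), and it then passes from the eigenvalue hypothesis to $\|(\boldsymbol{J}_f(\boldsymbol{X}')-\boldsymbol{I}_d)(\boldsymbol{X}-\boldsymbol{X}_0)\|_2\le 2\epsilon\|\boldsymbol{X}-\boldsymbol{X}_0\|_2$ with no justification. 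As you note, that step is an operator-norm claim which eigenvalue bounds alone cannot deliver for non-normal matrices: with $d=2$ and $\boldsymbol{J}_f$ constant, equal to the identity except for a huge entry $M$ above the diagonal, all eigenvalues equal $1$ while the displacement map has Lipschitz constant $M$, so the lemma read literally for an arbitrary $f$ with the stated spectral property is false. Your appeal to Brenier's theorem---the relevant $f$'s in the paper are Monge maps for quadratic cost, hence gradients of convex potentials with symmetric positive semidefinite Jacobians, so the spectral hypothesis really does give $\|\boldsymbol{J}_f-\boldsymbol{I}_d\|_{\mathrm{op}}\le 2\epsilon$---is exactly the missing ingredient, and it matches how the lemma is actually deployed, since Theorem \ref{Thm:functionClassConstraint} produces optimal-transport maps. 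The one caveat is that this justification imports context not present in the lemma's hypotheses; the honest repair is to add ``$\boldsymbol{J}_f$ symmetric (or normal)'' or ``$f$ is a Monge map'' to the statement itself, but that is a defect of the paper's formulation rather than of your argument.
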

    Now based on the result of Lemma \ref{lemma:movingAllPoints}, if we have $\mathbb{E}\left[\|\boldsymbol{X} - \boldsymbol{X}_0\|_2\right] \leq R$, then we have the followings:
    \begin{equation}
        \Delta - 2R\epsilon \leq \|\mathbb{E}\left[f\left(\boldsymbol{X}\right) - \boldsymbol{X}\right]\|_2 \leq \max_{s\in\{+1,-1\}}\inf_{\mu\in\mathcal{C}\left(P^s,Q^s\right)}\mathbb{E}\left(\left\Vert\boldsymbol{X}-\boldsymbol{Y}\right\Vert_2\right),
    \end{equation}
    where $Q$ is the distribution of $\boldsymbol{Y} = f\left(\boldsymbol{X}\right)$, and $P^s = P\left(\boldsymbol{X}\vert y = s\right)$. On the other-hand due to Auxiliary lemma \ref{lemma:lowerBoundOnWDofGaussianGenerativeModels2}, if $\lambda>\eta$ then we have:
    \begin{equation}
        \frac{1}{2}\max_{s\in\{+1,-1\}}\inf_{\mu\in\mathcal{C}\left(P^s,Q^s\right)}\mathbb{E}\left(\left\Vert\boldsymbol{X}-\boldsymbol{Y}\right\Vert_2\right) \leq \inf_{\mu\in\mathcal{C}\left(P,Q\right)}\mathbb{E}\left(\left\Vert\boldsymbol{X}-\boldsymbol{Y}\right\Vert_2\right) \leq \eta
    \end{equation}
    Therefore we have :
    \begin{equation}
        \Delta \leq 2\eta +4R\epsilon.
    \end{equation}
    Now suppose that $h^s$ is the classifier with minimum standard error $\delta$. If we assume the region of the space where this classifier misclassify as $A$ and the distribution has the $(C_1, C_2)-\text{expansion}$ then we have :
    \begin{align}
        \inf_{h\in\mathcal{H}}\sup_{P\in\mathcal{B}_{\eta}\left(P_0\vert \mathcal{F}\right)}\mathbb{E}_P\left[\ell\left(y,h\left(\boldsymbol{X}\right)\right)\right] 
        \leq& P_0\left(\mathcal{N}_{\Delta_{\mathrm{max}}}\left(A\right)\right) \nonumber \\
        \leq& \left( 1 + C_1 \Delta_{\mathrm{max}}\right) P\left(A\right)
        \nonumber \\
        \leq& \left(1+C_1 \left(4R\epsilon + 2\eta\right)\right) \alpha.
    \end{align}
    And the proof is complete.
\end{proof}
\begin{algorithm}[t]
\SetKwInOut{KwInput}{Input}
\SetKwInOut{KwParam}{Params}
\caption{DRO-based Domain Adaptation For Expandable and Smooth Distributions}
\KwParam{$\Theta$, $\mathcal{G}$, $p,q,\lambda$, and $\eta$}
\KwInput{$P_0,\left\{P_{i_{\mathcal{X}}}\right\}_{1:T}$}
\Init{}{
\vspace*{-4mm}
\begin{flalign}
\varepsilon_0 &\longleftarrow \eta , \quad \widehat{P}_0 \longleftarrow P_0 &&
\nonumber\\
\Delta^*_0,~\theta^*_{0}
&\longleftarrow
\Big\{
\min_{\theta\in\Theta},
\argmin_{\theta\in\Theta}
\Big\}
\sup\limits_{P\in\mathcal{B}_{\varepsilon_0}\left(P_0\vert\mathcal{F}\right)}
\mathbb{E}_P\left[
\ell\left(y,h_{\theta}\left(\boldsymbol{X}\right)\right)
\right]. &&
\nonumber
\end{flalign}
\vspace*{-2mm}}
\For{$i=1,\ldots,T-1$}{
    \vspace*{-3mm}
    \begin{flalign}
    \widetilde{P}_i
    &\longleftarrow
    P_{i_{\mathcal{X}}}\left(\boldsymbol{X}\right)
    \mathbbm{1}
    \left(y = 
    h_{\theta^*_{i-1}}
    \left(\boldsymbol{X}\right) \right),\quad \forall \left(\boldsymbol{X},y\right)\in\mathcal{Z} &&
    \nonumber\\
    \widehat{P}_i
    &\longleftarrow
    \textbf{LC}\left(\widetilde{P}_i\bigg\vert\Delta^*_{i-1}, \mathcal{G}\right)\footnotemark
    \nonumber\\
    \varepsilon_i &\longleftarrow 
    2\lambda \Delta^*_{i-1}+\eta
    &&
    \nonumber\\
    \Delta^*_i,\theta^*_{i}
    &\longleftarrow
    \Big\{
    \min_{\theta\in\Theta},
    \argmin\limits_{\theta\in\Theta}
    \Big\}
    \sup\limits_{P\in\mathcal{B}_{\varepsilon_i}
    \left(\widehat{P}_i\vert\mathcal{F}\right)}
    \mathbb{E}_P
    \left[
    \ell\left(y,h_{\theta}
    \left(\boldsymbol{X}\right)\right)
    \right]
    &&
    \nonumber
    \end{flalign}
    \vspace*{-2mm}
}
\KwResult{$\theta^*\longleftarrow \theta^*_{T-1}$}
\label{alg:iterativeAlgForF}
\end{algorithm}
\footnotetext{$\textbf{LC}\left(P\big\vert\Delta, \mathcal{G}\right)$ is a function that changes the label of a set whose measure, according to $P$, is at most $\Delta$, so that the resulting measure falls into the $\mathcal{G}$ class}
\begin{proof}[Proof of Theorem \ref{thm:firstGenBound}]
We show for each $\theta \in \Theta$, $R^{\mathrm{CDRL}}\left(\theta;\widehat{P}_0\right)$ is converging to $R^{\mathrm{CDRL}}\left(\theta;P_0\right)$.
Assume $\mathcal{F}$ is the family of displacement functions; i.e. each $f \in \mathcal{F}$ is moving all the points within a fixed vector $\delta$. suppose we have $n$ empirical samples from $P_{0}$ named $z_{1}, z_{2}, \dots, z_{n}$. Also Suppose $S^+$ contains indices of positive class samples and $S^-$ similarly for negative class samples. Then
\begin{align*}
R^{\mathrm{CDRL}}\left(\theta;\widehat{P}_{0}\right) &= \sup_{f \in \mathcal{F}} 
\frac{1}{n} \sum\limits_{i=1}^{n} \mathbbm{1}\left(y_{i} \langle\theta, f_{y_{i}} (x_{i})\rangle) < 0 \right)- \gamma c(x_{i}, f_{y_{i}}(x_{i)})\\
&= \frac{1}{n} \sup_{\delta_{1}} \left( \sum\limits_{i \in S^{+}} \mathbbm{1} \left( \langle\theta, \delta_{1} + x_{i} \rangle) < 0 \right) - \gamma \|\delta_{1} \|_{2} \right) \\
&+ \frac{1}{n}\sup_{\delta_{2}} \left( \sum\limits_{i \in S^{-}} \mathbbm{1} \left( \langle\theta, \delta_{2} + x_{i} \rangle) > 0 \right) - \gamma \|\delta_{2} \|_{2} \right)
\end{align*}
Now just focus on positive class samples and we know the underlying distribution for each sample is according to Gaussian distribution with parameters $(\mu, \sigma^{2}I)$.
It is obvious that for each $\theta$, the supremum is maximized when $\delta$ is in the same direction as $\theta$. Then without loss of generality assume $\| \theta \|_{2}=1$ and define $m:= |S^{+}|$ and $p_{i}:= \langle x_{i}, \theta \rangle$. Also assume $m=\frac{n}{2}$ with high probability. Hence we have:
\begin{align}
\frac{1}{m} \sup_{\delta} \left( \sum\limits_{i \in S^{+}} \mathbbm{1} \left( \langle\theta, \delta + x_{i} \rangle) < 0 \right) - \gamma \|\delta \|_{2} \right) &= \sup_{t \geq 0} \left( \frac{\#(p_{i}<t)}{m} - \gamma t\right)
\label{eq:empCDFBound}
\end{align}

Now if we consider $R^{\mathrm{CDRL}}\left(\theta;P_{0}\right)$ for positive class samples the above expression would become:
\begin{align*}
\sup_{t\geq0} P_{0}[\langle \theta, X_{i} \rangle < t] - \gamma t
\end{align*}

Note $\langle \theta, X_{i} \rangle$ is one dimensional Gaussian distribution with parameters $(\langle \theta, \mu \rangle, \sigma^2)$ and we denote it's $\mathrm{CDF}$ with $F_{\theta}(X)$. Using derivatives we have at the maximization point:
$$
\gamma = \frac{1}{\sqrt{2\pi \sigma^2}}\exp(-\frac{(t-\langle \theta, \mu \rangle)^{2}}{2\sigma^{2}})
$$
Then the maximization point is:
$$
t= \langle \theta, \mu \rangle + \sigma\sqrt{2 \log \frac{1}{\gamma \sqrt{2\pi \sigma^2}}}
$$
Note $t= \langle \theta, \mu \rangle - \sigma\sqrt{2 \log \frac{1}{\gamma \sqrt{2\pi \sigma^2}}}$ doesn't make the expression maximum because at that point increasing $t$ will increase the expression. Also note if $\gamma > \frac{1}{\sqrt{2\pi \sigma^{2}}}$  the supremum doesn't exist which means the optimum case is not moving any point.

Now we can use the uniform convergence of $F_{\theta}(X)$ and $\widehat{F}_{\theta}(X)$. We have with probability at least $1- \delta$:
$$
\sup_{t \geq 0} \left| \frac{\#(p_{i}<t)}{m} - P_{0}[\langle \theta, X_{i} \rangle < t] \right| \leq 4 \sqrt{\frac{\log(m+1)}{m}} + \sqrt{\frac{2}{m}\log \frac{2}{\delta}} < 4 \sqrt{\frac{2}{m}\log \frac{4m}{\delta}} $$

Then for a fixed $\theta$, with probability at least $1 - \delta$ we conclude:
$$
\left| \sup_{t \geq 0} \left( \frac{\#(\langle x_{i}, \theta \rangle<t)}{m} - \gamma t \right) - \sup_{t \geq 0} \left( P_{0}[\langle \theta, X_{i} \rangle < t] - \gamma t \right) \right| < 8 \sqrt{\frac{2}{m}\log \frac{4m}{\delta}}$$

Therefore we can conclude for a fixed $\theta$, with probability at least $1- 2\delta$:
$$
\left| R^{\mathrm{CDRL}}\left(\theta;P_{0}\right) - R^{\mathrm{CDRL}}\left(\theta;\widehat{P}_{0}\right) \right| < 32 \sqrt{\frac{1}{n}\log \frac{2n}{\delta}}
$$
Now we can extend this bound for all $\theta \in \Theta$ via quantization on $\theta$. Assume that all of data points are inside a $R$-Ball with high probability. if $\| \theta_{1} - \theta_{2} \| <\epsilon$, $\| \langle x_{i}, \theta_{1} \rangle - \langle x_{i}, \theta_{2} \rangle \| < \epsilon R$ and the close-form answer of $\sup_{t \geq 0} \left( P_{0}[\langle \theta, X_{i} \rangle < t] - \gamma t \right)$ differs at most $\mu \epsilon \leq \epsilon R$. Hence, with $\epsilon$-covering of $\Theta$ we conclude with probability $1 - 2\delta$ for each $\theta \in \Theta$:
\begin{align*}
\forall \theta \in \Theta: \left| R^{\mathrm{CDRL}}\left(\theta;P_{0}\right) - R^{\mathrm{CDRL}}\left(\theta;\widehat{P}_{0}\right) \right| &< 4\epsilon R+ 32 \sqrt{\frac{1}{n}\left(d \log \left(1 + \frac{2}{\epsilon}\right) + \log \frac{2n}{\delta}\right)} \\
&< 64 \sqrt{\frac{d}{n}\log \left(\frac{R}{\delta} \sqrt{\frac{n^3}{d}}\right)}
\end{align*}
\end{proof}
\section{Proofs for the Lemmas}\label{sec:AppProofsLemmas}
\begin{proof}[proof of Lemma \ref{lemma:lowerBoundOnWDofGaussianGenerativeModels1}]
    To establish this theorem, we must show that if we have two Gaussian generative models, $P_{\boldsymbol{\mu}_1}$ and $P_{\boldsymbol{\mu}_2}$, with densities $f_1$ and $f_2$, where $\|\boldsymbol{\mu}_1 - \boldsymbol{\mu}_2\|_2 \geq \zeta$, and $\|\boldsymbol{\mu}_1 + \boldsymbol{\mu}_2\|_2 \geq \zeta$, then the Wasserstein distance between these two distributions has a non-zero lower bound. Building on the result of Lemma \ref{lemma:lowerBoundOnWDofGaussianGenerativeModels2}, if we set $p = 2$ and $q=1$, for any $\lambda \geq 0$, we have:
    \begin{align}
        \mathcal{W}_c\left(f_1,f_2\right) 
        \geq & \frac{1}{2} \min_{i,j \in \{-1,+1\}}{\mathcal{W}_{2,\lambda}^1\left(f_{1\boldsymbol{X}\vert y = i}, f_{2\boldsymbol{X}\vert y = j}\right)}
        \nonumber \\
        =  & \frac{1}{2}\min\left\{\|\boldsymbol{\mu}_1 - \boldsymbol{\mu}_2\|_2, \|\boldsymbol{\mu}_1 + \boldsymbol{\mu}_2\|_2\right\}
        \nonumber \\
        \geq &\frac{\zeta}{2},
    \end{align}
    This concludes the proof. 
\end{proof}
\begin{proof}[Proof of Lemma \ref{lemma:lowerBoundOnWDofGaussianGenerativeModels2}]
The Wasserstein distance between two distributions $P$ and $Q$ which are both supported over $\mathcal{Z}$ is defined as
\begin{equation}
\mathcal{W}_{\tilde{c}}\left(P,Q\right)\triangleq
\inf_{\rho\in\Omega\left(P,Q\right)}
\mathbb{E}_{\left(\boldsymbol{X},y, \boldsymbol{X}^\prime,y^\prime\right) \sim \rho}\left(\tilde{c}(\boldsymbol{X},y; \boldsymbol{X}^\prime,y^\prime)\right),
\end{equation}
where $\Omega(P,Q)$ denotes the set of all couplings (i.e., joint distributions) on $\mathcal{Z}^2$ that have $P$ and $Q$ as their respective marginals. Based on the above definition, the distance between $P_{\boldsymbol{\mu}_1}$ and $P_{\boldsymbol{\mu}_2}$ can be attained via the following formula:
\begin{align}
\label{eq:LemmaA2:rhoWassEq}
\mathcal{W}_{\tilde{c}}\left(P_{\boldsymbol{\mu}_1},P_{\boldsymbol{\mu}_2}\right)
& =
\inf_{\rho\in\Omega\left(f_1,f_2\right)}~
\sum_{y, y^\prime}\int{\tilde{c}(\boldsymbol{X},y; \boldsymbol{X}^\prime,y^\prime) \rho(\boldsymbol{X},y,\boldsymbol{X}^\prime,y^\prime) \mathrm{d}\boldsymbol{X} \mathrm{d}\boldsymbol{X}^\prime}
\\
= 
\inf_{\rho\in\Omega\left(f_1, f_2\right)}~
&
\frac{1}{2}\sum_{ y^\prime}
\int{\tilde{c}(\boldsymbol{X},1; \boldsymbol{X}^\prime,y^\prime) \rho(\boldsymbol{X}^\prime,y^\prime \vert \boldsymbol{X},y =1) f_1(\boldsymbol{X}\vert y =1) \mathrm{d}\boldsymbol{X} \mathrm{d}\boldsymbol{X}^\prime} 
\nonumber \\
+& \frac{1}{2}\sum_{ y^\prime}\int{
\tilde{c}(\boldsymbol{X},-1; \boldsymbol{X}^\prime,y^\prime) \rho(\boldsymbol{X}^\prime,y^\prime \vert \boldsymbol{X},y = -1) f_1(\boldsymbol{X}\vert y = -1) \mathrm{d}\boldsymbol{X} \mathrm{d}\boldsymbol{X}^\prime},
\nonumber
\end{align}
where due to the definition of $P_{\boldsymbol{\mu}_1}$, we have that $f_1(\boldsymbol{X}\vert y)$ is the density function of a Gaussian distribution with mean $y\boldsymbol{\mu}_1$ and covariance matrix $\sigma^2I$. Regarding the density function $\rho$ in equation \eqref{eq:LemmaA2:rhoWassEq}, we have the following set of constraints:
\begin{align}
i)&\quad
f_2(\boldsymbol{X}^\prime, y^\prime) =
\frac{1}{2}\int{\rho(\boldsymbol{X}^\prime,y^\prime \vert \boldsymbol{X},y =1) f_1(\boldsymbol{X}\vert y =1) \mathrm{d}\boldsymbol{X}}
\nonumber \\
&\hspace{2cm} + \frac{1}{2}\int{\rho(\boldsymbol{X}^\prime,y^\prime \vert \boldsymbol{X},y = -1) f_1(\boldsymbol{X}\vert y = -1) \mathrm{d}\boldsymbol{X}}, \quad \forall \boldsymbol{X}^\prime, y^\prime.
\label{eq:wdConstraints1}
\\ 
ii)
&\quad
\sum_{y^\prime \in\{\pm1\}}{\int{\rho(\boldsymbol{X}^\prime,y^\prime \vert \boldsymbol{X},y =1) \mathrm{d}\boldsymbol{X}^\prime}}=1, \quad \forall \boldsymbol{X}.
\label{eq:wdConstraints2}
\\ 
iii)
&\quad
\sum_{y^\prime \in\{\pm1\}}{\int{\rho(\boldsymbol{X}^\prime,y^\prime \vert \boldsymbol{X},y = -1) \mathrm{d}\boldsymbol{X}^\prime}}=1, \quad \forall \boldsymbol{X}.
\label{eq:wdConstraints3}
\end{align}
Let us define a non-negative function $\tilde{\rho}$ as follows:
\begin{equation}
\tilde{\rho}(\boldsymbol{X},\boldsymbol{X}^\prime,y^\prime) \triangleq \frac{1}{2} \frac{\rho(\boldsymbol{X}^\prime,y^\prime \vert \boldsymbol{X},y = -1)f_1(\boldsymbol{X}\vert y = -1)}{f_1(\boldsymbol{X}\vert y = 1)} + \frac{1}{2} \rho(\boldsymbol{X}^\prime,y^\prime \vert \boldsymbol{X},y = 1).
\end{equation}
It should be noted that $\tilde{\rho}$ may not even be a {\it {probability~density}} since it may not integrate into one over all possible values of $\boldsymbol{X},\boldsymbol{X}'$ and $y'$.
In any case, for this function (i.e., $\tilde{\rho}$), we have:
\begin{align}
(*)&\quad
\int{\tilde{\rho}(\boldsymbol{X},\boldsymbol{X}^\prime,y^\prime) f_1(\boldsymbol{X}\vert y =1) \mathrm{d}\boldsymbol{X}}
=f_2(\boldsymbol{X}^\prime, y^\prime), 
\quad \forall \boldsymbol{X}^\prime, y^\prime.
\label{eq:wdRelaxedConstraints1}
\\ 
(**)&\quad
\sum_{y^\prime \in\{\pm1\}}{\int{\tilde{\rho}(\boldsymbol{X},\boldsymbol{X}^\prime,y^\prime) \mathrm{d}\boldsymbol{X}^\prime}}
=\frac{1}{2}\left(1+\frac{f_1(\boldsymbol{X}\vert y =1)}{f_1(\boldsymbol{X}\vert y = -1)}\right),
\quad \forall \boldsymbol{X}.
\label{eq:wdRelaxedConstraints2}
\end{align}
Therefore, if there exists a joint density $\rho$ that satisfies the set of constraints i), ii) and iii) (respectively defined in \eqref{eq:wdConstraints1}, \eqref{eq:wdConstraints2}, and \eqref{eq:wdConstraints3}), then there also exists a function $\tilde{\rho}$ that satisfies the set of constraints in (*) and (**) as defined in \eqref{eq:wdRelaxedConstraints1} and \eqref{eq:wdRelaxedConstraints2}, respectively. Additionally, since we know that $f_1$ is a non-negative function, we can further relax the conditions as follows:
\begin{align}
&
\int{\tilde{\rho}(\boldsymbol{X},\boldsymbol{X}^\prime,y^\prime) f_1(\boldsymbol{X}\vert y =1) \mathrm{d}\boldsymbol{X}}
\geq
f_2(\boldsymbol{X}^\prime, y^\prime), 
\quad \forall \boldsymbol{X}^\prime, y^\prime.
\label{eq:wdRelaxedConstraints3}
\\ 
&
\sum_{y^\prime \in\{\pm1\}}{\int{\tilde{\rho}(\boldsymbol{X},\boldsymbol{X}^\prime,y^\prime) \mathrm{d}\boldsymbol{X}^\prime}} \geq \frac{1}{2}, 
\quad \forall \boldsymbol{X}.
\label{eq:wdRelaxedConstraints4}
\end{align}
Therefore, the constraints in \eqref{eq:wdRelaxedConstraints3} and \eqref{eq:wdRelaxedConstraints4} are relaxed versions of the constraints in \eqref{eq:wdConstraints1}, \eqref{eq:wdConstraints2}, and \eqref{eq:wdConstraints3}. Let us denote the set of all non-negative functions $\tilde{\rho}$ that satisfy the (newer versions of) conditions (*) and (**) with $\Pi$. Then, we have:
\begin{align}
\mathcal{W}_{\tilde{c}}\left(P_{\boldsymbol{\mu}_1},P_{\boldsymbol{\mu}_2}\right)
\nonumber\\
= 
\inf_{\rho\in\Omega\left(f_1, f_2\right)}~
&
\frac{1}{2}\sum_{ y^\prime}\int{
\tilde{c}(\boldsymbol{X},1; \boldsymbol{X}^\prime,y^\prime) \rho(\boldsymbol{X}^\prime,y^\prime \vert \boldsymbol{X},y =1) f_1(\boldsymbol{X}\vert y =1) \mathrm{d}\boldsymbol{X} \mathrm{d}\boldsymbol{X}^\prime}
\nonumber \\
&
+ \frac{1}{2}\sum_{ y^\prime}\int{
\tilde{c}(\boldsymbol{X},-1; \boldsymbol{X}^\prime,y^\prime) \rho(\boldsymbol{X}^\prime,y^\prime \vert \boldsymbol{X},y = -1) f_1(\boldsymbol{X}\vert y = -1) \mathrm{d}\boldsymbol{X} \mathrm{d}\boldsymbol{X}^\prime}
\nonumber\\
= \inf_{\rho\in\Omega\left(f_1, f_2\right)}~ 
&\frac{1}{2}\sum_{ y^\prime}\int{
c(\boldsymbol{X}, \boldsymbol{X}^\prime) \rho(\boldsymbol{X}^\prime,y^\prime \vert \boldsymbol{X},y =1) f_1(\boldsymbol{X}\vert y =1) \mathrm{d}\boldsymbol{X} \mathrm{d}\boldsymbol{X}^\prime} 
\nonumber \\
&  + \frac{1}{2}\sum_{ y^\prime}\int{c(\boldsymbol{X}, \boldsymbol{X}^\prime) \rho(\boldsymbol{X}^\prime,y^\prime \vert \boldsymbol{X},y = -1) f_1(\boldsymbol{X}\vert y = -1) \mathrm{d}\boldsymbol{X} \mathrm{d}\boldsymbol{X}^\prime}
\nonumber \\
&  + \frac{\lambda}{2} \left(\rho(y^\prime = -1\big\vert y =1) + \rho(y^\prime = 1\big\vert y = -1)\right)
\nonumber \\
\geq \inf_{\rho\in\Omega\left(f_1, f_2\right)}~
&\frac{1}{2}\sum_{ y^\prime}\int{c(\boldsymbol{X}, \boldsymbol{X}^\prime) \rho(\boldsymbol{X}^\prime,y^\prime \vert \boldsymbol{X},y =1) f_1(\boldsymbol{X}\vert y =1) \mathrm{d}\boldsymbol{X} \mathrm{d}\boldsymbol{X}^\prime} 
\nonumber \\
&  + \frac{1}{2}\sum_{ y^\prime}\int{c(\boldsymbol{X}, \boldsymbol{X}^\prime) \rho(\boldsymbol{X}^\prime,y^\prime \vert \boldsymbol{X},y = -1) f_1(\boldsymbol{X}\vert y = -1) \mathrm{d}\boldsymbol{X} \mathrm{d}\boldsymbol{X}^\prime},
\nonumber
\end{align}
which due to the definition of $\Pi$ and its discussed properties imply the following bound on the Wasserstein distance between $f_1$ and $f_2$:
\begin{align}
\mathcal{W}_{\tilde{c}}\left(P_{\boldsymbol{\mu}_1},P_{\boldsymbol{\mu}_2}\right)
&\geq
\widehat{\mathcal{W}}_c\left(f_1, f_2\right)
\nonumber\\
&\triangleq
\inf_{\tilde{\rho} \in \Pi}~ \sum_{ y^\prime}\int{c(\boldsymbol{X}, \boldsymbol{X}^\prime) \tilde{\rho}(\boldsymbol{X},\boldsymbol{X}^\prime,y^\prime) f_1(\boldsymbol{X}\vert y =1) \mathrm{d}\boldsymbol{X} \mathrm{d}\boldsymbol{X}^\prime}
\nonumber\\
&=
\inf_{\tilde{\rho} \in \Pi}~ \int{c(\boldsymbol{X}, \boldsymbol{X}^\prime)
\left[
\sum_{ y^\prime}
\tilde{\rho}(\boldsymbol{X},\boldsymbol{X}^\prime,y^\prime) 
\right]
f_1(\boldsymbol{X}\vert y =1) \mathrm{d}\boldsymbol{X} \mathrm{d}\boldsymbol{X}^\prime}
\nonumber\\
&=
\inf_{\tilde{\rho} \in \Pi}~ \int{c(\boldsymbol{X}, \boldsymbol{X}^\prime)
\tilde{\rho}(\boldsymbol{X},\boldsymbol{X}^\prime) 
f_1(\boldsymbol{X}\vert y =1) \mathrm{d}\boldsymbol{X} \mathrm{d}\boldsymbol{X}^\prime},
\end{align}
where $\tilde{\rho}(\boldsymbol{X},\boldsymbol{X}^\prime)$ is defined as
\begin{align}
\tilde{\rho}(\boldsymbol{X},\boldsymbol{X}^\prime)
\triangleq
\sum_{ y^\prime}
\tilde{\rho}(\boldsymbol{X},\boldsymbol{X}^\prime,y^\prime).
\end{align}
The rest of the proof proceeds by trying to find a proper structure for $\Pi$. In order to do so, let us define $\Pi^{\oplus}$ as the following set:
\begin{align}
\Pi^{\oplus}\triangleq
\left\{
\sum_{i\in\{1,2\}}
\sum_{y'\in\left\{\pm 1\right\}}
\tilde{\rho}_i\left(\cdot,\cdot,y'\right)
\bigg\vert~
\tilde{\rho}_1,\tilde{\rho}_2\in\Pi
\right\}
\subseteq
\mathbb{R}_{+}^{\mathcal{X}\times\mathcal{X}}.
\end{align}
Then, it can be readily seen that the following bound can be established:
\begin{align}
\mathcal{W}_{\tilde{c}}\left(P_{\boldsymbol{\mu}_1},P_{\boldsymbol{\mu}_2}\right)
&\geq
\widehat{\mathcal{W}}_c\left(f_1, f_2\right)
\nonumber\\
&\geq
\frac{1}{2}
\inf_{\zeta \in \Pi^{\oplus}}~ \int{
c(\boldsymbol{X}, \boldsymbol{X}^\prime)
\zeta(\boldsymbol{X},\boldsymbol{X}^\prime) 
f_1(\boldsymbol{X}\vert y =1) \mathrm{d}\boldsymbol{X} \mathrm{d}\boldsymbol{X}^\prime}.
\end{align}
Also, we should keep in mind that each $\zeta$ in $\Pi^{\oplus}$ has the following properties:
\begin{align}
\label{eq:LemmaA2:conditionsForZeta}
&\forall \zeta\in\Pi^{\oplus}\rightarrow\exists \tilde{\rho}_1,\tilde{\rho}_2\in\Pi:~
\zeta\left(\boldsymbol{X},\boldsymbol{X}'\right)
=
\sum_{y'\in\{\pm 1\}}
\sum_{i=1,2}
\tilde{\rho}_i\left(\boldsymbol{X},\boldsymbol{X}',y'\right),~\forall\boldsymbol{X},\boldsymbol{X}',
\\[2mm]
i)~&
\int
\zeta\left(\boldsymbol{X},\boldsymbol{X}'\right)
f_1(\boldsymbol{X}\vert y =1)
\mathrm{d}\boldsymbol{X}
\nonumber\\
&=
\sum_{y'\in\{\pm 1\}}
\int
\left[
\tilde{\rho}_1\left(\boldsymbol{X},\boldsymbol{X}',y'\right)
+
\tilde{\rho}_2\left(\boldsymbol{X},\boldsymbol{X}',y'\right)
\right]
f_1(\boldsymbol{X}\vert y =1)
\mathrm{d}\boldsymbol{X}
=2f_2\left(\boldsymbol{X}'\right),\quad\forall\boldsymbol{X}',
\nonumber\\[2mm]
ii)&
\int
\zeta\left(\boldsymbol{X},\boldsymbol{X}'\right)
\mathrm{d}\boldsymbol{X}'
\nonumber\\
&=
\sum_{y'\in\{\pm 1\}}
\int
\left[
\tilde{\rho}_1\left(\boldsymbol{X},\boldsymbol{X}',y'\right)
+
\tilde{\rho}_2\left(\boldsymbol{X},\boldsymbol{X}',y'\right)
\right]
\mathrm{d}\boldsymbol{X}'
\geq 1,\quad\forall\boldsymbol{X},
\nonumber
\end{align}
which hold due to \eqref{eq:wdRelaxedConstraints3} and \eqref{eq:wdRelaxedConstraints4}. Now, we define two more sets, denoted by $\Pi_{-},\Pi_{+}\subseteq\mathbb{R}^{\mathcal{X}\times\mathcal{X}}$ according to the following definitions. For $s\in\{\pm\}$, let us define:
\begin{align}
\forall\xi\in\Pi_s:~\exists\tilde{\rho}\in\Pi
&\rightarrow~
\xi\left(\boldsymbol{X},\boldsymbol{X}'\right)
=\sum_{y'\in\{\pm 1\}}
\tilde{\rho}\left(\boldsymbol{X},\boldsymbol{X}',y'\right),
\nonumber\\
&\phantom{\rightarrow~}
\sum_{y''\in\{\pm 1\}}
\int
\tilde{\rho}(\boldsymbol{X},\boldsymbol{X}^\prime,y'') f_1(\boldsymbol{X}\vert y =1) \mathrm{d}\boldsymbol{X} \geq f_2(\boldsymbol{X}^\prime \vert y^\prime = s), \quad \forall \boldsymbol{X}^\prime,
\nonumber \\
&\phantom{\rightarrow~}
\sum_{y'\in\{\pm 1\}}
\int{\tilde{\rho} (\boldsymbol{X},\boldsymbol{X}^\prime, y')} 
\mathrm{d}\boldsymbol{X}'
\geq 1 , \quad \forall \boldsymbol{X}.
\end{align}
What remains to do is to show that for any $\zeta$ in $\Pi^{\oplus}$, there exists at least a pair $\left(\xi_{-},\xi_{+}\right)\in\Pi_{-}\times\Pi_{+}$ such that $\zeta\ge\left(\xi_{-}+\xi_{+}\right)/2$ everywhere in $\mathcal{X}^2$. This can be easily verified by seeing that since we have:
\begin{align}
2f_2\left(\boldsymbol{X}'\right)=
f_2\left(\boldsymbol{X}'\vert y'=1\right)
+
f_2\left(\boldsymbol{X}'\vert y'=-1\right),
\end{align}
the constraints for $\zeta\in\Pi^{\oplus}$ which are derived in \eqref{eq:LemmaA2:conditionsForZeta} always hold for average between any two members of the due $\left(\Pi_{-},\Pi_{+}\right)$. Therefore, we can further bound the Wasserstein distance between $f_1$ and $f_2$ via the following chain of inequalities:
\begin{align}
\mathcal{W}_{\tilde{c}}\left(P_{\boldsymbol{\mu}_1},P_{\boldsymbol{\mu}_2}\right)
&\geq
\widehat{\mathcal{W}}_c\left(f_1, f_2\right)
\\
&\triangleq
\inf_{\tilde{\rho} \in \Pi}~ \int{c(\boldsymbol{X}, \boldsymbol{X}^\prime)
\tilde{\rho}(\boldsymbol{X},\boldsymbol{X}^\prime) 
f_1(\boldsymbol{X}\vert y =1) \mathrm{d}\boldsymbol{X} \mathrm{d}\boldsymbol{X}^\prime}
\nonumber\\
&\geq
\frac{1}{2}
\inf_{\zeta \in \Pi^{\oplus}}~ \int{
c(\boldsymbol{X}, \boldsymbol{X}^\prime)
\zeta(\boldsymbol{X},\boldsymbol{X}^\prime) 
f_1(\boldsymbol{X}\vert y =1) \mathrm{d}\boldsymbol{X} \mathrm{d}\boldsymbol{X}^\prime}
\nonumber\\
&\geq
\frac{1}{2}
\inf_{\xi_{\pm}\in \left(\Pi_{\pm}\right)}~ 
\int
c(\boldsymbol{X}, \boldsymbol{X}^\prime)
\left[
\frac{
\xi_{+}(\boldsymbol{X},\boldsymbol{X}^\prime)+ 
\xi_{-}(\boldsymbol{X},\boldsymbol{X}^\prime)
}{2}
\right]
f_1(\boldsymbol{X}\vert y =1) \mathrm{d}\boldsymbol{X} \mathrm{d}\boldsymbol{X}^\prime
\nonumber\\
&\geq
\frac{1}{2}
\min_{s\in\{\pm 1\}}
\inf_{\xi\in\Pi_{s}}~ 
\int
c(\boldsymbol{X}, \boldsymbol{X}^\prime)
\xi_{s}(\boldsymbol{X},\boldsymbol{X}^\prime)
f_1(\boldsymbol{X}\vert y =1) \mathrm{d}\boldsymbol{X} \mathrm{d}\boldsymbol{X}^\prime.
\nonumber
\end{align}
For any $s\in\{\pm 1\}$, $\xi_s\left(\boldsymbol{X}',\boldsymbol{X}\right)$ acts as a surrogate for $\rho\left(\boldsymbol{X}'\vert\boldsymbol{X},y=1,y'=s\right)$ where $\rho\in\Omega\left(f_1,f_2\right)$. However, the {\it {marginal}} and {\it {normalization}} equality constraints, i.e.,
\begin{align}
&\int \rho\left(\boldsymbol{X}'\vert\boldsymbol{X},y=1,y'=s\right) 
f_1(\boldsymbol{X}\vert y =1) \mathrm{d}\boldsymbol{X} 
=
f_2\left(\boldsymbol{X}'\vert y'=s\right),
\nonumber\\
&
\int \rho\left(\boldsymbol{X}'\vert\boldsymbol{X},y=1,y'=s\right) 
=1,
\end{align}
have been relaxed and, in fact, replaced by inequalities. However, since the optimization problem $\inf_{\xi\in\Pi_{s}}$ is a linear program with both linear objective and constraints, the optimal point (if exists) always occurs at the boundaries of the feasible set where constraints are active. The objective is non-negative and thus bounded below, thus the optimal point exists. On the other hand, neither of the constraints are degenerate and hence they all become active. Therefore, we have
\begin{align}
\inf_{\xi\in\Pi_{s}}~ 
\int
c(\boldsymbol{X}, \boldsymbol{X}^\prime)
\xi_{s}(\boldsymbol{X},\boldsymbol{X}^\prime)
f_1(\boldsymbol{X}\vert y =1) \mathrm{d}\boldsymbol{X} \mathrm{d}\boldsymbol{X}^\prime
=
\mathcal{W}_c\left(
f_1\left(\cdot\vert y=1\right),
f_2\left(\cdot\vert y'=s\right)
\right),
\end{align}
and as a result, we have
\begin{align}
\mathcal{W}_{\tilde{c}}&\left(f_1, f_2\right)
\nonumber\\
&\geq ~ \frac{1}{2}\min\left\{
\mathcal{W}_c\left(f_1(\cdot\vert y=1), f_2\left(\cdot\vert y^\prime = 1\right)\right)
~,~ 
\mathcal{W}_c\left(f_1(\cdot\vert y=1), f_2\left(\cdot\vert y^\prime = -1\right)\right)
\right\}.
\label{eq:lowerBoundForWD4}
\end{align}
Also, it should be noted that the whole proof can be re-written from the start with $f_1(\cdot\vert y=-1)$ instead of conditioning on $y=1$. Therefore, the final bound can be written as
\begin{equation}
\mathcal{W}_{\tilde{c}}\left(f_1, f_2\right)
\geq 
\frac{1}{2} 
\max_{i\in \{\pm1\}}~
\min_{j\in\{\pm1\}}~ 
\mathcal{W}_c\left(
f_1\left(\cdot\vert y=i\right), 
f_2\left(\cdot\vert y^\prime = j\right)
\right),
\end{equation}
which completes the proof.
\end{proof}


\begin{proof}[Proof of Lemma \ref{lemma:movingAllPoints}]
        We write the Mean value theorem for the function $f$ around $\boldsymbol{X}_0$ we have:
        \begin{align}
            f\left(\boldsymbol{X}\right) 
            =&~ f\left(\boldsymbol{X}_0\right) + \boldsymbol{J}_f\left(\boldsymbol{X}^\prime\right)\left(\boldsymbol{X} - \boldsymbol{X}_0\right),
        \end{align}
        where $\boldsymbol{J}_f\left(\boldsymbol{X}^\prime\right)$ is the Jacobian matrix of $f$ in $\boldsymbol{X}^\prime$, and $\boldsymbol{X}^\prime$ is a point between $\boldsymbol{X}_0$ and $\boldsymbol{X}$. From the properties of $\boldsymbol{J}_f$ in the lemma, we can continue the above inequalities as follows:
        \begin{align}
            f\left(\boldsymbol{X}\right) - \boldsymbol{X}
            =&~ f\left(\boldsymbol{X}_0\right) - \boldsymbol{X_0} + \left(\boldsymbol{J}_f\left(\boldsymbol{X}^\prime\right) - \boldsymbol{I}_d\right)\left(\boldsymbol{X} - \boldsymbol{X}_0\right),
            \label{eq:upperAndLowerBoundFordisplacementFunction}
        \end{align}
        where $\boldsymbol{I}_d$ is the $d \times d$ identity matrix. Now we have:
        \begin{align}
            \|\mathbb{E}\left[f\left(\boldsymbol{X}\right) - \boldsymbol{X}\right]\|_2
            \geq &~ \|f\left(\boldsymbol{X}_0\right) - \boldsymbol{X_0}\|_2 - \|\mathbb{E}\left[\left(\boldsymbol{J}_f\left(\boldsymbol{X}^\prime\right) - \boldsymbol{I}_d\right)\left(\boldsymbol{X} - \boldsymbol{X}_0\right)\right]\|_2
            \nonumber \\ 
            \geq &~ \Delta - 2\epsilon\mathbb{E}\left[\|\boldsymbol{X} - \boldsymbol{X}_0\|_2\right].
        \end{align}
        We also can continue equation \ref{eq:upperAndLowerBoundFordisplacementFunction} as follows:
        \begin{align}
            \|f\left(\boldsymbol{X}\right) - \boldsymbol{X}\|_2
            \leq&~ \|f\left(\boldsymbol{X}_0\right) - \boldsymbol{X_0}\|_2 + 2\epsilon\|\boldsymbol{X} - \boldsymbol{X}_0\|_2
            \nonumber \\
            \leq&~ \Delta + 2R\epsilon,
            \nonumber \\
            \|f\left(\boldsymbol{X}\right) - \boldsymbol{X}\|_2
            \geq&~ \|f\left(\boldsymbol{X}_0\right) - \boldsymbol{X_0}\|_2 - 2\epsilon\|\boldsymbol{X} - \boldsymbol{X}_0\|_2
            \nonumber \\
            \geq&~ \Delta - 2R\epsilon
        \end{align}
        Which completes the proof.
\end{proof}

\end{document}